\def\Figref#1{Figure~\ref{#1}}
\def\Secref#1{Section~\ref{#1}}
\def\eqref#1{equation~\ref{#1}}
\def\Eqref#1{Equation~\ref{#1}}
\def\Algref#1{Algorithm~\ref{#1}}
\def\1{\bm{1}}
\def\vtheta{{\bm{\theta}}}
\def\va{{\bm{a}}}
\def\vc{{\bm{c}}}
\def\vm{{\bm{m}}}
\def\vx{{\bm{x}}}
\def\vy{{\bm{y}}}
\def\mC{{\bm{C}}}
\def\mI{{\bm{I}}}
\def\mLambda{{\bm{\Lambda}}}
\DeclareMathAlphabet{\mathsfit}{\encodingdefault}{\sfdefault}{m}{sl}
\SetMathAlphabet{\mathsfit}{bold}{\encodingdefault}{\sfdefault}{bx}{n}
\def\gO{{\mathcal{O}}}
\def\sN{{\mathbb{N}}}
\def\sR{{\mathbb{R}}}
\def\sX{{\mathbb{X}}}
\def\sY{{\mathbb{Y}}}
\newcommand{\R}{\mathbb{R}}
\newcommand{\Cov}{\mathrm{Cov}}
\DeclareMathOperator*{\argmax}{arg\,max}
\DeclareMathOperator*{\argmin}{arg\,min}
\title{Bayesian Optimization with Informative Covariance}
\author{\name Afonso Eduardo \email afonso.eduardo@ed.ac.uk \\
        \addr School of Informatics\\University of Edinburgh
        \AND
        \name Michael U. Gutmann \email michael.gutmann@ed.ac.uk \\
        \addr School of Informatics\\University of Edinburgh
      }
\begin{document}
\maketitle
\begin{abstract}
Bayesian optimization is a methodology for global optimization of unknown and expensive objectives.\ It combines a surrogate Bayesian regression model with an acquisition function to decide where to evaluate the objective.\ Typical regression models are given by Gaussian processes with stationary covariance functions.\ However, these functions are unable to express prior input-dependent information, including possible locations of the optimum.\ The ubiquity of stationary models has led to the common practice of exploiting prior information via informative mean functions.\ In this paper, we highlight that these models can perform poorly, especially in high dimensions.\ We propose novel informative covariance functions for optimization, leveraging nonstationarity to encode preferences for certain regions of the search space and adaptively promote local exploration during optimization.\ We demonstrate that the proposed functions can increase the sample efficiency of Bayesian optimization in high dimensions, even under weak prior information.
\end{abstract}

\section{Introduction}
\label{sec:intro}
Bayesian optimization (BO) is a methodology for global function optimization that has become popular since the work of \citet{ACQEIJones98}.\ Other influential works date back to \citet{BOKushner62} and \citet{BOMockus75}. BO has been applied to a broad range of problems \citep[see e.g.][]{BOTUTShahriari16}, including chemical design \citep{CHEMGomez18}, neural architecture search \citep{BONASWhite2019} and simulation-based inference \citep{BOLFIGutmann16, LFICranmer20}.

As a Bayesian approach, the idea is to place a prior over the objective function, typically a Gaussian process (GP) prior characterized by a mean and covariance function.\ The prior, combined with an observation model and past function evaluations (observations), allows the computation of a posterior predictive distribution. The next acquisition is determined by the posterior predictive and a utility function, treating the optimization as a decision problem.

Standard BO is generally effective in relatively low-dimensional problems.\ However, as dimensionality increases, statistical and computational problems become more noticeable.\ In particular, the curse of dimensionality manifests itself while learning the probabilistic surrogate of the objective function and during the acquisition process.\ Indeed, as training data become sparser, the estimation of an accurate surrogate becomes more difficult and the reliance on extrapolation increases.\ In this context, previous work has relied on certain structural assumptions \citep{BOHDBinois22}, namely low effective dimensionality  \citep{BOHDLEGarnett13, BOHDLELi16, BOHDREWang16, BOHDRELetham20, BOHDNNMoriconi20, BOHDPCARaponi20, BOHDLEEriksson21, BOHDVAEGrosnit21} and additivity \citep{BOHDADDKandasamy15, BOHDADDGardner17, BOHDADDMutny18, BOHDADDRolland18, BOHDADDWang18}.\ Local methods based on space-partitioning schemes and trust regions have also been proposed \citep{BOSPLITAssael14, BLOSSOMMcleod18, TURBOEriksson19,  MCTSWang20, TREGODiouane21}.\ Notably, these methods employ stationary covariance functions.

By construction, stationary covariance functions are translation invariant and thus unable to express spatially-varying information.\ As shown in \Figref{fig:intro}, stationary models can lead to poor optimization of relatively simple one-dimensional objectives, suggesting issues in higher-dimensional domains.\ A common practice is to include input-dependent information via informative mean functions \citep[see e.g.][]{BOHDSnoek15, VBMCMeanAcerbi19,SLBatchJarvenpaa21}.\ However, as demonstrated in \Figref{fig:intro}, this practice does not necessarily alleviate the limitations of stationary covariance functions.\ In contrast, nonstationarity allows for the incorporation of input-dependent information directly in covariance functions.\ The crux lies in the design of flexible and scalable informative GP priors.

In the remainder of this paper, after we introduce the background in \Secref{sec:background}, we lay the foundation for our methodology by discussing the advantages of second-order nonstationary for optimization in \Secref{sec:benefits_ns}. focusing on spatially-varying prior variance, lengthscales and high-dimensional domains.\ This includes a discussion of regret bounds for covariance functions with spatially-varying parameters, which we have not previously found in the literature.\ We then make the following contributions:
\vspace{-2ex}
\begin{itemize}
\item In \Secref{sec:icf}, we design informative covariance functions that leverage nonstationarity to incorporate information about promising points.\ Our GP priors encode preferences for certain regions and allow for multiscale exploration during optimization.\ To our knowledge, we are the first to consider informative GP priors for optimization with spatially-varying prior variances and lengthscales, both induced by a shaping function that captures information about promising points.
\item In \Secref{sec:experiments}, we empirically demonstrate that the proposed methodology can increase the sample efficiency of BO in high-dimensional domains, even under weak prior information.\ Additionally, we show that it can complement existing methodologies, including GP models with informative mean functions, trust-region optimization and belief-augmented acquisition functions.
\end{itemize}
\vspace{-2ex}
\Secref{sec:related_work} discusses related work and \Secref{sec:conclusion} concludes the paper.
\vspace{-2ex}

\begin{figure}[t]
    \centering
    \includegraphics[width=0.85\textwidth]{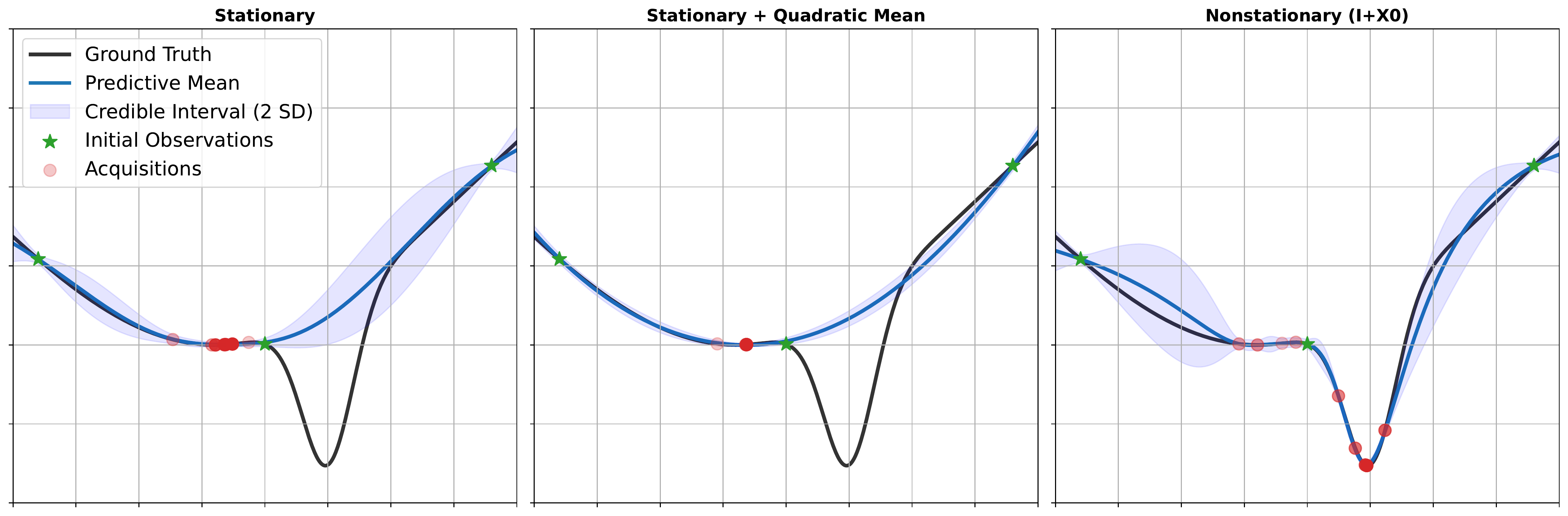}
    \vspace{-2.5ex}
    \caption{Overconfident models are unable to capture the ground truth, causing BO to yield suboptimal results.\ The stationary model with an informative quadratic mean fits the trend too well, resulting in small residuals.\ These small residuals lead to a smaller prior variance and a longer lengthscale that, in turn, decrease epistemic uncertainty. Stationarity posits that these hyperparameters are constant, but this assumption does not hold for functions with irregular bumps.\ The proposed informative priors, being nonstationary, allow for the specification of spatially-varying properties.}
    \label{fig:intro}
    \vspace{-2ex}
\end{figure}

\section{Background}
\label{sec:background}
\vspace{-1ex}
\subsection{Bayesian Optimization}
\label{sec:bo}
In global optimization, the goal is to find the global minimizer of an objective function $f\colon\sX \rightarrow \sY \subseteq \R$,
\begin{align}
\vx^\star = \argmin_{\vx \in \sX}\,\, f(\vx),
\end{align}
where the domain $\sX \subseteq \sR^D$, for objectives of $D$ variables, and $f$ is a function whose expression and gradient are typically unavailable.\ Evaluation is possible at any point $\vx_i \in \sX$, yielding the observation $y_i=f(\vx_i)$, but its cost is high enough to justify the use of sample-efficient methodologies that aim to minimize the number of evaluations by careful selection.\ In this context, Bayesian optimization (BO) emerges as a competitive approach.\ The idea is to train a Bayesian regression model $\mathcal M$ that explains the evidence collected up to step $n$, $\mathcal D_n=\mbr{\mpr{\vx_i, y_i}}_{i\le n}$, and provides a distribution over possible functions.\ Given this probabilistic surrogate, an acquisition function $\alpha$ assigns a utility to each candidate point, guiding the selection of the next acquisition, i.e., the next point to evaluate.\ The procedure is outlined in \Algref{alg:BO}, where the stopping condition may be based on a maximum evaluation budget and a tolerance.\ For more details on BO, see e.g.\ \citet{BOTUTShahriari16}, \citet{BOTUTFrazier18} and \cite{BOTUTGreenhill20}.

\begin{figure}[t]
\begin{algorithm}[H]
\begin{algorithmic}
	\State \tbf{Input:} objective function $f$, acquisition function $\alpha$, statistical model $\mc M$, initial evidence $\mc D_{n_0}$
	\Repeat
   		\State $\vx_{n+1} = \argmax \alpha(\vx \mid \mc D_{n}, \mc M)$  \Comment{Find best candidate} 
   		\State $y_{n+1} = f(\bm{x}_{n+1})$ \Comment{Evaluate candidate}
   		\State $\mc D_{n+1}=\mc D_n\cup\mbr{\mpr{\vx_{n+1}, y_{n+1}}}$
               \Comment{Update evidence set}
   \Until stopping condition is met	
\end{algorithmic}
\caption{Bayesian Optimization (BO)}
\label{alg:BO}
\end{algorithm}
\vspace{-4ex}
\end{figure}

\subsection{Gaussian Process Regression}
BO is traditionally paired with Gaussian process (GP) models.\ For a given GP prior $f \sim GP(m_{\vtheta}, C_{\vtheta})$ with mean function $m_{\vtheta}\colon\sX\rightarrow \sY$, covariance function $C_{\vtheta}\colon\sX \times \sX \rightarrow \sR^+$, and an additive Gaussian observation model $y = f(\vx) + \epsilon$, $\epsilon \sim \mc N(0, \sigma_y^2)$, the univariate posterior predictive distribution of the latent $f$ at test input $\vx$ is Gaussian with mean $m_n$ and variance $\sigma_n^2$,
\begin{align}
\label{eq:ppm}
m_n(\vx) &= m_{\vtheta^\star}(\vx) + \vc_n(\vx)\mt\lbrack\mC_n +\sigma_y^2\mI\rbrack\inv (\vy_n - \vm_n),\\
\label{eq:ppv}
\sigma_n^2(\vx) &= C_n(\vx, \vx),\\
\label{eq:ppc}
C_n(\vx_i, \vx_j) &= C_{\vtheta^\star}(\vx_i, \vx_j) - \vc_n(\vx_i)\mt\lbrack\mC_n +\sigma_y^2\mI\rbrack\inv \vc_n(\vx_j), \\
\label{eq:covar_test_obs}
\vc_n(\vx) &= \left(C_{\vtheta^\star}(\vx, \vx_1),\dots,C_{\vtheta^\star}(\vx, \vx_n)\right)\mt,
\end{align}
where the entries in the Gram matrix $\mC_n$ are $[\mC_n]_{ij} = C_{\vtheta^\star}(\vx_i, \vx_j)$, and the mean vector $\vm_n$ has entries $[\vm_n]_i=m_{\vtheta^\star}(\vx_i)$,  $\forall (\vx_i, y_i) \in \mc D_n$.\ Hyperparameters $\vtheta^\star$ can, e.g., be fixed a priori or learned via empirical Bayes, by introducing a hyperprior $p(\vtheta)$ and maximizing the unnormalized posterior $p(\vtheta | \mc D_n) \propto p(\vy_n | X_n, \vtheta) p(\vtheta)$, where the marginal likelihood is $p(\vy_n | X_n, \vtheta) = \mc N(\vy_n; \vm_n, \mC_n + \sigma_y^2 \mI)$, $X_n = \{\vx_i\}_{i\le n}$. For a comprehensive introduction to GPs, see \citet{GPRBook}. 

\subsection{Stationary Covariance Functions}
\label{sec:stat-cov-fun}
Most popular covariance functions are translation invariant, depending only on the relative position of the input points.\ These functions compute $\Cov(f(\vx_i), f(\vx_j))$ based on a translation-invariant distance between $\vx_i$ and $\vx_j$.\ Valid covariance functions must be symmetric and positive definite.\ In this family, we find the squared exponential or Gaussian covariance,
\begin{align}
C_\mathrm{G}(\vx_i, \vx_j)= \sigma_0^2 \exp(-1/2\,\, d_\mathrm{M}^2(\vx_i, \vx_j)), \quad d_\mathrm{M}(\vx_i, \vx_j) =\sqrt{(\vx_i-\vx_j)\mt\mLambda\inv(\vx_i -\vx_j)},
\end{align}
where $\sigma_0^2$ is the \textit{prior variance} and $d_\mathrm{M}$ is the Mahalanobis distance function with matrix $\mLambda$. For simplicity and scalability, it is customary to constrain the matrix to be diagonal with entries $[\mLambda]_{dd} = \lambda_d^2 \in \R^+$,  $d \in \{1,..., D\}$, known as (squared) \textit{lengthscales}.\ In this case, the Mahalanobis distance turns into the weighted Euclidean distance, which we denote by $d_\mathrm{WE}$.\ These lengthscales control the exponential decay of correlation.\ For instance, a short lengthscale leads to sample paths, or realizations, that vary more rapidly.

The Gaussian covariance, being infinitely differentiable, may generate unrealistically smooth sample paths. A more general class of functions is generated, or reproduced, by the Mat\'ern covariance function which controls the differentiability of the process via a smoothness parameter $\nu$, recovering $C_\mathrm{G}$ when $\nu\to\infty$. For optimization purposes, good empirical results have been found with $\nu=5/2$ \citep{BORECSnoek12}, for which
\begin{align}
\label{eq:matern52_scov}
C_\mathrm{M}(\vx_i, \vx_j) = \sigma_0^2\hspace{-1mm}\left(1+\sqrt{5}d_{\mathrm{M}}(\vx_i, \vx_j) +\frac{5}{3}d_{\mathrm{M}}^2(\vx_i, \vx_j)\right)\hspace{-1mm}\,\exp\left(-\sqrt{5}d_{\mathrm{M}}(\vx_i, \vx_j)\right).
\end{align}
An attractive property of the Gaussian and mixtures thereof, such as Mat\'ern, is their proven universality \citep{SKUniversalMicchelli06}.\ Asymptotically, these covariance functions, or kernels, are capable of approximating any continuous function on any compact subset of the domain.\ In practice, however, other covariance functions may be preferable in the small sample regime, especially if the elicited prior includes relevant information, or structure, that accelerates the learning process.

\subsection{Acquisition Functions}
While many rules to determine promising candidate solutions have been proposed \citep[see e.g.][]{ABOWilson18, ABONeiswanger22}, those with an analytical expression remain the most popular. Such acquisition functions can be evaluated without Monte Carlo or quadrature approximations.\ Given a Bayesian model, myopic acquisition functions $\alpha$ suggest one acquisition at a time, depending only on the univariate posterior predictive distribution with mean $m_n$ and variance $v_n$, i.e., $\alpha(\vx \mid \mc D_n, \mc M) = \alpha(\vx \mid m_n, v_n)$.\ In this context, a popular choice is the Lower Confidence Bound (LCB) criterion \citep{UCBSrinivas09},\footnote{The LCB needs to be minimized and thus corresponds to a negative acquisition function $\alpha$.}
\begin{align}
\mathrm{LCB}(\vx) &= m_n(\vx) - \beta_n \sigma_n(\vx),
\end{align}
where the factor $\beta_n$ is related to a confidence level, balancing exploitation and exploration. Another popular alternative is the Expected Improvement (EI), which has been found to be more effective than LCB when properties of the objective function, e.g.\ norm bounds, are unknown \citep{BORECSnoek12, BOUNDChowdhury17, BORECMerrill21}. This acquisition function is defined as
\begin{align}
\mathrm{EI}(\vx) = \mbb E_{p(f(\vx)| \vx, \mc D_{n})}[\mathrm{I}(\vx)] &=\sigma_n(\vx) \tau(z(\vx)),\\
\tau(z(\vx)) = z(\vx) F_{\mc N}(z(\vx)) + \mc N(z(\vx); 0, 1), \quad z(\vx) &= (f(\vx_\mathrm{best}) - m_n(\vx))/ \sigma_n(\vx),
\end{align}
where $\mathrm{I}(\vx) = \max(0, f(\vx_\mathrm{best}) - f(\vx))$ is the improvement over the incumbent $f(\vx_\mathrm{best})$ and $ F_{\mc N}$ is the standard Normal cumulative distribution function (CDF). For noisy functions, $f(\vx_\mathrm{best})$ is replaced by the minimum posterior predictive mean, $m_n^- = \min_{\vx} m_n(\vx)$.

\subsection{Regret}
In the context of optimization, there are several useful metrics to measure performance. The instantaneous regret is the loss incurred at step $n$, $r_n = f(\vx_n) - f(\vx^\star)$, where $\vx_n$ is the acquisition and $\vx^\star$ is the global minimizer of $f$. The simple regret is the minimum instantaneous regret incurred up to step $n$, $s_n = \min_{t\le n}\,r_t$, or, equivalently, the regret incurred by the incumbent solution. The cumulative regret is defined as the sum of instantaneous regret over $N$ steps, $R_N = \sum_{n\le N} r_n$.\ If the cumulative regret grows at a sublinear rate, then the average regret goes to zero, $\mathrm{lim}_{N\rightarrow\infty} R_N/N = 0$, which is also known as no regret.\ This property implies that simple regret goes to zero, ensuring that the global optimum is asymptotically found. For more details on regret bounds, see e.g.\ \citet{BOUNDVakili21} and \citet{EIGupta22}.

\section{Benefits of Second-Order Nonstationarity for Optimization}
\label{sec:benefits_ns}
We now turn our attention to GP priors with spatially-varying properties, namely prior variance and lengthscale, which provide the foundations for the informative covariance functions in \Secref{sec:icf}.\ While an analysis in terms of cumulative regret is customary, we focus on the instantaneous regret because we are interested in the relatively small sample regime, for which the asymptotics are not relevant.\ Furthermore, by demonstrating that the instantaneous regrets $r_n$ are smaller (or larger), it follows that the cumulative regret $R_N$ is also smaller (or larger).

\textbf{Spatially-varying prior variance}
The prior variance $\sigma_0^2$ quantifies the (epistemic) uncertainty of the surrogate model of $f$ before observing data.\ Relatedly, posterior predictive variances $\sigma_n^2(\vx)$, defined in \Eqref{eq:ppv}, determine the mutual information between observations $\vy_N$ and the objective function $f$, $\mathrm{MI}_N(\vy_N; f) = \frac{1}{2} \sum_{n=1}^{N} \log\mpr{1 + \sigma_y^{-2} \sigma_{n-1}^2(\vx_n)}$, see e.g.\ \citep[Lemma 5.3]{UCBSrinivas09}.\ Then, for stationary covariance functions, any point $\vx_n$ where $\vc_{n-1}(\vx_n) \approx \bm 0$ provides equal information about $f$ because the predictive variance is approximately constant, $\sigma_{n-1}^2(\vx_n) \approx \sigma_0^2$.\ Conversely, surrogate models of $f$ with a spatially-varying prior variance are spatially informative, even in the absence of neighboring observations, in which case $\sigma_{n-1}^2(\vx_n) \approx \sigma_0^2(\vx_n)$.

In terms of optimization, a spatially-varying prior variance can express a priori preferences for certain regions, as illustrated in Figure \ref{fig:envelope}.\ Notably, compared to the stationary case, better optimization performance can be achieved: For popular acquisition functions, the worst-case instantaneous regret is proportional to the predictive standard deviation (Lemma \ref{lemma:iregret}), provided that the interval $|f(\vx) - m_n(\vx)|\le \beta_n \sigma_n(\vx)$ holds for some factor $\beta_n$.\ Then, recall that the predictive variance is upper bounded by the prior variance, $\sigma_n^2(\vx) \le \sigma_0^2(\vx)$.\ While the latter is constant for stationary covariance functions, covariance functions with spatially-varying prior variances allow for narrower intervals and, in turn, tighter regret bounds. As a caveat, standard regret analysis assumes that 1) intervals include the objective function and 2) global optima of acquisition functions can be found, but in practice these assumptions do not necessarily hold.

\begin{figure}[t]
    \centering
    \includegraphics[height=33ex, width=0.75\textwidth]{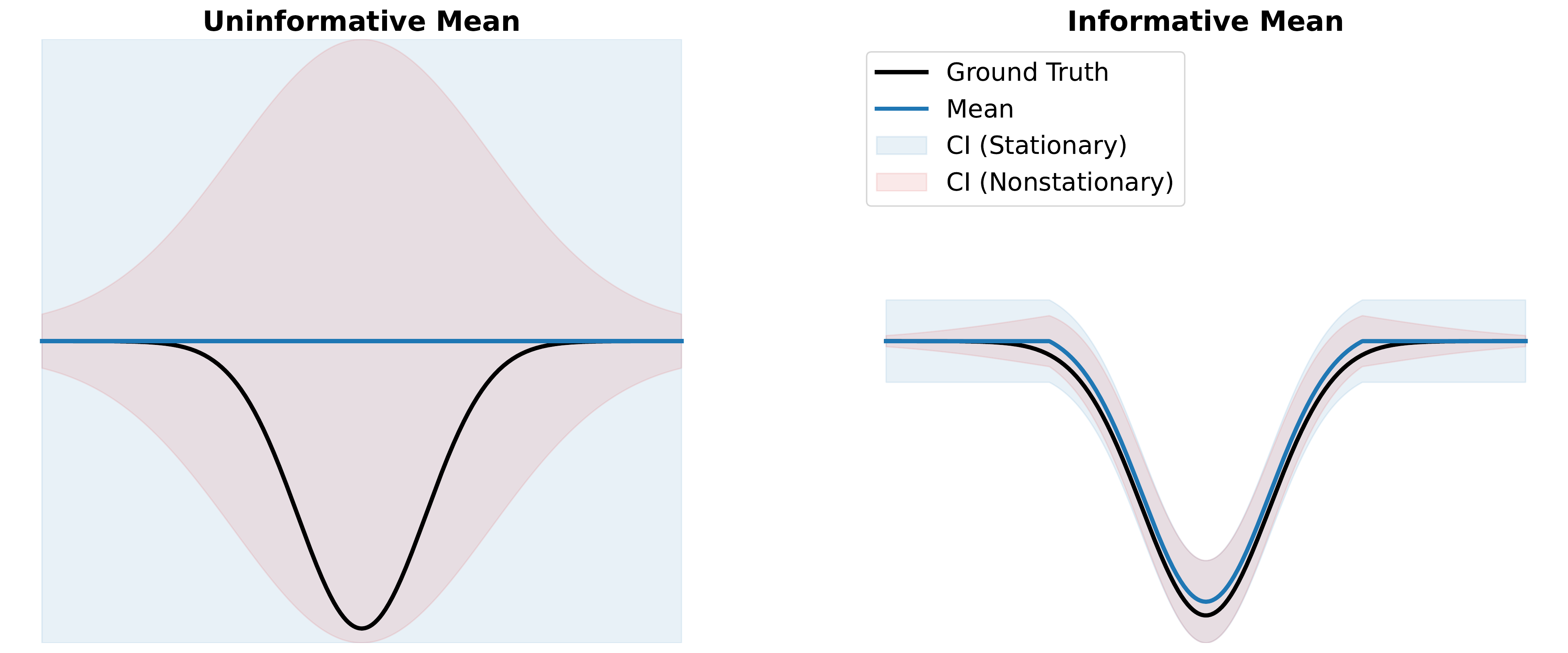}
    %\vspace{-2.4ex}
    \caption{Narrower credible intervals (CI) can be obtained with a spatially-varying prior variance.\ Typically, GP priors in BO are uninformative, characterized by constant prior mean functions and stationary covariance functions that posit a constant prior variance.\ Then, for popular acquisition functions, all candidates are considered equally good a priori.\ Conversely, spatially-varying prior variances can encode preferences.\ In this example, points near the center are more informative a priori.}
    \vspace{-2ex}
    \label{fig:envelope}
\end{figure}

\textbf{Spatially-varying lengthscale}
In terms of sample efficiency, stationary covariance functions are ill-suited for approximating objectives with spatially-varying properties.\ Intuitively, a function that varies rapidly in one region may be characterized by short lengthscales, but is, otherwise, more efficiently described by longer lengthscales.\ In order to capture the finest details of such a function, a stationary covariance would require the shortest possible lengthscale, resulting in additional observations to approximate the function and ultimately find the optimum. In contrast, a multiscale approach only requires more neighboring observations where the local lengthscales are shorter (\Figref{fig:multiscale}).

\begin{figure}[t]
    \centering
    \includegraphics[width=0.75\textwidth]{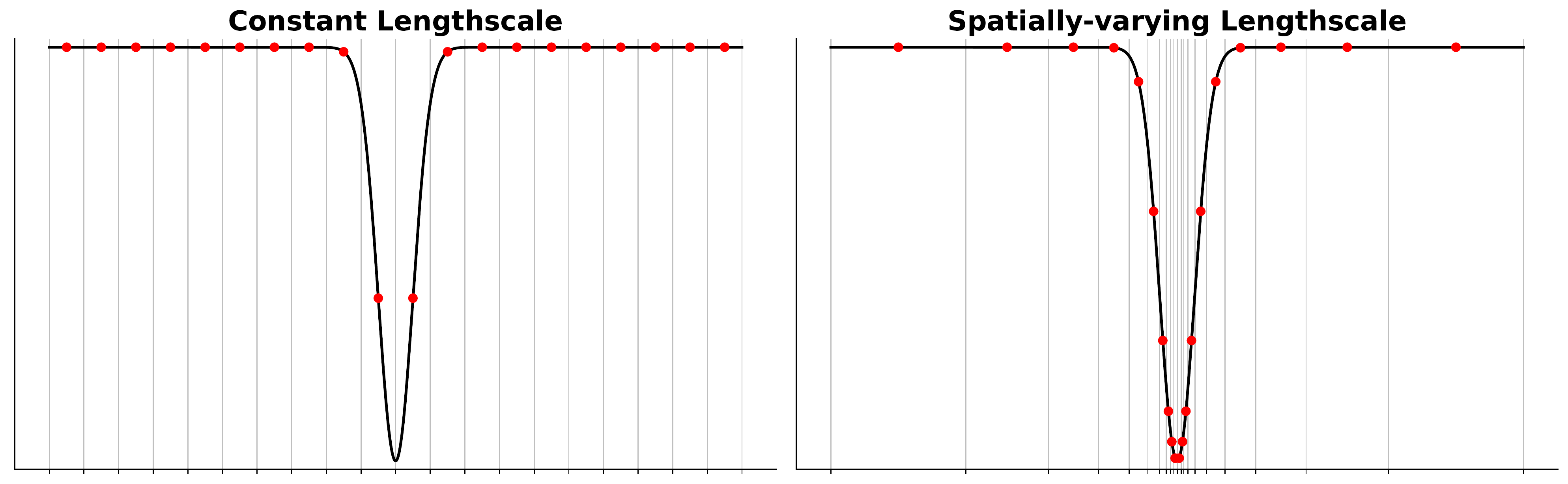}
    \vspace{-2.4ex}
    \caption{Given the same evaluation budget ($N=20$), a model with spatially-varying lengthscales can better approximate the function than one with a constant lengthscale.\ While the latter must set a shorter global lengthscale, a multiscale approach can locally assign shorter lengthscales.}
    \label{fig:multiscale}
    \vspace{-3ex}
\end{figure}

In more detail, let the objective be a realization from a locally stationary process, $f(\vx) {=} \sum_{m\le M} \1_{\vx \in \sX_m} f_m(\vx)$ with disjoint $\sX_m \subseteq \sX$, $m=\{1,\dots,M\}$, $M\in\sN^+$.\ Each component is a member of a class of functions forming the Hilbert space $\mathcal H_{C_m}(\sX_m)$ with reproducing stationary kernel $C_m$ and bounded norm $\norm{f}_{\mathcal H_{C_m}} \le B_m$.\ If $f$ is to be estimated with a stationary covariance function $C$, then it must reproduce the space $\mathcal H_{C}(\sX) \supseteq \bigcup_{m\le M} \mathcal H_{C_m}(\sX_m)$ with $\vx \in \sX = \bigcup_{m\le M} \sX_m$ and norm bound $B \ge \sqrt{\sum_m B_m^2}$, by \citep[Theorem §6]{RKAronszajn50}.\ This corresponds to a search in the space of more complex functions, for which more observations are required because the global lengthscale must be set as the shortest from $\{C_m\}_{m\le M}$. Indeed, for shorter lengthscales, $\lambda'\le\lambda$, we have larger posterior predictive variances, $\sigma_{n,\lambda'}^2(\vx) \ge \sigma_{n,\lambda}^2(\vx)$, which lead to greater information gains $\mathrm{MI}_{n,\lambda'} \ge \mathrm{MI}_{n,\lambda}$, and larger norms, $\norm{f}_{\mathcal H_{C_{\lambda'}}} \ge \norm{f}_{\mathcal H_ {C_{\lambda}}}$, by \citep[Lemma 4]{EIBull11}, contributing to a larger $\beta_{n,\lambda'} \ge \beta_{n,\lambda}$ because $\beta_{n} = \sup_{f\in\mathcal H_{C}, \mathrm{MI}_n} \norm{f}_{\mathcal H_{C}} + \sigma_y \sqrt{2 (\mathrm{MI}_n + 1 - \log \delta)}$, $\delta\in(0,1]$, by \citep[Theorem 2]{BOUNDChowdhury17}.\ Then, by Lemma \ref{lemma:iregret}, wider intervals $|f(\vx) - m_n(\vx)|\le \beta_n \sigma_n(\vx)$ lead to larger regret bounds, and hence worse performance.

The argument above provides additional justification for spatially-varying prior variances, in which case each $C_m$ is characterized by different but constant $\sigma_{0,m}^2$. Then, the stationary covariance function must set $\sigma_0^2 = \max_m \sigma_{0,m}^2$, leading again to wider intervals, and hence worse performance by Lemma \ref{lemma:iregret}. Finally, as $M$ grows, it becomes more challenging to estimate and maintain $M$ local models. Nonstationary approaches that do not rely on space partitions can avoid these computational and statistical problems.\ Notably, in the limit, each subset is a singleton, $\sX_m = \{\vx_m\}$, $\forall \vx_m\in\sX$, and the restriction $f | \sX_m$ is reproduced by $C_m(\vx_i, \vx_j) = \delta_{im}\delta_{jm}$ with norm $\norm{f}_{\mathcal H_{C_m}} = |f(\vx_m)|$.\ For an uninformative prior mean, $m_0(\vx) = 0$, the narrowest interval, before observing data, $|f(\vx_m) - m_0(\vx_m)|\le \beta_{0,m} \sigma_{0,m}$, is obtained with a spatially-varying $\beta_0(\vx) = |f(\vx)|$ and constant $\sigma_0 = 1$, or, equivalently, with $\beta_0 = 1$ and spatially-varying prior standard deviation $\sigma_0(\vx) = |f(\vx)|$. Thus, the optimal prior variance depends on the shape of the objective function, as previously suggested by \Figref{fig:envelope}.

\textbf{High-dimensional domains}
In high-dimensional Euclidean spaces, most of the volume is on the boundary, and for stationary GPs, epistemic uncertainty, measured by the posterior predictive variance, increases with the distance to the training data, tilting the acquisition step toward blind exploration of the boundary, typically faces of a hypercube \citep{BOHDBinois22}.\ This problem is known as the \textit{boundary issue} \citep{BOPHDSwersky17, BOCKOh18}.

Moreover, uninformative priors provide no practical finite-time convergence guarantees.\ Begin by recalling that the motivation behind BO is the optimization of expensive functions, which puts a practical constraint on the size of the evaluation budget.\ However, unless the total budget $N$ increases exponentially, the space cannot be covered in such a way that a global contraction of the posterior predictive variance is ensured, see e.g.\ \citep[Section 5.2]{Kanagawa18} and \citep[Appendix C]{BOUNDLWuthrich21}. Indeed, in the relatively small sample regime, there is at least one point $\vx_n$ whose predictive variance is approximately equal to the (constant) prior variance,  $\sigma^2_{n-1}(\vx_n) \approx \sigma_0^2$.\ This is explained by the absence of neighboring observations, in which case the training data provides a negligible reduction of uncertainty because $\vc_{n-1}(\vx_n) \approx \bm 0$.\ As a result, the worst-case instantaneous regret, being proportional to the predictive standard deviation (Lemma \ref{lemma:iregret}), remains approximately constant for fixed $\beta_n$, or alternatively is nondecreasing for nondecreasing $\beta_n$ \citep[Theorem 2]{BOUNDChowdhury17}, and thus there is no guarantee of improvement.

These problems can be alleviated by more informative GP priors.\ In particular, spatially-varying prior variances allow the surrogate to remain spatially informative, even in the absence of neighboring observations, and spatially-varying lengthscales can encourage greedier strategies, in an attempt to improve upon the incumbent solution when given small budgets.

\section{Informative Covariance Functions}
\label{sec:icf}
For the construction of our informative covariance functions, we start with a set of \textit{anchors} $\{\vx_0^{(l)}\}$, i.e., a set of promising candidate solutions.\ This prior belief can be interpreted as a point-mass distribution over promising points.\ A more conservative prior $p(\vx^\star)$ is obtained by adding an uninformative slab that ensures the optimal solution is included in its support, and forming a kernel density estimate by associating a non-negative weight $w_l$, distance function $d_l$ and kernel $k_l$ with each anchor $\vx_0^{(l)}$,
\begin{align}
\label{eq:kmix_anchors}
p(\vx^\star) &\propto \phi(\vx^\star), & \phi(\vx^\star) & = 1 + \frac{1}{L}\sum_{l\le L} (w_l - 1)\,k_l\left(d_l(\vx^\star, \vx_0^{(l)})\right).
\end{align}
Distance functions and kernels characterize the neighborhood of anchors.\ Larger weights correspond to the belief that better candidates can be found in a given neighborhood.\ Next, we introduce spatially-varying prior variances by rescaling the stationary covariance function $C\subS(\vx_i, \vx_j)$,
\begin{align}
\label{eq:inf_covar}
C\subI(\vx_i, \vx_j) &= \sigma_0^2(\vx_i,\vx_j)C\subS(\vx_i, \vx_j), &\sigma_0^2(\vx_i,\vx_j) &=\sigma_p^2\sqrt{\vphantom{\phi(\vx_j)}\phi(\vx_i)}\sqrt{\phi(\vx_j)},
\end{align}
where $\sigma_p$ is a scaling constant and $C\subS(\vx, \vx)=1$.\ The modulating $\sigma_0^2(\vx_i,\vx_j)$ is symmetric and separable in $\vx_i$ and $\vx_j$, and hence a valid covariance function \citep{GentonKernels01}.\ The informative covariance $C\subI(\vx_i, \vx_j)$, being the product of two covariance functions, is also a valid covariance function.

The \textit{prior covariance function} $\sigma_0^2(\vx_i,\vx_j)$ is a generalization of the constant prior (co)variance $\sigma_0^2$. By recalling that covariance functions compute $\Cov(f(\vx_i), f(\vx_j))$, the intuition is as follows: For two points $\vx_i$ and $\vx_j$ that are close to anchors, i.e., large $\phi(\vx_i)$ and $\phi(\vx_j)$, this prior posits that their values, $f(\vx_i)$ and $f(\vx_j)$, are highly correlated because both should be close to the minimal value and hence similarly small. Then, if the probability for a point, e.g.\ $\vx_j$, to be a good candidate decreases, the covariance becomes smaller because $f(\vx_j)$, being less constrained, can take on a greater range of values.

In \Secref{sec:experiments}, we focus on the special case of a single anchor, for which we obtain
\begin{align}
\label{eq:prior_covar_single_anchor}
\sigma_0^2(\vx_i,\vx_j) = \sigma_p^2 \sqrt{1{+}\mpr{\frac{1}{r_\sigma} - 1} k_\sigma\mpr{d_\sigma(\vx_i, \vx_0 )}} \sqrt{1{+}\mpr{\frac{1}{r_\sigma} - 1} k_\sigma\mpr{d_\sigma(\vx_j, \vx_0 )}},
\end{align}
where $k_\sigma \triangleq k_1$, $d_\sigma \triangleq d_1$ and $r_\sigma \triangleq 1 / w_1 \in (0, 1]$.\ We find this parametrization more convenient because inverse weights are bounded.\ The \textit{ratio} $r_\sigma$ controls the degree of stationarity, with $r_\sigma=1$ resulting in a stationary covariance and $r_\sigma \to 0$ in increased nonstationary effects, as demonstrated in \Figref{fig:prior_var_gk}.

\begin{figure}[t]
    \centering
    \includegraphics[width=0.85\textwidth]{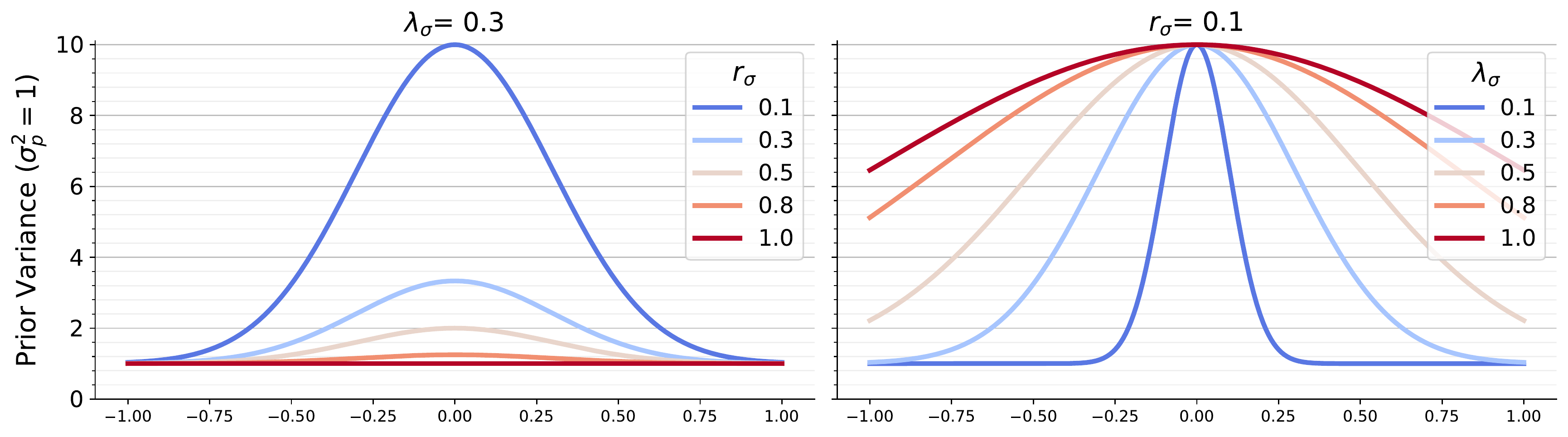}
    \vspace{-2ex}
    \caption{For a Gaussian $k_\sigma$ and weighted Euclidean distance $d_\sigma=d_\mathrm{WE}$, decreasing the ratio $r_\sigma$ increases the prior variance around the anchor $x_0 = 0$. Stationarity is recovered when $r_\sigma = 1$. The lengthscale $\lambda_\sigma$ in $d_\mathrm{WE}$ controls the rate at which the prior variance decreases to the stationary baseline.} 
    \label{fig:prior_var_gk}
    \vspace{-1ex}
\end{figure}

Next, we introduce spatially-varying lengthscales via input warping.\ In input warping \citep{BOWARPSnoek14, NSCRisser16}, the stationary covariance function $C\subS$ is computed in a transformed space, i.e., after applying a (possibly nonlinear) warping function $h_\lambda$ to $\vx$.\ Endowed with such a transformation, the informative covariance function $C\subI(\vx_i, \vx_j)$, in \Eqref{eq:inf_covar}, is given by
\begin{align}
\label{eq:inf_covar_warped}
C\subI(\vx_i, \vx_j) = \sigma_0^2(\vx_i,\vx_j) C\subS\left(h_\lambda(\vx_i), h_\lambda(\vx_j)\right).
\end{align}
Recall that inputs in $\sigma_0^2(\vx_i,\vx_j)$ are already being compared to anchors in the shaping function $\phi$ (\Eqref{eq:kmix_anchors}).\ Now, notice that an anisotropic stationary covariance function, equipped with the Mahalanobis distance $d_\mathrm{M}$, is equivalent to an isotropic covariance function, equipped with the standard Euclidean distance and linear transformation $h_\lambda(\vx) = \mLambda^{-\frac{1}{2}} \vx$.\ Then, since the warping function can be any real vector-valued function, we propose a nonlinear transformation that shrinks the lengthscales $\lambda_d$ around anchors, thereby expanding their neighborhoods. In particular, for a single anchor, we have
\begin{align}
\label{eq:inf_inputs_warped}
h_\lambda(\vx) &= u^{-\frac{1}{2}}_\lambda(\vx) \mLambda^{-\frac{1}{2}} \vx,&
u_\lambda(\vx) &= 1 + (r_\lambda - 1) k_\lambda( d_\lambda(\vx, \vx_0) ),
\end{align}
with ratio $r_\lambda \in (0, 1]$.\footnote{The general case with multiple anchors can be given by the kernel mixture $u_\lambda (\vx) =  1 + 1/L \sum_{l \le L} (1/w_l - 1) k_l(d_l(\vx, \vx_0^{(l)}))$, an expression similar to that of the shaping function $\phi$ in \Eqref{eq:kmix_anchors}. The inverse weights $1/w_l \in (0, 1]$ set shorter lengthscales.} Note that the matrix $\mLambda$ is diagonal with squared lengthscales $\lambda_d^2$ and the neighborhood of $\vx_0$ is characterized by the kernel $k_\lambda$ and distance $d_\lambda$.\ The shortest lengthscale is $\sqrt{r_\lambda} \lambda_d$.

\section{Experiments}
\label{sec:experiments}
\subsection{Setting}
\textbf{Informative covariance functions}
The shaping function $\phi$ should capture the existing knowledge about the objective, possibly by being elicited from experts or learned on problems that are known to be similar.\ This function, based on kernel mixtures, can express arbitrary beliefs, but its optimality crucially depends on its alignment with the shape of the objective function, as discussed in more detail in \Secref{sec:benefits_ns}.\ For black-box optimization, in which little is known in advance, a default kernel and distance function may be used, and their hyperparameters learned by empirical Bayes.\ To avoid overfitting and reduce computational complexity, regularizing hyperpriors and hyperparameter tying are effective techniques that may also be used.\ In particular, the parameters and functions that control the spatially-varying prior (co)variance and lengthscales can be tied by setting the ratio $r_0 \triangleq r_\sigma = r_\lambda$, kernel $k_0 \triangleq k_\sigma = k_\lambda$ and distance $d_0 \triangleq d_\sigma = d_\lambda$. In our experiments, we specify a Gaussian $k_0$, equipped with a weighted Euclidean distance $d_0=d_\mathrm{WE}$ and a shared lengthscale vector, which also parametrizes $\mLambda$.\ Dirac delta hyperpriors for anchors have also been adopted, but note that none of these choices are strictly required.\ More implementation details can be found in Appendix~\ref{app:impl}, and we refer to Appendix~\ref{app:sensitivity} for the settings of $r_0$, where we perform a sensitivity analysis.

An overview of the main methods using the proposed covariance functions (\I) is included in Appendix~\ref{app:add_res}.\ These methods are \IXO\ and \IXA, where \XO\ denotes a fixed anchor $\vx_0$ (placed at the center of the search space) and \XA\ denotes an adaptive anchor (placed at the incumbent solution).\ More methods can be developed by incorporating complementary methodologies.\ In this paper, we test \IXA\TR\ (trust-region optimization, Appendix~\ref{app:main_methods}), \IXA\QM\ (informative mean functions, Appendix~\ref{app:add_methods}), \IXA\ttt{+SAAS} (sparsity-inducing hyperpriors, Appendix~\ref{app:saas}) and \IXA\ttt{+GKEI} (belief-augmented acquisition functions, Appendix~\ref{app:belief_augmented_af}).

\textbf{Baselines}
Previous work has assumed that optimal solutions are generally closer to the center of the search space rather than its boundary.\ As dimensionality increases, BO with a stationary covariance (\S) may allocate a large portion of the budget toward blind exploration of the boundary, which by assumption does not contain the optimum, resulting in worse performance.\ To mitigate boundary overexploration, two approaches have been proposed: quadratic mean functions (\QM) \citep{BOHDSnoek15} and cylindrical covariance functions (\C) \citep{BOCKOh18}.\ In particular, the latter are nonstationary covariance functions that aim to expand the innermost region, and have been shown to outperform stationary additive models on several high-dimensional additive objectives, without having to infer their structure.\ Further details can be found in Appendix~\ref{app:cyl_covar}. Additionally, the use of trust regions (\TR), despite not specifically intended to address the boundary issue, has become a popular acquisition strategy for high-dimensional BO. This strategy can promote exploitation by constraining acquisitions to an adaptive region centered at the incumbent solution.

In our experiments, we refrain from direct comparisons with existing BO packages due to the diverse training and acquisition routines.\ Instead, we reimplemented the BO baselines to study the effect of different GP priors on optimization performance under controlled conditions.\ We also investigate the interplay between GP priors and trust regions.\ Our implementation of \TR\ follows \ttt{TuRBO-1} \citep{TURBOEriksson19}.\ The GP models are global, trained on all observations, and the trust region is given by a box whose side lengths are doubled/halved after consecutive successes/failures.\ The hyperparameter values of \TR\ are identical to those of \ttt{TuRBO-1}, except for the failure tolerance, which is originally given by the number of dimensions. On a $100$-dimensional problem, this means that the trust region is only shrunk after $100$ consecutive failures, half of our evaluation budget.\ To increase exploitation, this tolerance is set to $10$.\ Other baselines, such as the popular \CMAES\ \citep{CMAESHansen16}, are described and shown in Appendix~\ref{app:add_res}.

\textbf{Performance metric}
We report the normalized improvement over initial conditions \citep{BOPERFHoffman11}, 
\begin{align}
\mathrm{NI}_n =\frac{f_\mathrm{best}^{(n_0)} - f(\vx_\mathrm{best}^{(n_0 + n)})}{f_\mathrm{best}^{(n_0)} - f(\vx^\star)}, \qquad n = \mbr{0, \dots, N}, 
\end{align}
where $n_0=16$ is the number of initial observations, $f_\mathrm{best}^{(n_0)}$ is the lowest initial value, $\vx_\mathrm{best}^{(n_0 + n)}$ is the incumbent solution, $\vx^\star$ is the global minimizer and $N=200$ is the number of acquisitions.\footnote{Experiments with a larger evaluation budget can be found in Appendix~\ref{app:hdbo_budget_600}.}

Unlike simple regret, this performance metric is invariant to initial conditions and to the range of $f$, always starting from $0$ and only attaining the maximum value of $1$ if a global optimum is found (zero simple regret).\ Additionally, our analysis does not hinge on the final best values because these can be deceiving. For instance, a method that rapidly achieves a good, though suboptimal, solution should not be considered inferior to another method that requires more evaluations, only to attain a marginally better value.\ The former is more sample efficient, which is a desirable feature when functions are expensive to evaluate.\ Some results are summarized by reporting the mean $\mathrm{NI}$ over $N$ acquisitions (or, equivalently, the normalized area under the $\mathrm{NI}$ curve), with $0$ indicating worst and $1$ optimal performance.\ Respective optimization trajectories are shown in Appendix~\ref{app:add_res}.
\vspace{-1ex}

\subsection{Results on Test Functions}
\vspace{-1.2ex}

\begin{figure}[t]
    \centering
    \includegraphics[width=0.95\textwidth]{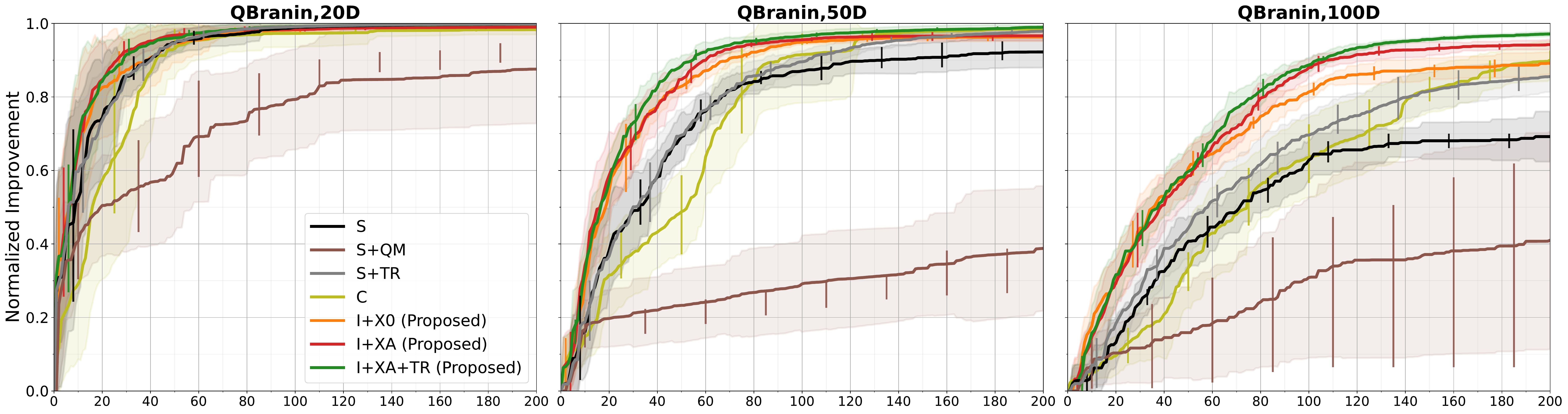}
    \includegraphics[width=0.95\textwidth]{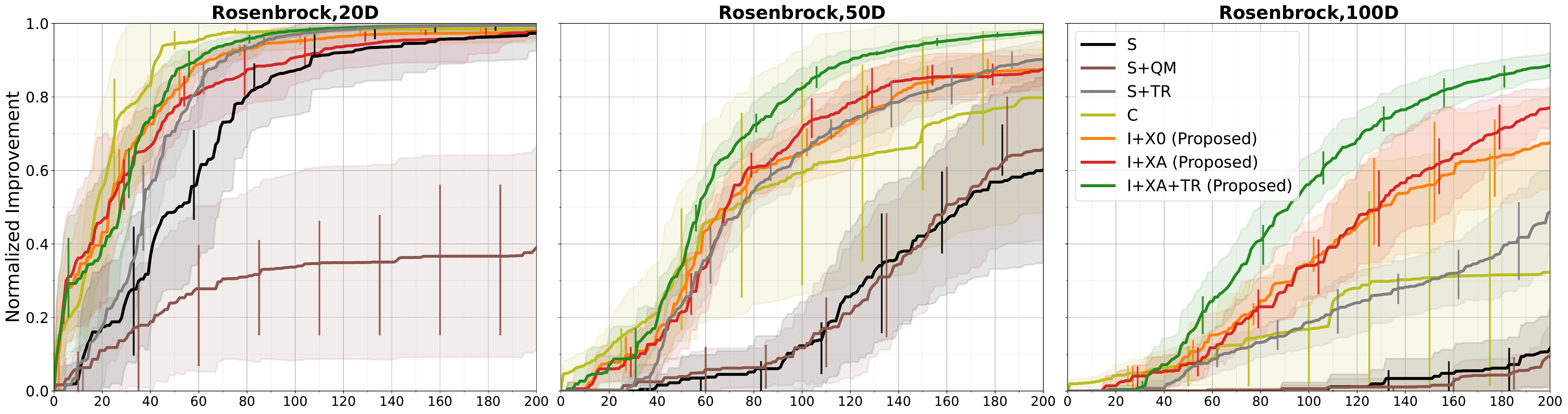}
    \includegraphics[width=0.95\textwidth]{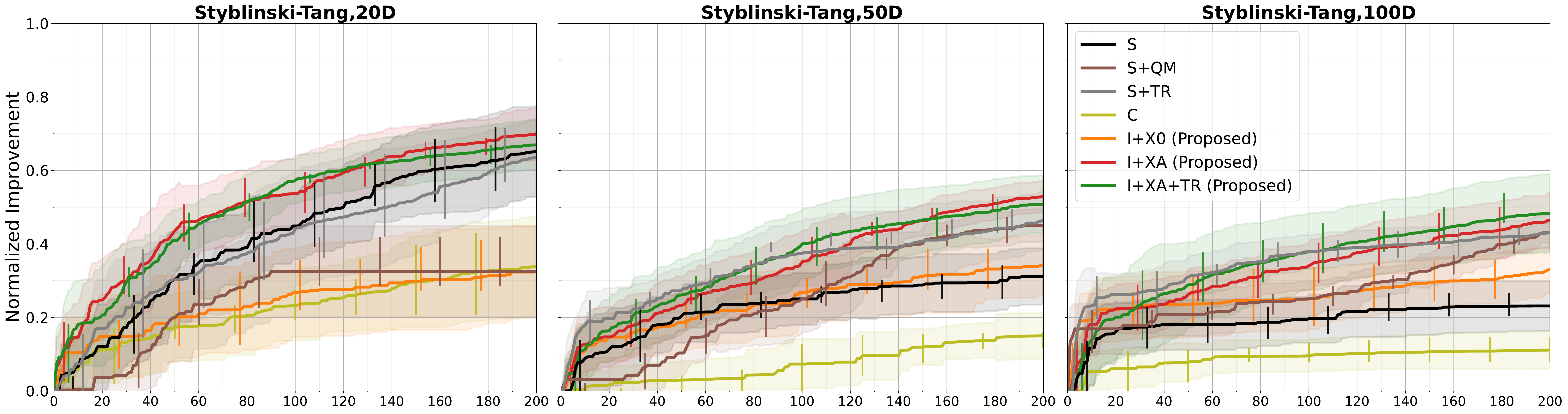}
    \vspace{-2.5ex}
    \caption{Performance on test functions with $20$, $50$ and $100$ dimensions.\ Additional results are shown in Appendix~\ref{app:add_res} (methods and test functions) and Appendix~\ref{app:hdbo_budget_600} (larger evaluation budget).\ Solid curves and shaded regions represent the mean and standard deviation of the normalized improvement, computed over $10$ trials with different initial conditions.\ Solid vertical lines indicate the interquartile range. Abbreviations: Stationary (\S), Cylindrical (\C) and Informative (\I) covariances; Quadratic Mean (\QM); Origin (\XO) and Adaptive (\XA) anchors; acquisitions within Trust Region (\TR).}
    \label{fig:performance}
    \vspace{-3ex}
\end{figure}

We first focus on objectives whose landscapes are well-defined and can be easily controlled, allowing us to examine whether more informative priors translate into higher sample efficiency.\ Our test functions are roughly bowl-shaped, making the quadratic mean prior an appropriate choice for these problems.\ Test functions, such as Branin and Rosenbrock, have been previously used to test \C\ \citep{BOCKOh18}.\ However, the Branin function is known for having multiple global optima, some of which are located close to the boundary.\ To overcome this, we design the QBranin function, which is obtained by adding a quadratic term, resulting in a bowl-shaped function with a single global optimum.\ The Styblinsky-Tang function, while multimodal, has a unique global optimum.\ A more detailed characterization of these test functions, as well as additional benchmarks (Levy and shifted objectives), can be found in Appendix~\ref{app:impl_fn}.

\textbf{Fixed anchor}
The proposed covariance functions are initially tested with a fixed anchor placed at the center (\IXO).\ \Figref{fig:performance} shows the performance curves on two objectives (top and middle) where the optimal solution is relatively close to this anchor.\ These results indicate that second-order nonstationary approaches, such as \C\ and \IXO, can significantly outperform stationary methods, \S\ and \SQM, particularly in higher-dimensional domains.\ While an informative mean (\SQM) is not necessarily better than an uninformative constant mean (\S), prior information introduced via the covariance function leads to a consistent improvement over \S.\ The reason may be that, in certain dimensions, \SQM\ has relatively large quadratic weights that hinder exploration away from the center, and in other dimensions, very small weights are unable to alleviate the boundary issue.\ Compared to \S, \SQM\ and \C, the proposed \IXO\ not only performs better on average but is also more reliable.

In Appendix~\ref{app:ablation}, an ablation study provides empirical evidence that incorporating both spatially-varying prior variances and lengthscales is important for tackling higher-dimensional problems.\ In particular, we there observe that the shorter lengthscales near anchors promote local exploration, which is especially effective when anchors are relatively close to  optima.\ Meanwhile, spatially-varying prior variances can mitigate the boundary issue in high-dimensional domains, thereby improving sample efficiency.\ Next, we discuss the use of an adaptive anchor and the performance on the Styblinsky-Tang function.

\textbf{Adaptive anchor}
The assumption that optimal solutions are close to the center is strong.\ As optima deviate from the center, the performance of \C\ decreases markedly, eventually becoming comparable to that of \S\ (Table~\ref{tab:performance_shifted}).\ Despite the lower apparent vulnerability of \IXO\ to anchor misspecifications, this motivates the use of informative covariance functions with an adaptive anchor (\IXA). For simplicity, we opt for a greedy strategy where the anchor is given by the incumbent solution. For noise-free objectives, this is the point in the evidence set with the lowest observed value.

In addition to QBranin and Rosenbrock, \Figref{fig:performance} includes the performance on the Styblinski-Tang function, whose optimal solution is relatively distant from the center.\ Notably, the center is a local maximum, being surrounded by exponentially many local modes.\ As the assumption that the global minimum is close to the center is no longer valid, we observe that the relative performance of \C\ and \IXO, compared to \S\ and \SQM, deteriorates.\ In particular, \C\ becomes the worst-performing method and \IXO\ is worse than \S\ in the $20$-dimensional problem.\ Conversely, \IXA\ is consistently among the best-performing methods across all tests shown in the figure, and often outperforms the fixed-anchor variant (\IXO).

\textbf{Trust region}
The use of an adaptive greedily-chosen anchor is partly motivated by trust-region optimization, wherein regions are centered at the incumbent solution.\ However, the proposed methodology is complementary to trust regions because it extends a stationary covariance function with spatially-varying parameters that are possibly aligned with prior beliefs about the objective.\ While \S\TR\ can lead to significant improvements over \S, overexploration of boundaries in a trust region can still occur because the model is stationary.\ The method \IXA\TR\ shows that the combination of informative covariance functions with trust-region optimization can improve sample efficiency.

\begin{table}[t]
  \caption{Mean normalized improvement (over $200$ acquisitions) on shifted (S35 and S50) Rosenbrock functions (mean $\pm$ standard deviation).\ Values $> 0.5$ suggest superlinear improvement rates.\ Additional methods, test functions and optimization trajectories can be found in Appendix~\ref{app:add_res}. Abbreviations: Stationary (\S), Cylindrical (\C) and Informative (\I) covariances; Quadratic Mean (\QM); Origin (\XO) and Adaptive (\XA) anchors; acquisitions within Trust Region (\TR).}
  \label{tab:performance_shifted}
  \smallskip
  \centering
  \begin{tabular}{lcccc}
    \toprule
     & \multicolumn{2}{c}{S35Rosenbrock} & \multicolumn{2}{c}{S50Rosenbrock} \vspace{-0.5ex}\\
    \cmidrule(r){2-3} \cmidrule(r){4-5} \vspace{-2ex}\\
    \vspace{-0.5ex}
     Method  & 50D & 100D & 50D & 100D \\
    \midrule
    \S\       &
    $0.503\pm0.078$ &
    $0.283\pm0.052$ &
    $0.690\pm0.054$ &
    $0.411\pm0.048$\\
    \SQM\     &
    $0.507\pm0.086$ &
    $0.218\pm0.083$ &
    $0.548\pm0.193$ &
    $0.104\pm0.157$\\
    \STR\     &
    $0.677\pm0.030$ &
    $0.486\pm0.030$ &
    $0.752\pm0.020$ &
    $0.535\pm0.061$\\
    \C\       &
    $0.744\pm0.077$ &
    $0.487\pm0.067$ &
    $0.700\pm0.118$ &
    $0.377\pm0.142$\\
    \IXO\     &
    $\boldsymbol{0.803\pm0.031}$ &
    $\boldsymbol{0.644\pm0.026}$ &
    $\boldsymbol{0.831\pm0.038}$ &
    $\boldsymbol{0.729\pm0.027}$\\
    \IXA\     &
    $\boldsymbol{0.803\pm0.016}$ &
    $\boldsymbol{0.659\pm0.023}$ &
    $\boldsymbol{0.857\pm0.030}$ &
    $\boldsymbol{0.726\pm0.039}$\\
    \IXATR\   &
    $\boldsymbol{0.800\pm0.038}$ &
    $\boldsymbol{0.693\pm0.016}$ &
    $\boldsymbol{0.844\pm0.027}$ &
    $\boldsymbol{0.720\pm0.031}$\\
    \bottomrule
  \end{tabular}
  \vspace{-2ex}
\end{table}

\begin{figure}[t]
    \centering
    \includegraphics[width=0.348\textwidth]{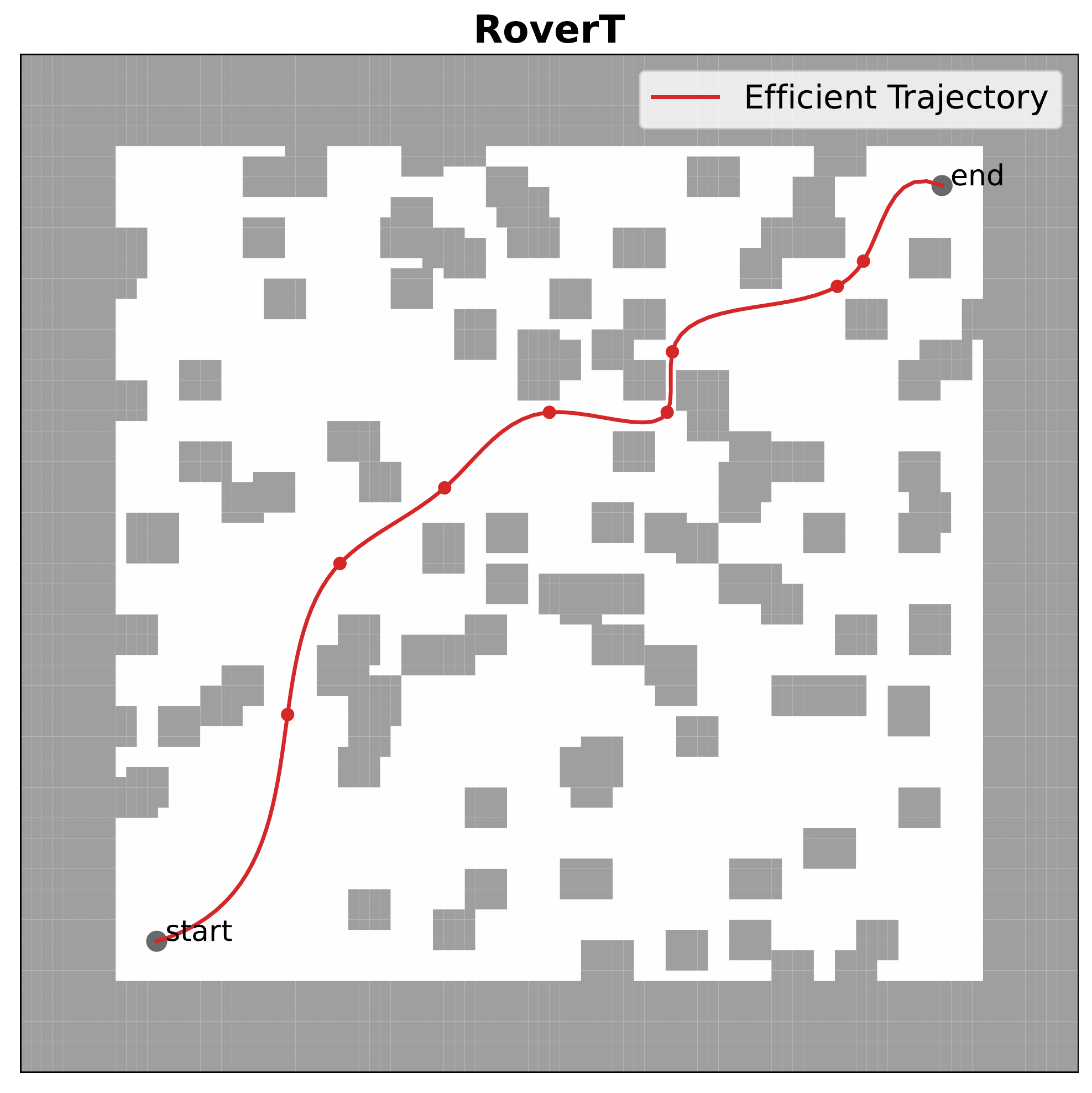} %
    \includegraphics[width=0.44\textwidth]{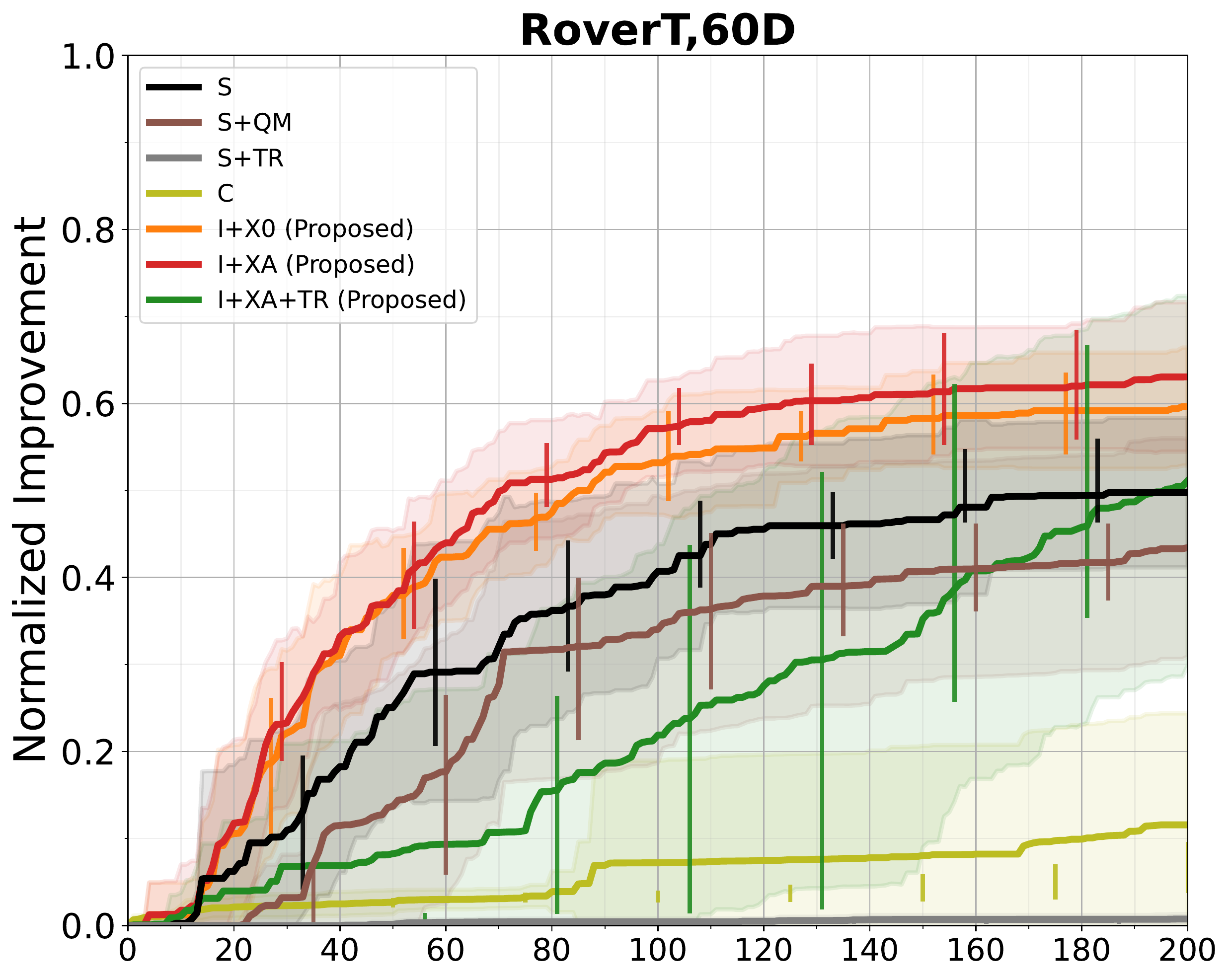}
    \vspace{-2.5ex}
    \caption{Map layout of the rover trajectory problem (left) and respective optimization results (right).\ Additional results and a more detailed analysis can be found in Appendix~\ref{app:rover}.\ Solid curves and shaded regions represent the mean and standard deviation of the normalized improvement, computed over $10$ trials with different initial conditions.\ Solid vertical lines indicate the interquartile range.\ Abbreviations: Stationary (\S), Cylindrical (\C) and Informative (\I) covariances; Quadratic Mean (\QM); Origin (\XO) and Adaptive (\XA) anchors; acquisitions within Trust Region (\TR).}
    \label{fig:application_rovert}
    \vspace{-2ex}
\end{figure}

\begin{figure}[t]
    \centering
    \includegraphics[width=0.44\textwidth]{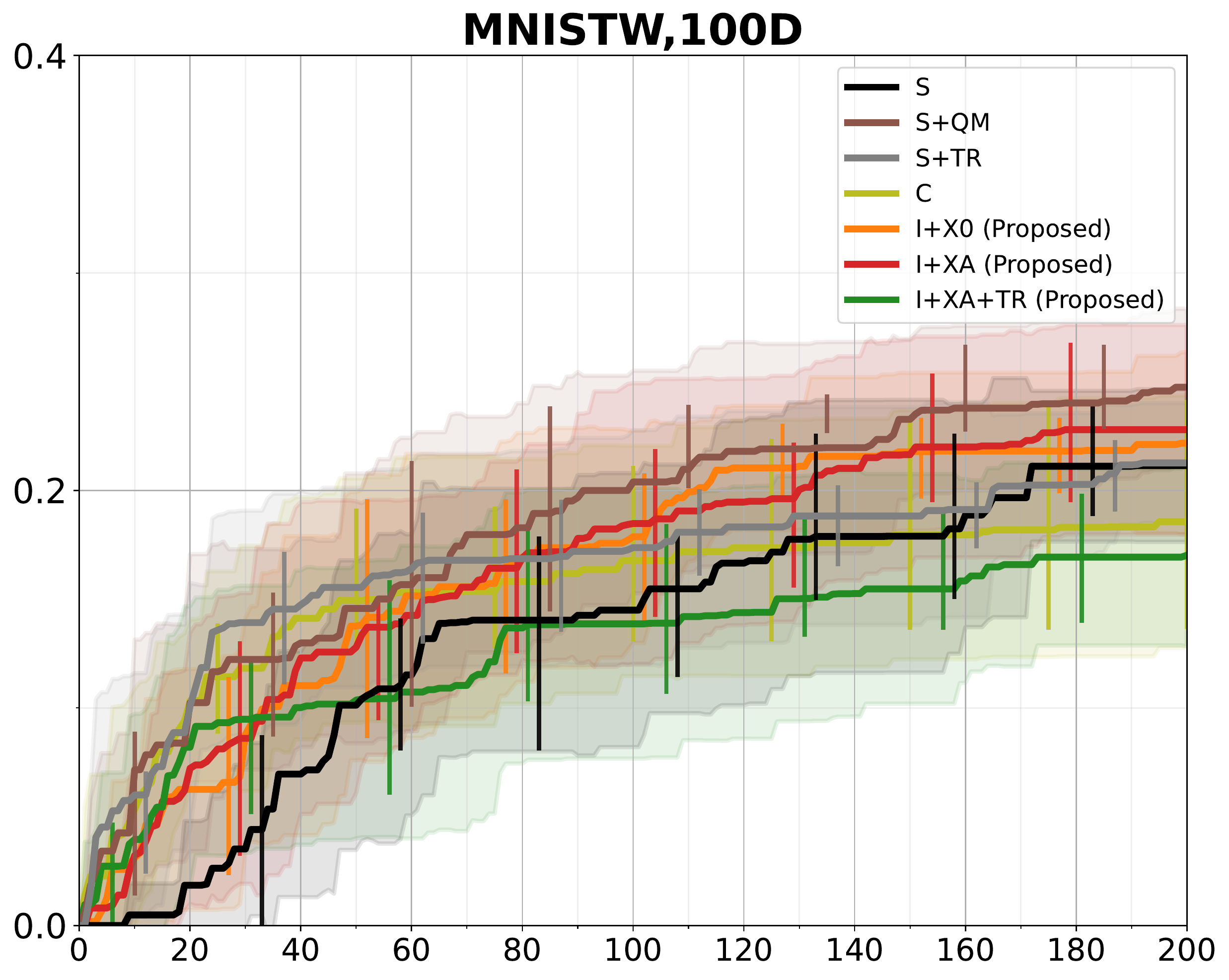}%
    \includegraphics[width=0.44\textwidth]{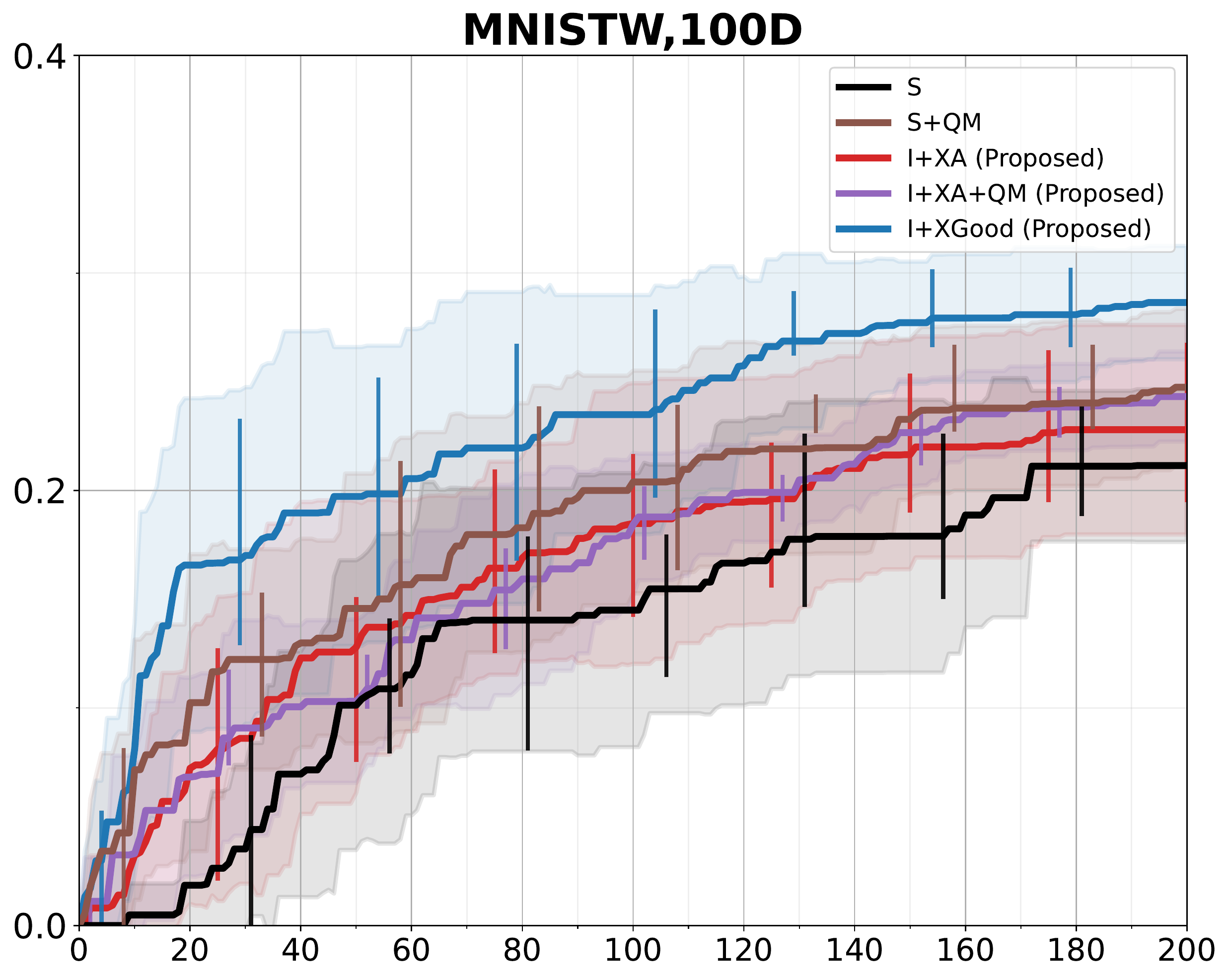}
    \vspace{-2.5ex}
    \caption{Bayesian optimization of $100$ parameters of a neural network trained and tested on MNIST.\ Solid curves and shaded regions represent the mean and standard deviation of the normalized improvement, computed over $10$ trials with different initial conditions.\ Solid vertical lines indicate the interquartile range. Abbreviations: Stationary (\S), Cylindrical (\C) and Informative (\I) covariances; Quadratic Mean (\QM); Origin (\XO), Good (\XGood) and Adaptive (\XA) anchors; acquisitions within Trust Region (\TR).}
    \label{fig:application_mnistw}
    \vspace{-3ex}
\end{figure}

\subsection{Applications}
\label{sec:applications}
\vspace{-1ex}

\textbf{Rover trajectory planning} Recent studies have used the rover trajectory (RoverT) problem to showcase BO \citep{BOHDADDWang18,TURBOEriksson19, BOHDLEEriksson21}.\ The goal is to optimize a trajectory such that the rover satisfies the target endpoints, $\vx_\mathrm{start}$ and $\vx_\mathrm{end}$, while avoiding collisions with objects, as depicted in \Figref{fig:application_rovert} (left).\ The trajectory is given by a B-spline that is fitted to $30$ $2$-dimensional points, resulting in a $60$-dimensional optimization problem.\ A more detailed description and analysis is provided in Appendix~\ref{app:rover}.\ As we show in \Figref{fig:application_rovert} (right), \IXO\ and \IXA, which use an informative covariance, are the best-performing methods.\ We also observe that the performance of \S\TR\ and \IXA\TR\ is significantly worse than that of \S\ and \IXA, respectively.\ The reason may be that points in a trust region do not significantly improve upon the incumbent solution.\ For optimal performance, the endpoints must match the two targets because deviation incurs a significant loss.\ However, this may not be feasible when using a small box centered at a suboptimal solution, e.g.\ $\vx_\mathrm{best} = \bm 0$.\ Consecutive failures decrease the size of trust regions, potentially resulting in even slower progress.

\textbf{Neural network optimization} In this application, we optimize a neural network layer, as proposed by \citet{BOCKOh18}.\ In particular, we perform BO of $100$ parameters of a two-layer fully-connected network trained and tested on the MNIST dataset \citep{MNIST98}.\ The architecture is $784 \xrightarrow{\mathbf{W}_1, \mathbf{b}_1} N_\mathrm{hidden} \xrightarrow{\mathbf{W}_2, \mathbf{b}_2} 10$, with weights $\mathbf{W}$, biases $\mathbf{b}$, $N_\mathrm{hidden} = 10$ and ReLU activations.\ The model is trained for $10$ epochs with a batch size of $512$ and evaluated on the test set using the negative log-likelihood loss.\ The weights $\mathbf{W}_2$ are found by BO using the test loss, while the remaining parameters are learned on the training set with Adam \citep{ADAMKingma14}.\ As pointed by \citet{BOCKOh18}, the goal is not to find solutions that generalize well, but rather to evaluate the ability of BO to find solutions that perform well on the test set by minimizing the respective loss.\ The landscape is multimodal, and points at the boundary can yield relatively good solutions. 

Results are shown in \Figref{fig:application_mnistw}.\ While \SQM\ appears to perform slightly better than \IXO\ and \IXA, models with an informative covariance function do not preclude the use of an informative mean function.\ For instance, we observe that \IXA\QM\ can improve upon \IXA\ toward the end, achieving similar average performance to that of \SQM.\ Relatedly, in Appendix~\ref{app:add_res}, we show that \IXA\QM\ outperforms both \SQM\ and \IXA\ on Styblinsky-Tang, demonstrating that informative mean and covariance functions are complementary.\ The best-performing method is \I\XGood, which assumes strong prior information by using a fixed anchor placed at a solution found by \SQM.\ This highlights the value of high-quality information and that it can be incorporated into informative GP priors, without modifying the acquisition process.
\vspace{-1ex}

\subsection{Limitations}
\label{sec:limitations_icf}
\vspace{-1ex}

Despite the overall superior performance shown by methods with informative covariance functions, there is room for improvement that is left for future work.\ For instance, as dimensionality increases, the weighted Euclidean distance cannot differentiate points that are far only in a few dimensions.\ This can lead to difficulties in addressing the boundary issue in hypercubes.\ Potential remedies for this problem may include different parametrizations and distances based on the max-norm.\ Moreover, there is no mechanism in place that prevents greedily-chosen anchors from being close to the boundary.\ Since \IXA\ relies on empirical priors, it can be used without access to reliable prior information.\ However, as shown in \Secref{sec:applications} and Appendix~\ref{app:add_res}, methods that include reliable information can significantly outperform \IXA.

In \Secref{sec:icf}, we provide formulas for covariance functions with multiple anchor points to demonstrate that the proposed methodology can in theory capture knowledge about the objective that goes beyond a single anchor.\ In our experiments, we focus on methods with a single anchor for several reasons.\ First, we wanted to show that higher sample efficiency is possible with simple design choices.\ Furthermore, baselines such as \SQM, \S\TR\ and \C\ implicitly assume the existence of a single anchor.\ By restricting ourselves to this case, the comparisons are fair, and we can assess the robustness of the different methods to anchor misspecification.

Additionally, in sequential myopic BO, which is the setup considered in this paper, it suffices to focus on a region that includes an optimum.\ The correct identification of such a region may be difficult in multimodal objectives, but as we have shown, the anchor can be dynamically adjusted.\ One option, without having to specify simultaneous anchors, is to maintain posterior beliefs about promising points and sample them at each step.\ If the anchors are not known, a similar strategy to that proposed by \citet{TURBOEriksson19} to maintain multiple trust regions is also a possibility.\ Other options include setting the anchors after performing cluster analysis and learning them by empirical Bayes.\ In this context, a different set of baselines would also be more appropriate, including methods with multiple trust regions and quadratic mean functions with multiple centers.\ Moreover, model selection can be used to automatically determine the number and location of anchors (see e.g.\ Appendix~\ref{app:saas}).\ A careful treatment of informative covariance functions with multiple anchors requires substantial analysis and testing, and we believe this is best left for future work.
\vspace{-0.5ex}

\section{Related Work}
\label{sec:related_work}
\vspace{-1ex}

\textbf{Second-order nonstationarity}
Covariance functions with spatially-varying lengthscales were initially explored by \citet{NSCGibbs98} and \citet{NSCPaciorek06}, and augmented with spatially-varying prior (and noise) variance by \citet{NSCHeinonen16} for GP regression.\ These models are derived from basis function expansions or process convolution \citep{NSCRisser16}, wherein the covariance function is obtained via integral transform of a base stochastic process with a kernel function, which may also depend on auxiliary mixing variables \citep[e.g.][]{NSCWang20}.\ In contrast, we introduce spatially-varying lengthscales via input warping, resulting in a simpler expression that does not require extra factors to ensure positive definiteness.\ Additionally, these other models place GP priors on the spatially-varying parameters and infer the respective latent functions by MCMC, being computationally expensive for high-dimensional parameter spaces.\ Instead, we use pointwise specifications of nonstationary effects, governed by semiparametric functions whose shape and parameters can be elicited from an expert or learned by empirical Bayes.\ Importantly, while these works focus on the estimation of the covariance function for regression purposes, we leverage second-order nonstationarity to incorporate information about promising points for optimization.

In the context of optimization, input warping was first studied by \citet{BOWARPSnoek14}, focusing on monotonic input deformations via CDFs, specifically the Beta distribution.\ The motivation follows from a regression perspective, as the resulting GP is capable of approximating more realistic objective functions.\ More recently, \citet{BOCKOh18} proposed cylindrical covariance functions for optimization, outperforming the method by \citet{BOWARPSnoek14}.\ In this case, after applying the cylindrical transformation, the radius is warped by the Kumaraswamy distribution, further expanding the innermost region of the search space, which is assumed to contain the optimal solution.\ A different methodology was explored by \citet{NSCMartinez15}, who used linear combinations of local and global stationary covariance functions with input-dependent weights to induce spatially-varying lengthscales (piecewise constant).\ While the motivation is that the local assignment of shorter lengthscales promotes local exploration, the local component is not guaranteed to learn shorter lengthscales because both are initialized with the same hyperpriors.\ None of these works provide a regret analysis or consider spatially-varying prior (co)variances in their models.

\textbf{High-dimensional Bayesian optimization} To tackle the challenges presented by high-dimensional spaces, previous research has relied on certain structural assumptions \citep{BOHDBinois22}, namely low effective dimensionality \citep[e.g.][]{BOHDRELetham20, BOHDLEEriksson21} and additivity \citep[e.g.][]{BOHDADDKandasamy15, BOHDADDGardner17, BOHDADDRolland18}.\ Space-partitioning schemes and trust regions have also been proposed \citep[e.g.][]{BLOSSOMMcleod18, TURBOEriksson19,  MCTSWang20, TREGODiouane21}.\ Notably, these methods employ stationary covariance functions. Our work complements these by introducing more informative priors (see e.g.\ Appendices~\ref{app:add_res} and \ref{app:saas}). Cylindrical covariance functions \citep{BOCKOh18} can also be augmented with spatially-varying (co)variances and anchors can be defined in the transformed space.

\textbf{Priors over the optimum} There have been limited attempts to incorporate beliefs about optimal solutions into BO. \citet{PrOSouza21} combined predefined prior distributions with the tree-structured Parzen estimator approach by \citet{TPEBergstra11}, which estimates the input given the output, as opposed to the standard GP approach of modeling the output given the input. By design, the former is restricted to the Expected Improvement acquisition function.\ \citet{PrORamachandran20} explored input warping via predefined CDFs that encode prior beliefs.\ More recently, \citet{PrOHvarfner22} proposed belief-augmented acquisition functions.\ As we show in Appendix~\ref{app:belief_augmented_af}, these functions and informative GP priors are complementary.
\vspace{-1ex}

\section{Conclusion}
\label{sec:conclusion}
\vspace{-1ex}

This work proposes informative covariance functions for Bayesian optimization, leveraging nonstationarity to incorporate input-dependent information.\ Our methodology relies on priors with spatially-varying parameters to express a priori preferences for certain regions and to promote multiscale exploration.\ These priors can more efficiently describe a wider class of objective functions and result in improved worst-case performance.

Our experiments showed that the proposed covariance functions can significantly increase sample efficiency, even under weak prior information, challenging the prevalent use of stationary models for optimization.\ Additionally,
we showed that our work complements existing methodologies, including models with informative mean functions, trust-region optimization and belief-augmented acquisition functions.\ Despite our focus on continuous search spaces, we believe that this work can be extended to other types, such as discrete and mixed spaces, by adopting appropriate priors.\ Our methodology can also accommodate multiple anchors and be combined with other methods, relying for instance on additivity and low effective dimensionality.\ We recognize that priors over the possible locations of the optimum can be defined in transformed, possibly lower-dimensional spaces, but these details are left for future research.\ Overall, we see this work as a contribution toward the wider adoption of scalable nonstationary and informative models for optimization.

\bibliography{main}
\bibliographystyle{tmlr}
\newpage
\appendix
\section{Instantaneous Regret Bounds}
\label{app:iregret_bounds}
\begin{lemma}\label{lemma:iregret}Assume the interval $|f(\vx) - m_n(\vx)| \le \beta_n \sigma_n(\vx)$ holds, $\forall\vx\in\sX$ and that global optima of acquisition functions can be found.\ Then, for popular acquisition functions, namely LCB and EI, and any covariance function, possibly nonstationary, the worst-case instantaneous regret depends on the width of the interval, being proportional to the posterior predictive standard deviation, $r_{n+1} \propto \sigma_n(\vx_{n+1})$, where $\vx_{n+1}$ is the acquisition at step $n+1$.
\end{lemma}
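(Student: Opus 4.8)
The plan is to convert the confidence-interval hypothesis into a two-sided sandwich on the objective, $m_n(\vx) - \beta_n\sigma_n(\vx) \le f(\vx) \le m_n(\vx) + \beta_n\sigma_n(\vx)$ for all $\vx \in \sX$, and to combine it with the defining optimality of the acquired point $\vx_{n+1}$ under each rule. The target is a bound of the form $r_{n+1} \le c\,\beta_n\sigma_n(\vx_{n+1})$ for an absolute constant $c$, which exhibits the claimed proportionality to the predictive standard deviation. I would treat LCB and EI separately, since the former is linear in $(m_n,\sigma_n)$ and the latter is not.

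For LCB, because $\vx_{n+1}$ minimizes $\mathrm{LCB}(\vx) = m_n(\vx) - \beta_n\sigma_n(\vx)$, the lower side of the sandwich at the global minimizer gives $f(\vx^\star) \ge m_n(\vx^\star) - \beta_n\sigma_n(\vx^\star) = \mathrm{LCB}(\vx^\star) \ge \mathrm{LCB}(\vx_{n+1})$, while the upper side at the acquisition gives $f(\vx_{n+1}) \le m_n(\vx_{n+1}) + \beta_n\sigma_n(\vx_{n+1})$. Subtracting the two inequalities cancels $m_n(\vx_{n+1})$ and yields
\begin{align}
r_{n+1} = f(\vx_{n+1}) - f(\vx^\star) \le 2\beta_n\sigma_n(\vx_{n+1}),
\end{align}
which is precisely the worst-case width of the interval at $\vx_{n+1}$ and is proportional to $\sigma_n(\vx_{n+1})$. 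Notably, this step never invokes stationarity, so it holds for any valid (possibly nonstationary) covariance, as the statement requires.

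For EI the difficulty is that the acquisition is a nonlinear transform $\mathrm{EI}(\vx) = \sigma_n(\vx)\tau(z(\vx))$ with $\tau(z) = z F_{\mc N}(z) + \mc N(z;0,1)$, so I would first record the monotonicity $\tau'(z) = F_{\mc N}(z) > 0$ and the two-sided estimate $\max(z,0) \le \tau(z) \le \max(z,0) + \mc N(0;0,1)$, whose upper half follows from a Mills-ratio inequality. Evaluating the lower estimate at $\vx^\star$ gives $\mathrm{EI}(\vx^\star) \ge (f(\vx_\mathrm{best}) - m_n(\vx^\star))_+$, which together with the lower sandwich $m_n(\vx^\star) \le f(\vx^\star) + \beta_n\sigma_n(\vx^\star)$ lower-bounds the EI attainable at the optimum in terms of the current simple regret. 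Since $\vx_{n+1}$ maximizes EI, $\mathrm{EI}(\vx_{n+1}) \ge \mathrm{EI}(\vx^\star)$; applying the upper estimate of $\tau$ to the left side converts the maximized acquisition value into $(f(\vx_\mathrm{best}) - m_n(\vx_{n+1}))_+ + \mc N(0;0,1)\,\sigma_n(\vx_{n+1})$, and feeding this back through the upper sandwich $f(\vx_{n+1}) \le m_n(\vx_{n+1}) + \beta_n\sigma_n(\vx_{n+1})$ isolates $r_{n+1}$ as a multiple of $\sigma_n(\vx_{n+1})$, up to the constants from $\tau$, again with no appeal to stationarity.

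The main obstacle is the EI case: unlike LCB, the incumbent $f(\vx_\mathrm{best})$ couples the bound across steps and the truncation $(\cdot)_+$ must be controlled on both the $\vx^\star$ and $\vx_{n+1}$ sides, so the delicate part is selecting the two-sided $\tau$ estimates sharply enough that the additive $\mc N(0;0,1)\,\sigma_n$ term and the improvement term collapse into a single clean multiple of $\sigma_n(\vx_{n+1})$. The LCB argument, by contrast, is essentially immediate from the sandwich and the minimizing property, and it already pins down the constant of proportionality at $2\beta_n$.
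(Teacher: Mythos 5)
Your LCB argument is exactly the paper's: the sandwich at $\vx^\star$ gives $f(\vx^\star)\ge \mathrm{LCB}(\vx^\star)\ge \mathrm{LCB}(\vx_{n+1})$ by optimality, the sandwich at $\vx_{n+1}$ closes the chain, and the constant $2\beta_n$ is right. No issues there, and you correctly note that stationarity is never used.

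The EI half, however, has two genuine gaps. First, your lower bound at the optimum goes through the \emph{predicted} improvement, $\mathrm{EI}(\vx^\star)\ge (f(\vx_\mathrm{best})-m_n(\vx^\star))_+$, and you then need the sandwich at $\vx^\star$ to replace $m_n(\vx^\star)$ by $f(\vx^\star)$; this leaves a residual $\beta_n\sigma_n(\vx^\star)$ evaluated at $\vx^\star$, not at $\vx_{n+1}$, so the resulting bound is not proportional to $\sigma_n(\vx_{n+1})$ alone. The paper avoids this by invoking Wang \& de Freitas' Lemma 9, $B_n\,\mathrm{I}_n(\vx)\le \mathrm{EI}(\vx)$ with $\mathrm{I}_n(\vx)=\max(0,m_n^--f(\vx))$ the \emph{true} improvement and $B_n=\tau(-\beta_n)/\tau(\beta_n)$: the confidence-width penalty is absorbed multiplicatively into $B_n$ rather than appearing additively as $\beta_n\sigma_n(\vx^\star)$, so that $m_n^--f(\vx^\star)\le \mathrm{EI}(\vx^\star)/B_n\le \mathrm{EI}(\vx_{n+1})/B_n\le \sigma_n(\vx_{n+1})/B_n$, the last step using $\tau(z(\vx_{n+1}))\le\tau(0)$ since $z(\vx_{n+1})\le 0$ under the $m_n^-$ convention. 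Second, and more seriously, after applying the upper sandwich at $\vx_{n+1}$ you are left with the posterior-mean gap $m_n(\vx_{n+1})-m_n^-$ (equivalently $m_n(\vx_{n+1})-f(\vx_\mathrm{best})$ in your convention), and nothing in your outline controls it; it does not ``collapse'' on its own. The paper bounds it by $D_n(\vx_{n+1})\sigma_n(\vx_{n+1})$ with $D_n(\vx_{n+1})=\sqrt{2\log(\sigma_n(\vx_{n+1})/\sigma_n(\vx_n^-))}$ via Wang \& de Freitas' Lemma 10, which uses the EI-optimality of $\vx_{n+1}$ a second time (comparing $\mathrm{EI}(\vx_{n+1})$ against $\mathrm{EI}(\vx_n^-)=\tau(0)\,\sigma_n(\vx_n^-)$ at the posterior-mean minimizer $\vx_n^-$) together with a positive lower bound on $\sigma_n(\vx_n^-)$ to obtain a finite constant $D_n^+$. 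You flag this as ``the delicate part,'' but it is precisely the step that requires an argument; without it the EI case is not proved.
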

\begin{proof} 
Let $\vx^\star$ be the minimizer of $f(\vx)$.\ Then, based on \citep[Lemma 5.2]{UCBSrinivas09}, we have the following instantaneous regret bound for LCB,
\begin{align}
r_{n+1} &= f(\vx_{n+1}) - f(\vx^\star)\\
&\le f(\vx_{n+1}) - m_n(\vx^\star) + \beta_n  \sigma_n(\vx^\star)\\
&= f(\vx_{n+1}) - \mathrm{LCB}(\vx^\star)\\
&\le f(\vx_{n+1}) - \mathrm{LCB}(\vx_{n+1})\\
&= f(\vx_{n+1}) - m_n(\vx_{n+1}) + \beta_n  \sigma_n(\vx_{n+1})\\
\label{eq:instantaneous_regret_bound}
&\le 2 \beta_n \sigma_n(\vx_{n+1}).
\end{align}
Regarding EI, we know $B_n \mathrm{I}_n(\vx) \le \mathrm{EI}(\vx) \le \mathrm{I}_n(\vx) + (\beta_n + 1) \sigma_n(\vx)
$, by \citep[Lemma 9]{EIWang14}, with $B_n = \tau(-\beta_n) / \tau(\beta_n) \le 1$, and $\tau(z) = z F_{\mathcal N}(z) + \mathcal N(z; 0, 1)$,  because $\tau$ is nondecreasing, $\tau'(z) = F_{\mathcal N}(z) \in [0, 1]$.\ Also, recall that the improvement is given by $\mathrm{I}_n(\vx) = \max(0, m_n^- - f(\vx))$ with $m_n^- = \min_{\vx} m_n(\vx)$. Now, we derive a simpler, tighter upper bound on EI,
\begin{align}
\mathrm{EI}(\vx) &= \sigma_n(\vx) \tau(z(\vx))\\
&\le  \sigma_n(\vx) \tau(0)\\
&= \frac{1}{\sqrt{2 \pi}} \sigma_n(\vx) < \sigma_n(\vx),
\end{align}
where the first inequality follows from $z(\vx) = (m_n^- - m_n(\vx))/ \sigma_n(\vx)  \le 0$ and nondecreasing $\tau$. Let $\vx_n^- = \argmin_{\vx} m_n(\vx)$. Then, the instantaneous regret bound can be written as
\begin{align}
r_{n+1} &= f(\vx_{n+1}) - f(\vx^\star)\\
    &= (f(\vx_{n+1}) - m_n^-) + (m_n^- - f(\vx^\star))\\
    &\le (f(\vx_{n+1}) - m_n^-) + \mathrm{I}_n(\vx^\star)\\
    &\le (f(\vx_{n+1}) - m_n^-) + \frac{1}{B_n} \mathrm{EI}(\vx^\star)\\
    &\le (f(\vx_{n+1}) - m_n^-) + \frac{1}{B_n} \mathrm{EI}(\vx_{n+1})\\
    &\le (f(\vx_{n+1}) - m_n^-) + \frac{1}{B_n} \sigma_n(\vx_{n+1})\\
    \label{eq:ei_bound_mean}
    &\le (m_n(\vx_{n+1}) - m_n^-) + \mpr{\beta_n + \frac{1}{B_n}} \sigma_n(\vx_{n+1})\\
    &\le \mpr{D_n(\vx_{n+1}) + \beta_n + \frac{1}{B_n}} \sigma_n(\vx_{n+1})\\
    &\le \mpr{D_n^+ + \beta_n + \frac{1}{B_n}} \sigma_n(\vx_{n+1}),
\end{align}
where, in \Eqref{eq:ei_bound_mean}, the difference is bounded as $\vert m_n(\vx_{n+1}) - m_n^-\vert \le D_n(\vx_{n+1}) \sigma_n(\vx_{n+1})$, with $D_n(\vx_{n+1}) = \sqrt{2 \log(\sigma_n(\vx_{n+1})/\sigma_n(\vx_n^-))}$, by \citep[Lemma 10]{EIWang14}. From the same lemma, we also know that $\sigma_n(\vx_n^-)$ can be lower bounded by a positive value, and that there is a positive constant upper bounding $\sigma_n(\vx_{n+1})$ by construction. Thus, for any covariance function, possibly nonstationary, there exists a positive constant $D_n^+$ such that $D_n^+ \ge D_n(\vx_{n+1})$, which yields the result.
\end{proof}
\clearpage

\section{Implementation Details}
\label{app:impl}
\subsection{Gaussian Processes}
\label{app:impl_gp}
We used GPyTorch \citep{GPYTORCH} to implement all GP models in Python, including the proposed informative (\I) covariance functions, cylindrical (\C) covariance functions \citep{BOCKOh18} and axis-aligned quadratic mean (\QM) \citep{BOHDSnoek15}.\ Unlike the version provided by GPyTorch, our implementation of \C\ follows the original training and prediction routines, which account for the special treatment of the origin, as discussed in more detail in Appendix~\ref{app:cyl_covar}.

In order to ensure fair comparisons, all models are optimized by L-BFGS-B \citep{LBFGSBZhu97} with a maximum of $1000$ iterations per new acquisition and other options set to default values.\ In addition, the stationary components $C\subS$ are Mat\'ern ($\nu=5/2$), equipped with the weighted Euclidean distance (lengthscales $\lambda_d$). Positivity constraints are handled internally by GPyTorch using the softplus transform, except for \C-specific hyperparameters, which use the log transform (Appendix~\ref{app:cyl_covar}). General hyperparameters are given uninformative priors with bounds from \citep{BOCKOh18}.\ These include the prior variance $\sigma_0^2 \sim \mathcal U(e^{-12}, e^{20})$ and lengthscales $\lambda_d \sim \mathcal U(e^{-12}, 2\sqrt{D})$, $d\in\{1\dots D\}$, where the upper bound corresponds to the maximum Euclidean distance in the centered hypercube $\sX = [-1, 1]^D$. Table~\ref{tab:inf_settings} summarizes the default settings of \I. For noise-free objectives, the noise hyperparameter is set to a fixed value $\sigma_y^2=10^{-3}$.

By default, models are characterized by an uninformative prior mean function with a uniformly-distributed constant $b \sim \mathcal U(y_\mathrm{min}, y_\mathrm{max})$, where the bounds correspond to the minimum and maximum observed values. Variants \QM\ use an axis-aligned quadratic mean function $m(\vx) = b + \sum_d a_d ([\vx]_d - [\vc_0]_d)^2$, as suggested by \citet{BOHDSnoek15} and \citet[Section 4.4]{BOPHDSwersky17}. The quadratic weights are independent, distributed as $a_d \sim \mathcal{HS}^+(2)$. The half-Horseshoe is a spike-and-slab hyperprior with a spike at $0$ and a slab on the positive real line, only allowing convex quadratic functions.\ Quadratic mean functions with adaptive, greedily-chosen centers $\vc_0 = \vx_\mathrm{best}$ were tested, but no significant or consistent improvement compared to the default center $\vc_0 = \bm 0$ was observed. Hence, results using the former (\QM\XA) are only shown in Appendix~\ref{app:add_res}.

\begin{table}[h]
  \caption{Default settings of informative covariance functions.}
  \label{tab:inf_settings}
  \smallskip
  \centering
  \begin{tabular}{ll}
    \toprule
    Quantity & Setting\\
    \midrule
    Stationary corr. $C\subS$ & Mat\'ern ($\nu=5/2$, $\sigma_0^2 =1$) \\
    Prior variance $\sigma_p^2$ & $\mc U(e^{-12}, e^{20})$ \\
    Lengthscales $\lambda_d$ & $\mc U(e^{-12}, 2\sqrt{D})$ \\
    Distances $d_\sigma,\,d_\lambda$  & Tied $d_0$ \\
    Distance $d_0$ & Weighted Euclidean $d_\textrm{WE}$ (shared $\lambda_d$) \\
    Kernels $k_\sigma,\,k_\lambda$ &
    Tied $k_0$ \\
    Kernel $k_0$ & Gaussian \\
    Ratios $r_\sigma,\,r_\lambda$   & Tied $r_0$\\
    Ratio $r_0$ & $\textrm{Kumar}(3.164, 1000)$,\\
    & Sensitivity analysis (Appendix~\ref{app:sensitivity})\\
    \bottomrule
  \end{tabular}
\end{table}

\subsection{Acquisition}
\label{app:impl_acq}
Prior to acquiring new data, an initial evidence set is formed.\ The set $\mathcal D_{n_0}$ includes the origin and $15$ other pseudo-uniformly distributed points, drawn according to a scrambled Sobol sequence \citep{SOBOLOwen98, SOBOLJoe08}.\ This set provides the initial conditions from which initial models are estimated and their performance measured.\ For noise-free objectives, the incumbent solution is the point with the lowest observed value $\vx_\mathrm{best}^{(n_0 + n)} = \argmin_{\vx \in \mathcal D_{n_0 + n}} f(\vx)$, $n_0 = 16$, $n=\{0, \dots, N\}$, $N=200$.

The default acquisition function is the Expected Improvement (EI), which is implemented in BoTorch \citep{BOTORCH}.\ Maximization is performed on the CPU via gradient-based optimization with multiple restarts.\ The $20$ initial points for the optimization are selected from $20010$ candidates, including $10$ points that are chosen heuristically and $20000$ points that are randomly generated.\ The former are obtained by small Gaussian perturbations of the incumbent solution, and the latter are drawn from a scrambled Sobol sequence.\ The main difference between this procedure and the one implemented by \citet{BOCKOh18} is the use of L-BFGS-B via BoTorch, as opposed to Adam \citep{ADAMKingma14}. 

\subsection{Test Functions}
\label{app:impl_fn}
Test functions are set up in a way similar to that used by \citet{BOCKOh18}, with their domains adjusted to the centered hypercube $\sX = [-1, 1]^D$ by applying a linear transformation to the inputs.\ Additionally, original objectives are shifted and scaled to satisfy $f(\vx^\star) = 0$ and $f(\bm 0) = 100$. Since the resulting functions are non-negative, models are then estimated on log-transformed data, after adding a very small offset.\ In general, this leads to improved performance because the distribution of function values becomes more Gaussian-like. Performance metrics are however computed without the log transform.

\textbf{QBranin}
The Branin function is originally evaluated on $[-5, 10] \times [0, 15]$, and has several global optima. This function is modified by adding a quadratic component (last term), transforming it into a bowl-shaped function with only one global optimum,
\begin{align}
\label{eq:qbranin_og}
f(\vx) = \left([\vx]_2 - \frac{5.1}{4\pi^2} [\vx]_1^2 + \frac{5}{\pi} [\vx]_1 - 6\right)^2 + 10 \left(1 - \frac{1}{8\pi}\right) \cos([\vx]_1^2) + 10 + 5  [\vx]_1^2.
\end{align}
Higher-dimensional versions are obtained by additive repetition.

\textbf{Shifted QBranin}
The shifted variants SQBranin and SSQBranin are derived from QBranin by shifting the inputs in the original space by $2$ and $3$, respectively. Additional experiments with these variants are included in Appendix~\ref{app:add_res}. Figure \ref{fig:testf_2dlog_qbranin} provides a $2$D illustration of these functions in the transformed space $[-1, 1]^2$.

\textbf{Rosenbrock}
This function is originally evaluated on the hypercube $[-5, 10]^D$ and is given by
\begin{align}
\label{eq:rosenb_og}
f(\vx) = \sum_{d=1}^{D-1} \left[100 ([\vx]_{d+1} - [\vx]_{d}^2)^2 + ([\vx]_{d} - 1)^2 \right].
\end{align}
Similar to QBranin, the global optimum is close to the origin in the transformed space, $\vx^\star = -0.2 \times \bm 1$. However, as shown in Figure \ref{fig:testf_2dlog_rosenb}, the Rosenbrock is characterized by narrow banana-shaped valleys, making global optimization more challenging.

\textbf{Shifted Rosenbrock}
S35Rosenbrock, S50Rosenbrock, and S65Rosenbrock are three variants of the Rosenbrock function, designed to test the robustness of methods that assume good values near the origin.\ These functions are obtained by shifting the inputs so that the global minima in the transformed space are at $\vx^\star = 0.35 \times \bm 1$, $\vx^\star = 0.50 \times \bm 1$, and $\vx^\star = 0.65 \times \bm 1$. Figure \ref{fig:testf_2dlog_rosenb} provides a $2$D illustration of these functions.

\textbf{Levy}
The original domain is $[-10, 10]^D$ and the function is defined as
\begin{align}
\label{eq:levy_og}
f(\vx) &= \sin^2(\pi w_1) + \sum_{d=1}^{D-1} (w_d -1)^2 \left[1 + 10 \sin^2(\pi w_d + 1)\right]+ (w_i - 1)^2 \left[1+ \sin^2(2\pi w_i)\right],\\
w_d &= 1 + \frac{[\vx]_d - 1}{4}, \qquad i\in\{1,\dots,D\}.
\end{align}
As shown in Figure \ref{fig:testf_2dlog_levy_styb} (left), the Levy function is characterized by strong sinusoidal components. The global optimum is again close to the origin, specifically at $\vx^\star = 0.1 \times \bm 1$ in the transformed space.

\textbf{Styblinski-Tang}
The original domain is $[-5, 5]^D$ and the function is given by
\begin{align}
\label{eq:styb_og}
f(\vx) = \frac{1}{2} \sum_{d=1}^{D} \mpr{[\vx]_{d}^4 - 16 [\vx]_{d}^2 + 5 [\vx]_{d}}.
\end{align}
Joint optimization of the Styblinski-Tang function is difficult because it has exponentially many local modes $2^D - 1$.\ In contrast to QBranin, Rosenbrock and Levy, the global optimum is relatively far from the origin, $\vx^\star \approx -0.58 \times \bm 1$ in the transformed space.\ Figure \ref{fig:testf_2dlog_levy_styb} (right) provides an illustration of this multimodal function.

\begin{figure}[t]
    \centering
    \includegraphics[width=\textwidth]{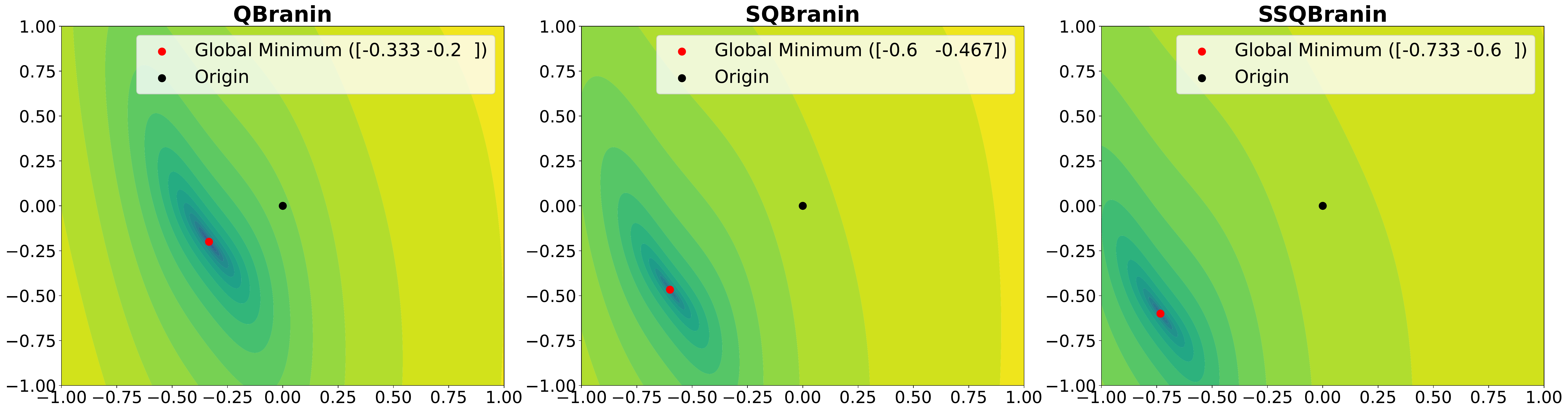}
    \vspace{-4ex}
    \caption{A $2$D slice of QBranin and shifted variants. These test functions are unimodal and bowl-shaped. The purpose of SQBranin and SSQBranin is to test the robustness of methods that assume good values near the origin.}
    \label{fig:testf_2dlog_qbranin}
\end{figure}

\begin{figure}[t]
    \centering
    \includegraphics[width=0.7\textwidth]{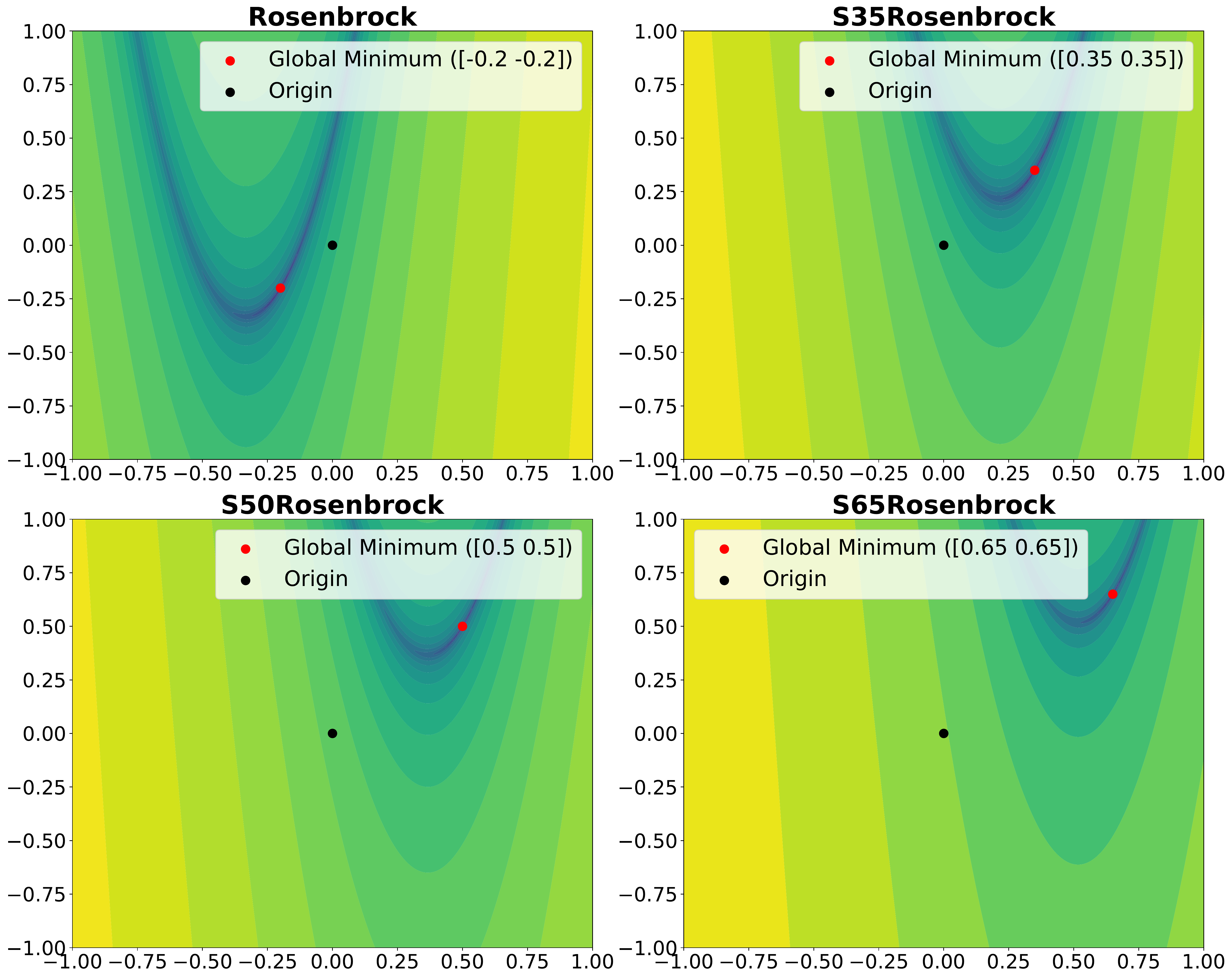}
    \vspace{-2ex}
    \caption{A $2$D slice of Rosenbrock and three shifted variants (S35, S50 and S65).}
    \label{fig:testf_2dlog_rosenb}
\end{figure}

\begin{figure}[t]
    \centering
    \includegraphics[width=0.35\textwidth]{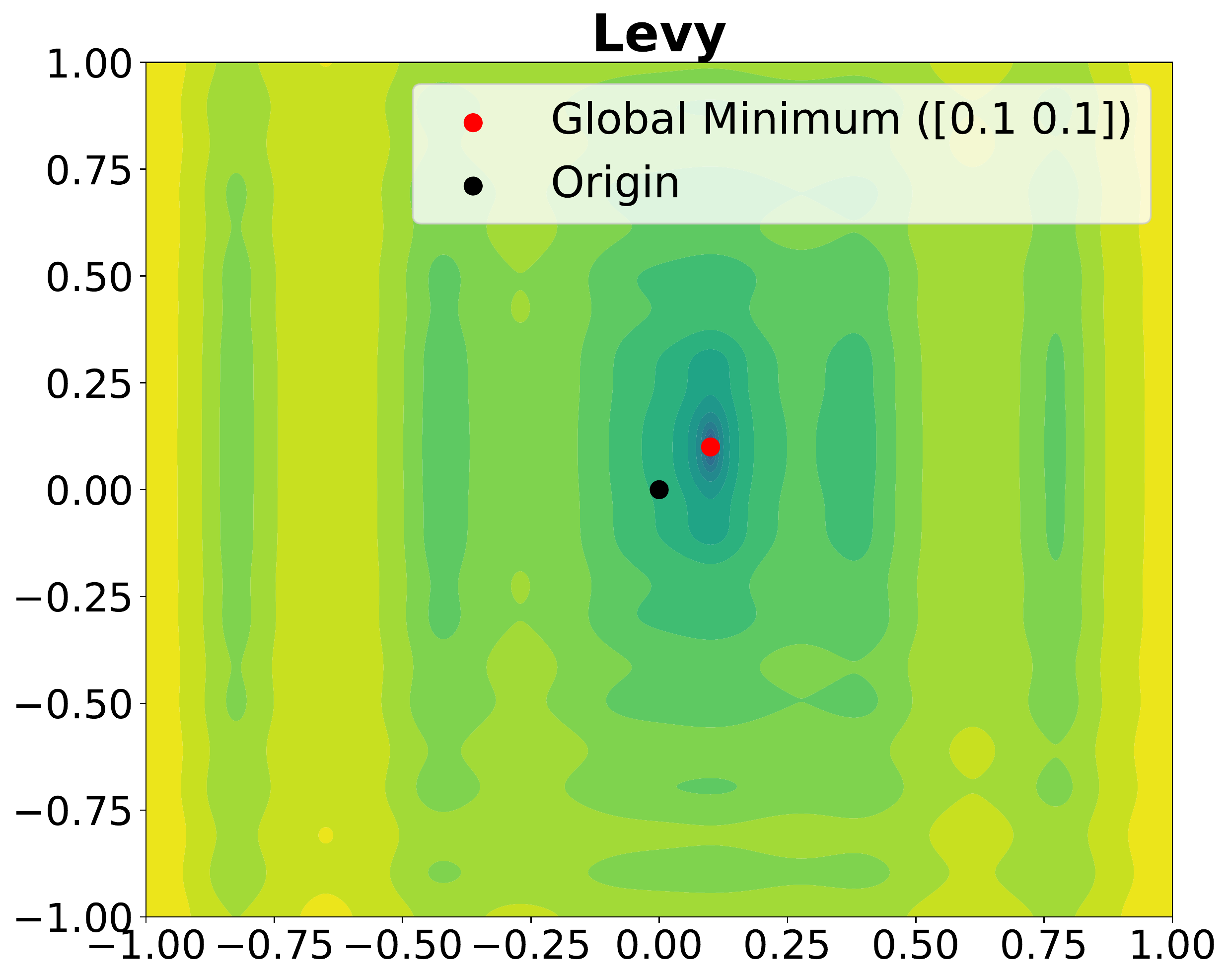}%
    \includegraphics[width=0.35\textwidth]{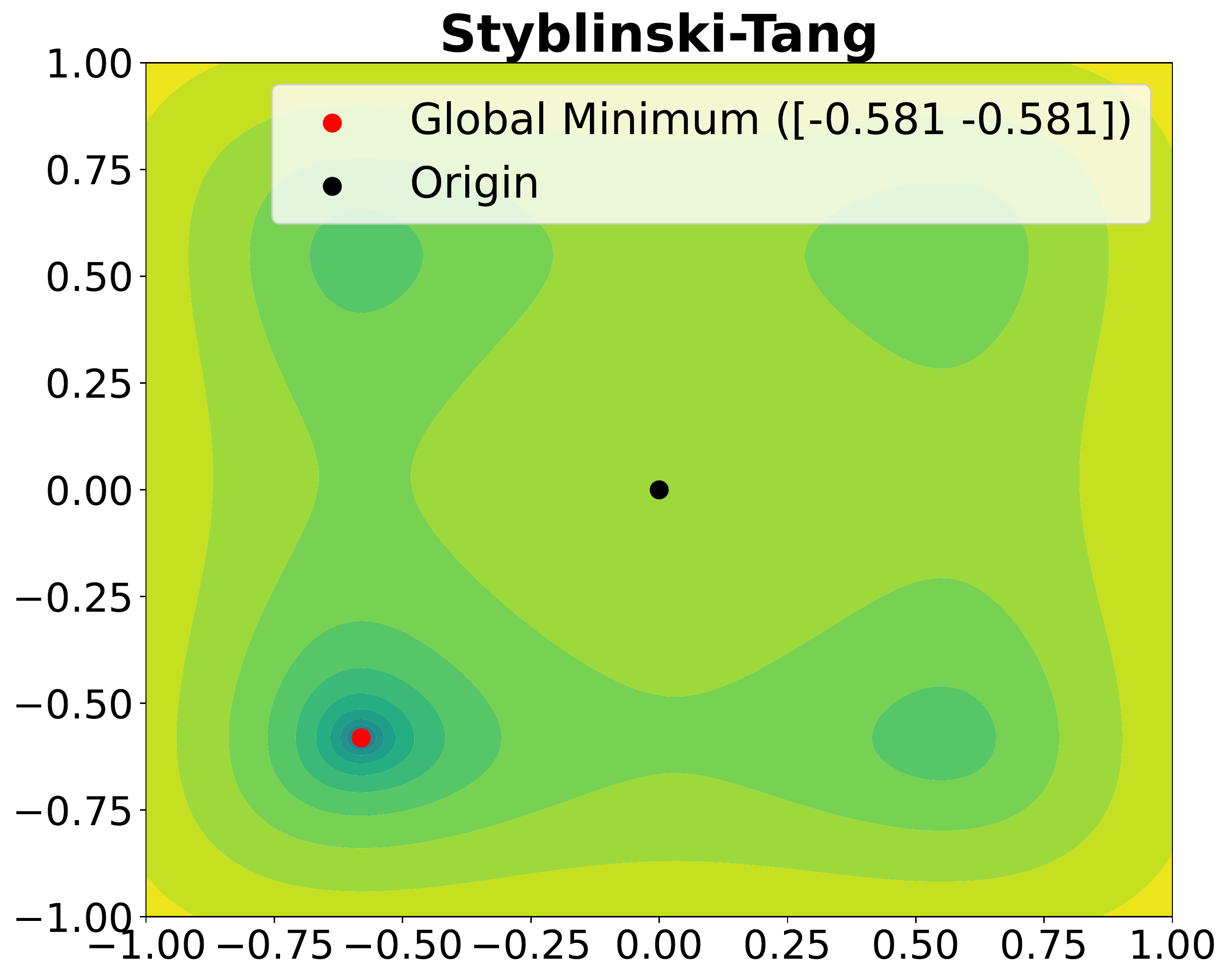}
    \vspace{-2ex}
    \caption{A $2$D slice of Levy and Styblinski-Tang.}
    \label{fig:testf_2dlog_levy_styb}
\end{figure}
\clearpage

\section{Cylindrical Covariance Functions}
\label{app:cyl_covar}
Cylindrical covariance functions \citep{BOCKOh18} aim to address the boundary issue that arises in high-dimensional Bayesian optimization.\ The cylindrical transformation $T(\vx)$ represents a point $\vx\in\sX$ in terms of its radius $r$ and angular components $\va$,
\begin{align}
(r, \va) = T(\vx) = \begin{cases}
               (\norm{\vx}_2, \vx /\norm{\vx}_2), & \mathrm{if}\quad\norm{\vx}_2 \neq 0 \\
               (0, \va_\mathrm{arbitrary}), & \mathrm{if}\quad\norm{\vx}_2 = 0
            \end{cases}.
\end{align}
where $\va_\mathrm{arbitrary}$ is a random unit vector.\ Geometrically, this transformation maps balls of radius $R$ onto the surface of a cylinder of height $R$.\ A spherical shell of width $\delta r$ centered at a point $\vx$ corresponds to a region in the Euclidean space whose volume increases exponentially with the radius $r$.\ As a result, algorithms that aim at equally covering each volume element in the Euclidean space spend exponentially more time at the boundary of the search space.\ In contrast, \citet{BOCKOh18} proposed to spend an equal amount of resources on each volume element in the transformed space.\ Intuitively, the transformation leads to a search in the Euclidean space that ``expands the region near the center while contracting the regions near the boundary'' \citep[Section 3.2]{BOCKOh18}.\ The assumption is that optimal values are more likely to be found near the origin, so it is beneficial to encourage exploration in this region.

Remarkably, this transformation poses a challenge when comparing any non-origin point to the origin, since the latter is represented by an infinite set of points with radius 0.\ \citet{BOCKOh18} proposed using the point in the set that is closest to the point under comparison $\vx$, setting $\va_\mathrm{arbitrary} = \vx /\norm{\vx}_2$.\ However, this solution has significant computational implications, requiring custom inference routines.\ If the origin is included in the training set, the Gram matrix must be recomputed for each test point.\ This issue is mitigated by block matrix inversion, where the block containing all non-origin points can be precomputed and reused.

Cylindrical covariance functions $C_\mathrm{cyl}$ are defined by the product of a 1-dimensional Mat\'ern ($\nu=5/2$), measuring similarity of radii, and a function $C_a$ that compares angular components,
\begin{align}
\label{eq:cyl_angular}
C_a(\va_1, \va_2) = \sum_{p=0}^3 c_p (\va_1\mt \va_2)^p,
\end{align}
where $c_p$ is a normalized non-negative weight, given a log-Normal prior $\log(c_p) \sim \mathcal N(0, 2^2)$.\ To further encourage the expansion of the innermost region, \citet{BOCKOh18} used input warping on the radius component.\ This involves transforming the radius according to the Kumaraswamy cumulative distribution, $F_\mathrm{Kuma}(r) = 1-(1-r^\alpha)^\beta$, with $\alpha \in [0.5, 1]$ and $\beta \in [1, 2]$.\ Both $\alpha$ and $\beta$ are given a spike-and-slab hyperprior on the log scale, with spike at $\log(1)$.\ Figure \ref{fig:kuma_warp} shows possible warpings, including the identity transform obtained with $\alpha=1$ and $\beta=1$.

\begin{figure}[H]
    \centering
    \includegraphics[width=0.5\textwidth]{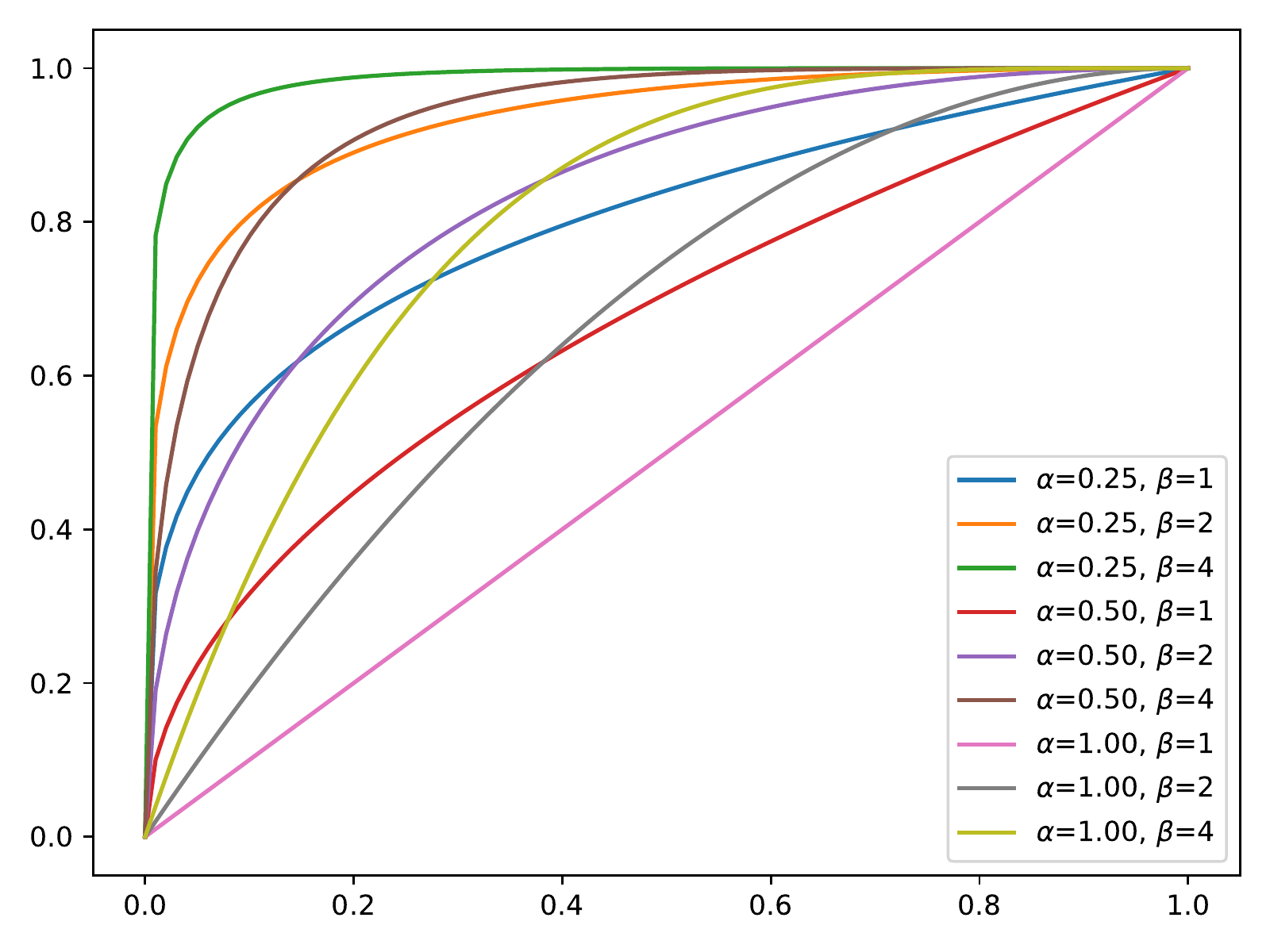}
    \vspace{-3ex}
    \caption{Input warping on the radius.\ Concave transformations expand regions of small radii.}
    \label{fig:kuma_warp}
\end{figure}
\clearpage

\section{Additional Results}
\label{app:add_res}

\subsection{Main Methods}
\label{app:main_methods}

\textbf{S} Bayesian Optimization (BO) with an uninformative Gaussian process (GP) prior, characterized by a constant mean and a stationary Mat\'ern covariance function.

\textbf{S+QM} \S\ with an axis-aligned quadratic mean function \citep{BOHDSnoek15, BOPHDSwersky17}.

\textbf{S+TR} \S\ with acquisitions within an adaptive trust region, i.e., a box centered at the incumbent solution that is shrunk/expanded based on consecutive failures/successes \citep{TURBOEriksson19}. For noise-free objectives, the incumbent solution is the point with the lowest observed value, as described in Appendix~\ref{app:impl_acq}.

\textbf{C} BO with a GP characterized by a constant mean and a cylindrical covariance function \citep{BOCKOh18}.

\textbf{I+X0 (Proposed)} BO with a GP featuring a constant mean and an informative covariance function. A single, fixed anchor is placed at the center (origin) of the search space.

\textbf{I+XA (Proposed)} \IXO\ with an adaptive greedily-chosen anchor, given by the incumbent solution.\ This is the same point as that used in \STR.

\textbf{I+XA+TR (Proposed)} \IXA\ with a trust region, using the same adaptation scheme as \STR.

\subsection{Additional Methods}
\label{app:add_methods}

\textbf{S+QM+XA} Quadratic mean function with a greedily-chosen center, given by the incumbent solution. In most tests, including the shifted objectives in Figures \ref{fig:app_qbranin_improvement} and \ref{fig:app_rosenb_improvement}, \SQM\XA\ was outperformed by \SQM.

\textbf{GS+XA} Informative covariance functions are based on a search model for the optimum, which is given by a mixture distribution (\Eqref{eq:kmix_anchors}).\ For a single anchor, Gaussian kernel and weighted Euclidean distance, the equivalent distribution is a mixture of a uniform and a Gaussian distribution with diagonal covariance matrix.\ In \GSXA, the mean is set to the incumbent solution and the variance is estimated from the data.\ The standard deviation is the median absolute deviation (MAD) of the best $5$\% training points (rounded, with a minimum of $2$), assuming that the coordinates are independently and identically distributed.\ The Gaussian mixing weight is set to $0.9$.\ Other combinations involving a smaller mixing weight ($0.5$), a larger training set ($10$\%) and a scale correction of the MAD estimate were also tested, but overall performed no better than \GSXA.\ In turn, \GSXA\ typically performed worse than \S\ or \C, as shown in Figures \ref{fig:app_qbranin_improvement}, \ref{fig:app_levy_styb_improvement} and \ref{fig:app_rosenb_improvement}.

\textbf{CMA-ES} Covariance Matrix Adaptation Evolution Strategy \citep{CMAESHansen16} is a popular global optimization algorithm and is related to \GSXA, in that it samples from a Gaussian whose mean and covariance matrix are adaptive.\ In terms of implementation, we use pycma \citep{CMAESPYCMAHansen19} with a population size of $10$. In general, \CMAES\ was outperformed by \S, as shown in Figures \ref{fig:app_qbranin_improvement} and \ref{fig:app_rosenb_improvement}.\ On Styblinski-Tang (\Figref{fig:app_levy_styb_improvement}), however, it performed well, but no better than \IXA\ or \IXA\QM.

\textbf{I+XA+QM (Proposed)} The uninformative constant mean in \IXA\ is replaced by the quadratic mean from \SQM. In tests where \SQM\ performed comparatively well, e.g.\ Styblinski-Tang (\Figref{fig:app_levy_styb_improvement}), the combination \IXA\QM\ proved to be effective.

\textbf{I+XA+F (Proposed)} \IXA\ uses empirical priors, optimizing the hyperparameters via marginal likelihood.\ However, if additional information about the objective is available, better performance can be achieved.\ For instance, in cases where the center is already a good solution, e.g.\ Levy (\Figref{fig:app_levy_styb_improvement}) and Rosenbrock (\Figref{fig:app_rosenb_improvement}), it can be difficult to improve upon initial conditions.\ In such cases, we can commit to a more exploitative search around \XAnop.\ Based on this intuition, the focused method \IXAF\ uses relatively short fixed lengthscales $\lambda_{0} = 0.1 \sqrt{D}$ to compute distances $d_0$, and a fixed $r_{0} = 0.1$.\ The remaining hyperparameters are still learned from data.\ As future work, several priors can be used in tandem.\ A portfolio can then be managed using strategies similar to those proposed by \citet{BOPERFHoffman11} and \citet{BLOSSOMMcleod18}.

\textbf{I+XA+F+TR (Proposed)} In order to further encourage the search around \XAnop, \IXAF\ can also be combined with trust regions, achieving better performance in certain cases, e.g.\ Levy (\Figref{fig:app_levy_styb_improvement}) and Rosenbrock (\Figref{fig:app_rosenb_improvement}).

\clearpage

\subsection{Results}
\begin{figure}[H]
    \vspace{-2.5ex}
    \centering
    \includegraphics[width=0.9\textwidth]{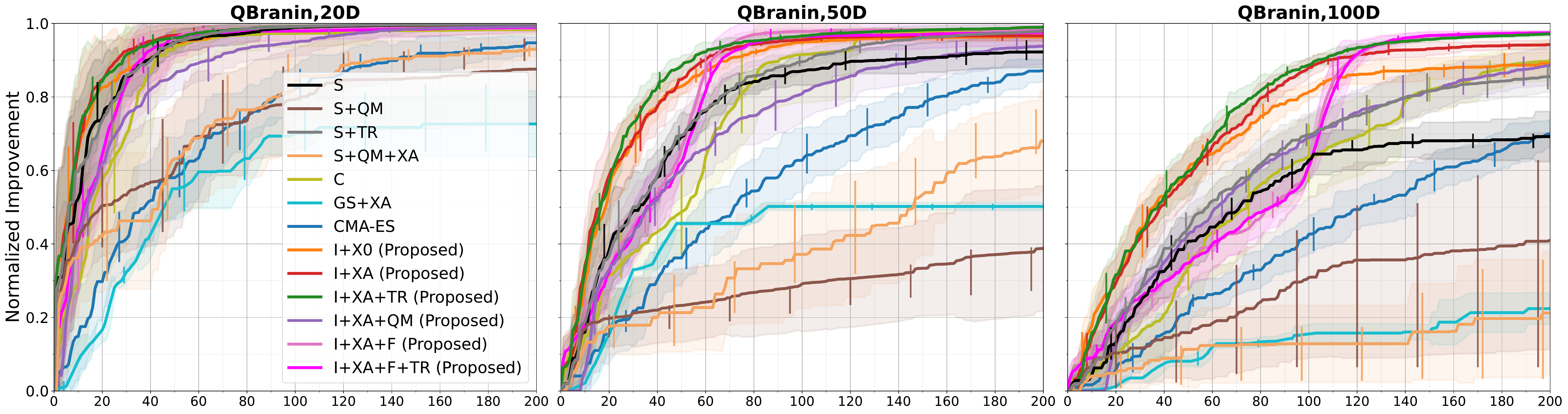}
    \includegraphics[width=0.9\textwidth]{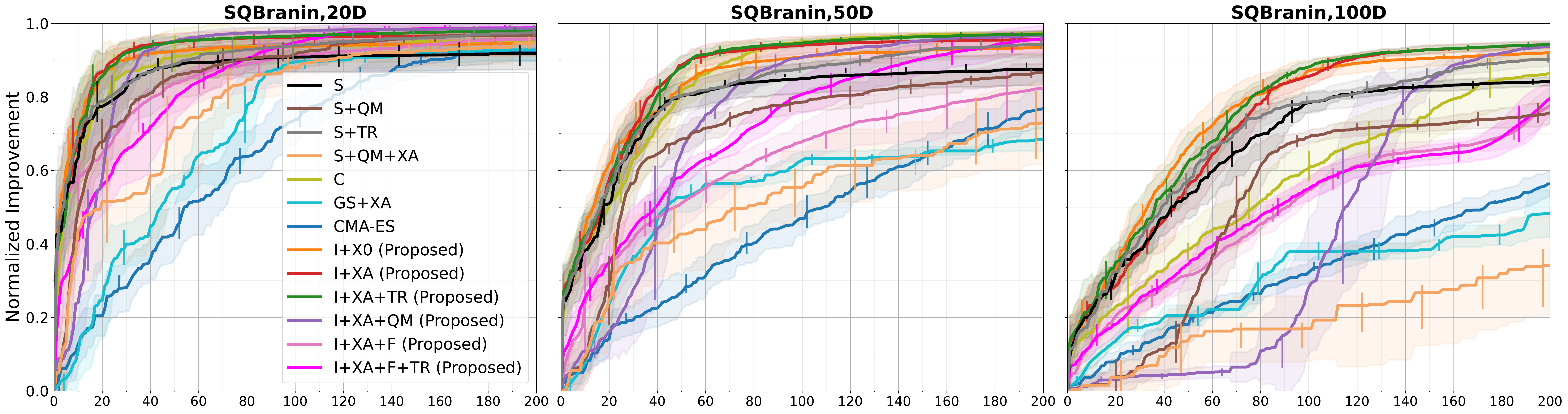}
    \includegraphics[width=0.9\textwidth]{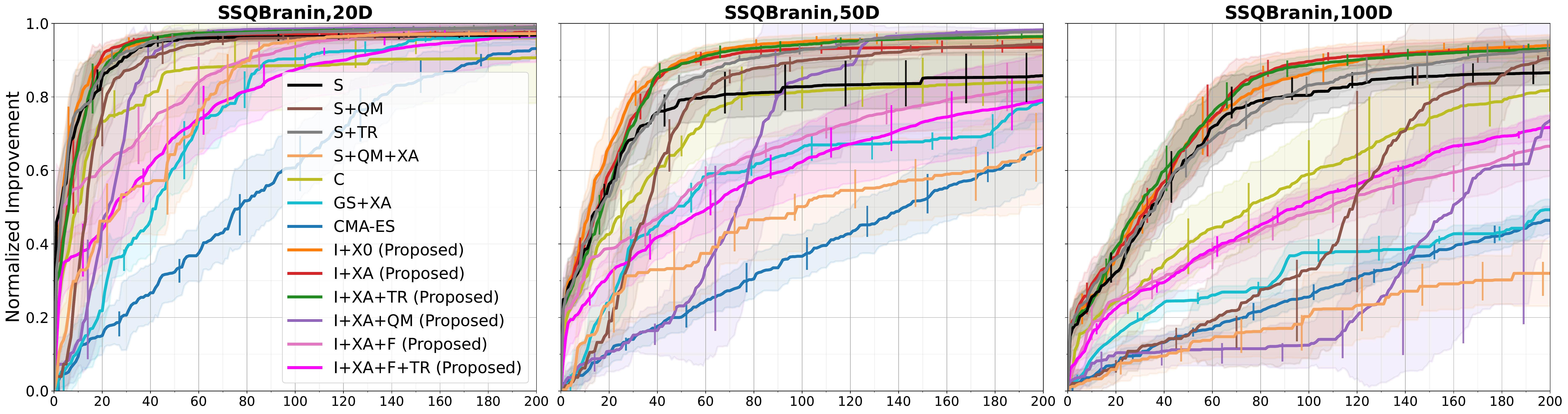}
    \vspace{-2.7ex}
    \caption{Performance on QBranin and variants, ranging from $20$ to $100$ dimensions.\ Solid curves and shaded regions represent the mean and standard deviation of the normalized improvement, computed over $10$ trials with different initial conditions.\ Solid vertical lines indicate the interquartile range.} 
    \label{fig:app_qbranin_improvement}
    \vspace{-4.3ex}
\end{figure}
\begin{figure}[H]
    \centering
    \includegraphics[width=0.9\textwidth]{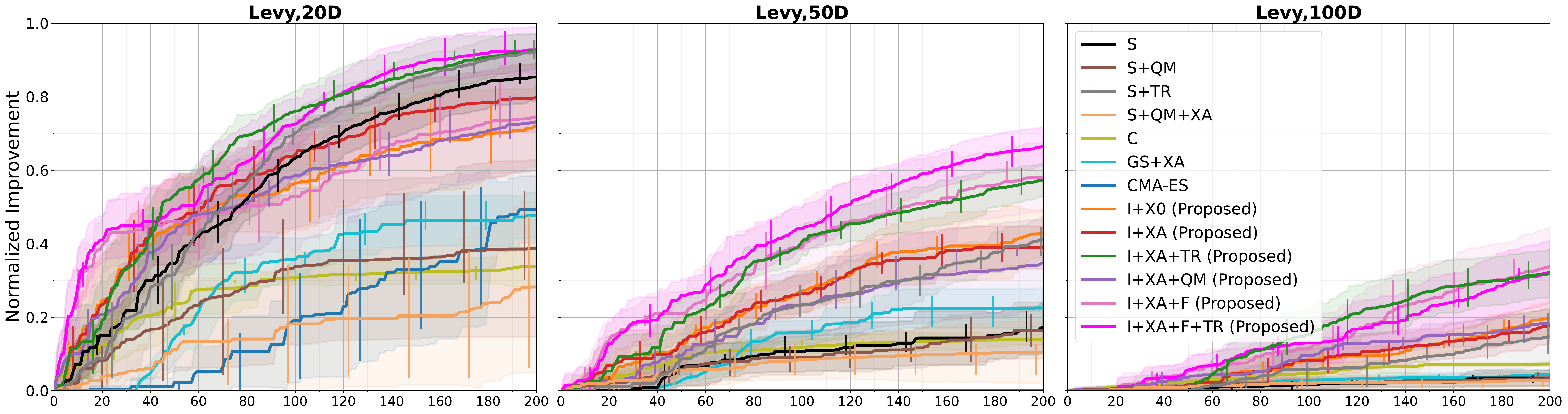}
    \includegraphics[width=0.9\textwidth]{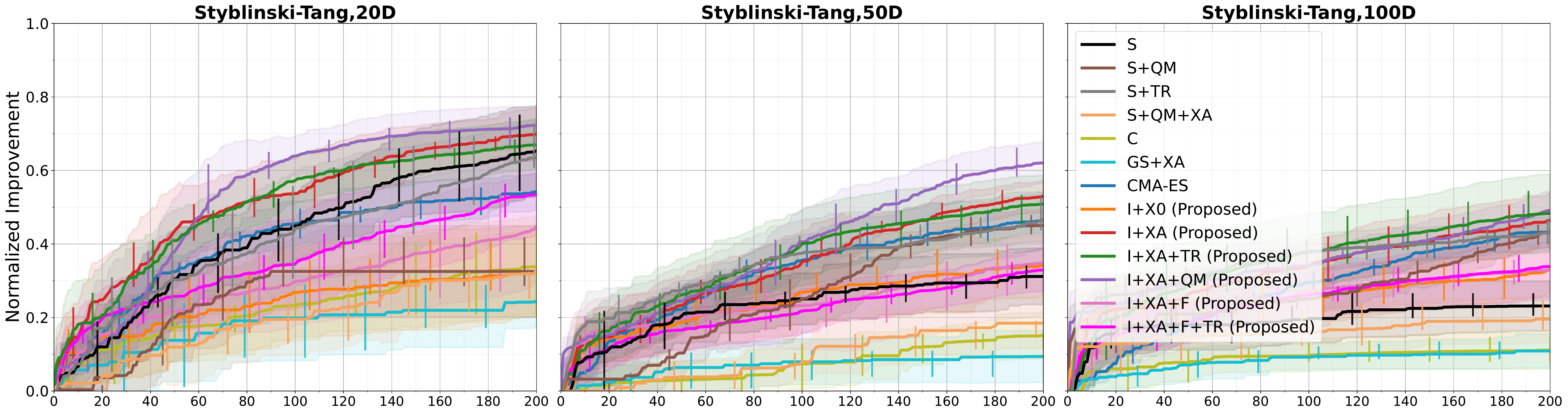}
    \vspace{-2.7ex}
    \caption{Performance on Levy and Styblinski-Tang, ranging from $20$ to $100$ dimensions.\ Solid curves and shaded regions represent the mean and standard deviation of the normalized improvement, computed over $10$ trials with different initial conditions.\ Solid vertical lines indicate the interquartile range.}
    \label{fig:app_levy_styb_improvement} 
\end{figure}
\clearpage

\begin{figure}[H]
    \centering
    \includegraphics[width=\textwidth]{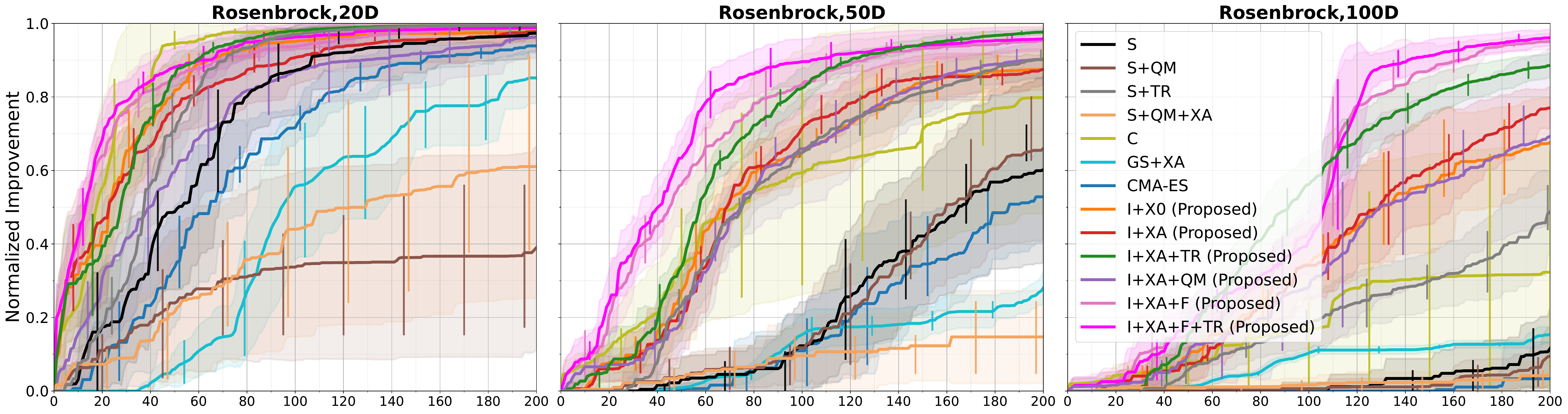}
    \includegraphics[width=\textwidth]{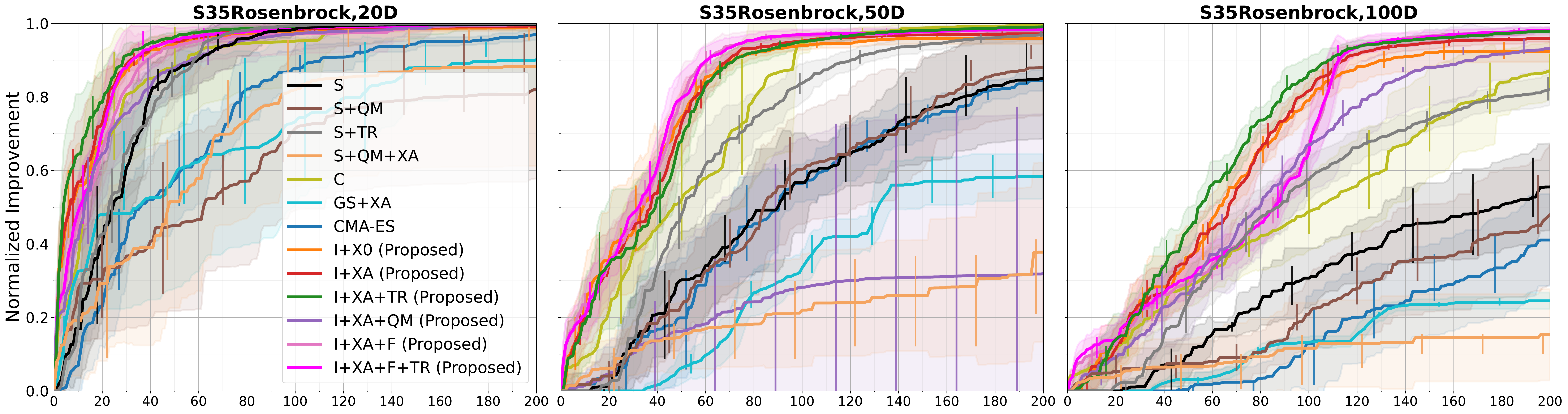}
    \includegraphics[width=\textwidth]{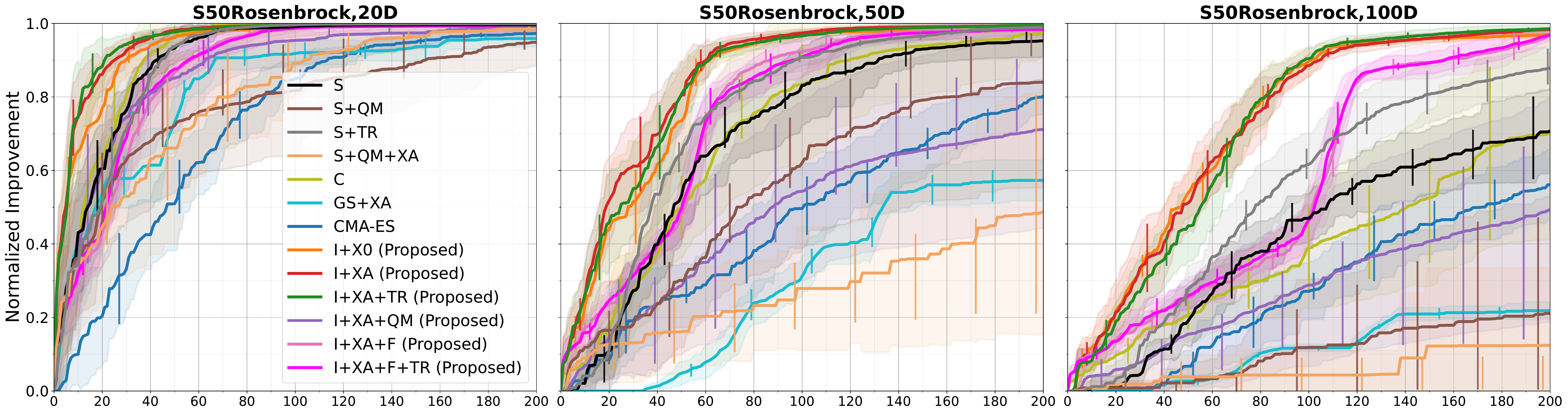}
    \includegraphics[width=\textwidth]{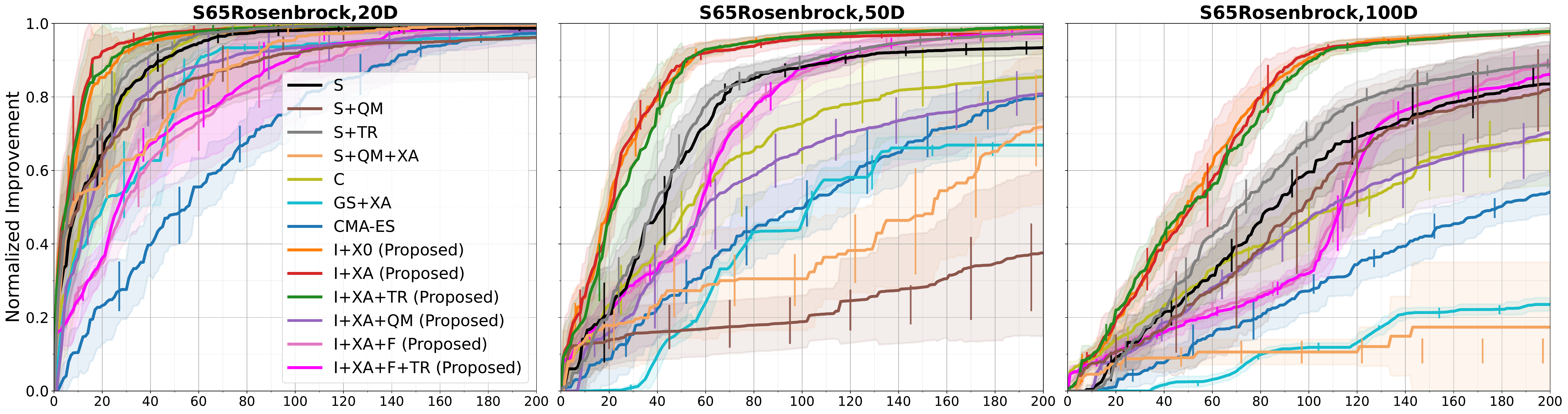}
    \vspace{-5ex}
    \caption{Performance on Rosenbrock objectives with $20$, $50$ and $100$ dimensions.\ Solid curves and shaded regions represent the mean and standard deviation of the normalized improvement, computed over $10$ trials with different initial conditions.\ Solid vertical lines indicate the interquartile range.} 
    \label{fig:app_rosenb_improvement}
\end{figure}
\clearpage

\section{Ablation Study}
\label{app:ablation}
In this section, we investigate the contribution of spatially-varying prior variances (\tttV) and lengthscales (\tttL) toward more efficient high-dimensional optimization.\ For this purpose, we test these spatially-varying properties individually, referring to them as \IV\ and \IL.

Overall, the combination of spatially-varying prior variances and lengthscales is most effective in higher-dimensional spaces ($50$ and $100$), where \IXO\ and \IXA\ generally outperform their \IV\ and \IL\ counterparts.\ In comparison to \S, \IV\ and \IL\ prove to be effective on their own, but \IL\ seems to account for most of the performance increase on Rosenbrock (\Figref{fig:ablation}), where anchors are relatively close to the optimum.\ The shorter lengthscales around anchors promote local exploration, even after having acquired data in their neighborhood.\ Interestingly, when the anchor is misspecified (Styblinski-Tang), the performance of \IL\XO\ is better than that of \IXO\ and similar to that of \S\ in $20$ dimensions.\ However, despite anchor misspecification, we observe the usefulness of spatially-varying prior variances as the dimensionality increases to $100$, likely because these mitigate the boundary issue.\ For adaptive anchors, it is less clear whether the performance of \IXA\ on Styblinski-Tang is mostly due to spatially-varying prior variances or lengthscales. Both seem to be important because \IXA\ consistently outperforms \IV\XA\ and \IL\XA.
\vspace{-1ex}
\begin{figure}[H]
    \centering
    \includegraphics[width=0.77\textwidth]{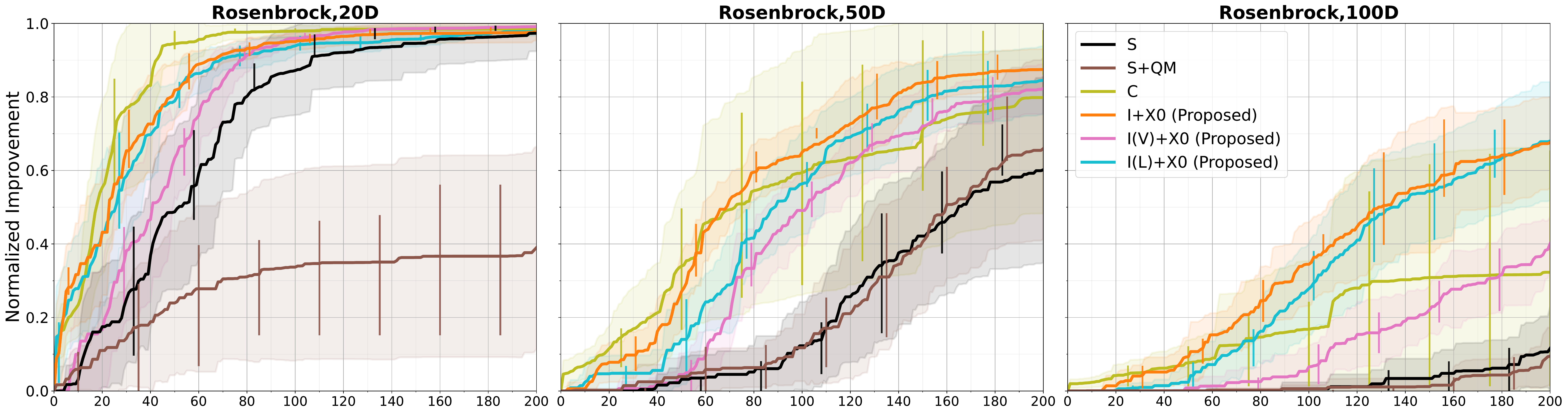}
    \includegraphics[width=0.77\textwidth]{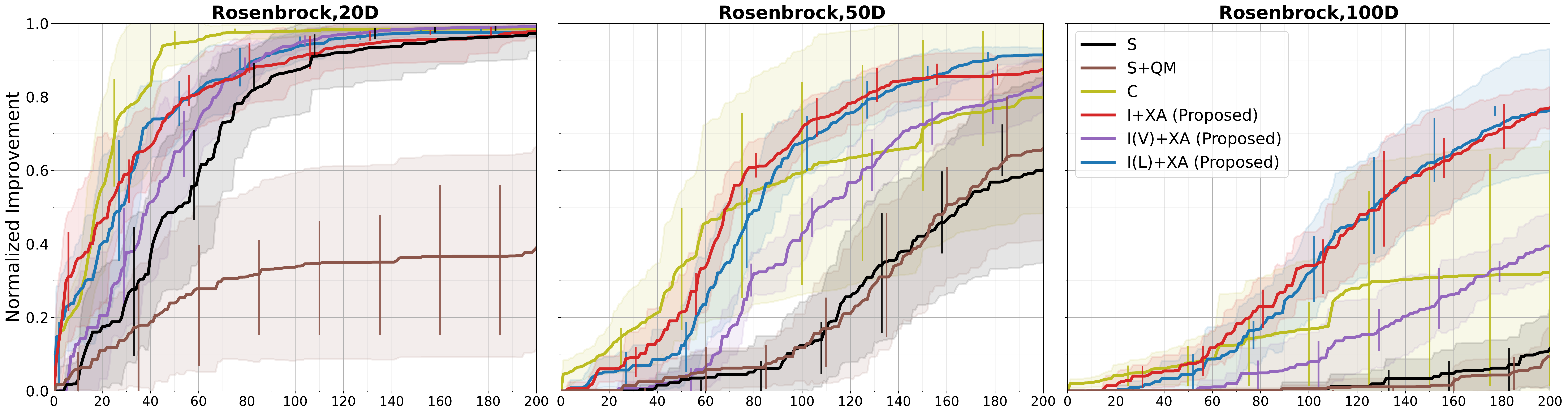}
    \includegraphics[width=0.77\textwidth]{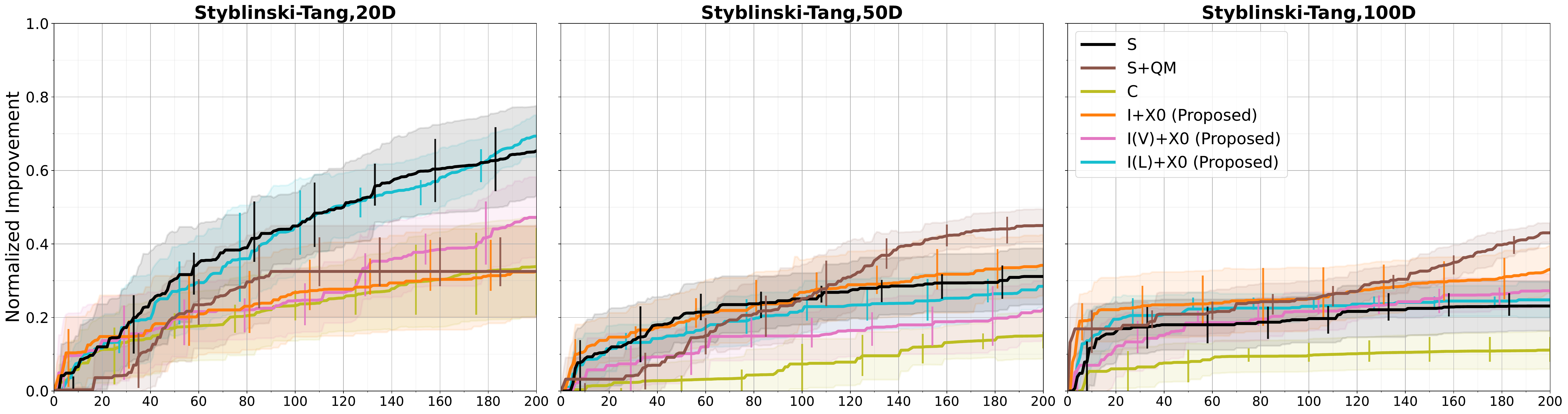}
    \includegraphics[width=0.77\textwidth]{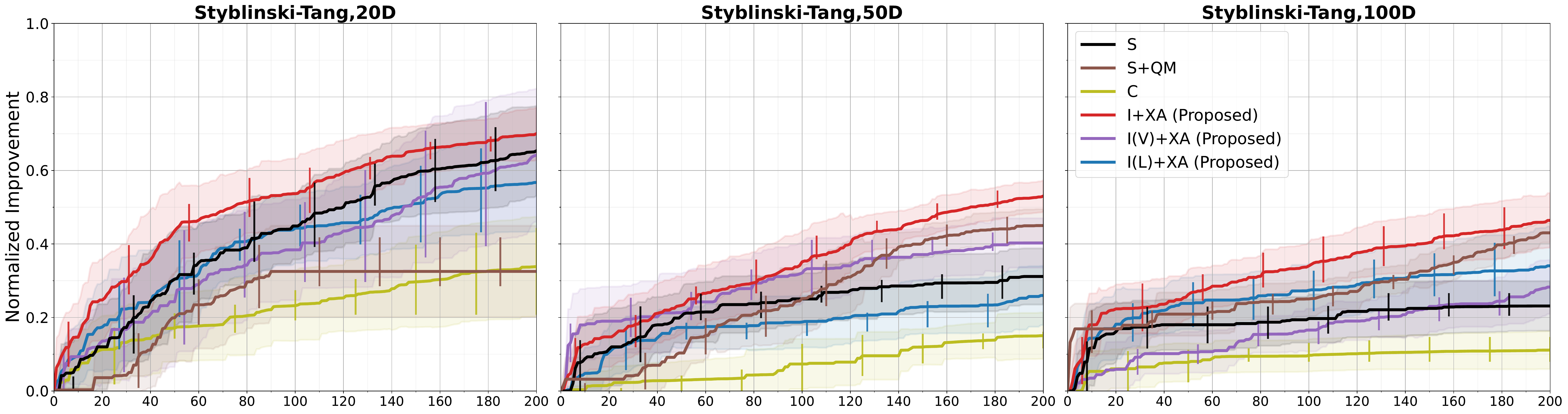}
    \vspace{-2.7ex}
    \caption{Ablation study.\ Solid curves and shaded regions represent the mean and standard deviation of the normalized improvement, computed over $10$ trials with different initial conditions.\ Solid vertical lines indicate the interquartile range. Abbreviations: Stationary (\S), Cylindrical (\C) and Informative (\I) covariances; Quadratic Mean (\QM); Origin (\XO) and Adaptive, greedily-chosen (\XA) anchors; spatially-varying prior Variances (\tttV) and Lengthscales (\tttL).}
    \label{fig:ablation}
\end{figure}
\clearpage

\section{Sensitivity Analysis}
\label{app:sensitivity}
The informative covariance functions used in our experiments introduce a ratio $r_0 \in (0, 1]$, which leads to an uninformative stationary GP prior as it approaches $1$.\ In terms of optimization, this hyperparameter balances global and local exploration under the informative search model (\Eqref{eq:kmix_anchors}) by adjusting the magnitude of spatially-varying prior (co)variances via $r_\sigma$ (\Eqref{eq:prior_covar_single_anchor}) and lengthscales through $r_\lambda$ (\Eqref{eq:inf_inputs_warped}).\ For example, compared to the stationary case, a ratio of $r_0=0.1$ indicates that the prior variance is up to $10\times$ larger and the squared lengthscales are $10\times$ shorter in the neighborhood of anchor $\vx_0$. 

By default, the ratio $r_0$ is learned by empirical Bayes and its prior is an informative Kumaraswamy density function with a peak at $0.1$, given by $\textrm{Kumar}(3.164, 1000)$ and referred to as \Kthreenop.\footnote{The suffix \Kthree\ is omitted in other sections because it is the default prior over $r_0$. As discussed in Appendix~\ref{app:add_res}, the only exception is \IXAF\ that uses $r_0 = 0.1$, corresponding to the Dirac delta (\Dnop) prior that is tested in this section.}\ The Kumaraswamy distribution is related to the beta distribution and is similarly parametrized by two positive shape parameters, $a$ and $b$, with probability density function
\begin{align}
\label{eq:kumar_pdf}
& p_\textrm{Kumar}(r; a, b) = abr^{a-1}(1-r^a)^{b-1}, & r \in (0, 1).
\end{align}
Notably, both $\textrm{Beta}(\cdot; \alpha, \beta)$ and $\textrm{Kumar}(\cdot; a, b)$ are special cases of the generalized ($p, \gamma, \delta$)-beta distribution \citep[Equation 2.1]{KumarJones09}, given by $(1, \alpha, \beta)$ and $(a, 1, b)$, respectively. However, the latter is more tractable because the beta function in the denominator simplifies as $\textrm{B}(1, b) = b^{-1}$. For a list of similarities and other advantages over the beta distribution, see \citep[Section 7]{KumarJones09}.

In the remainder of this section, we examine the performance of \IXA\ under different priors for the hyperparameter $r_0$.\ We tested uniform (\Unop) and Dirac delta (\Dnop) priors, as well as several Kumaraswamy priors, with parameters chosen to yield densities that become increasingly narrower, while maintaining a peak at $0.1$.\ In particular, we tested $\textrm{Kumar}(1.467, 10)$ denoted by \Konenop, $\textrm{Kumar}(2.253, 100)$ denoted by \Ktwonop\ and $\textrm{Kumar}(3.164, 1000)$ denoted by \Kthreenop.\ Among these, $\Konenop$ is the broadest prior, as shown in \Figref{fig:r0_priors}. The use of priors that favor smaller values of $r_0$ is motivated by the belief that the informative search model is useful. In this case, strategies that promote local exploration under this search model are more likely to improve upon initial conditions than those using an uninformative, uniform search model (stationary GP prior).

\begin{figure}[b]
    \vspace{-1ex}
    \centering
    \includegraphics[width=0.4\textwidth]{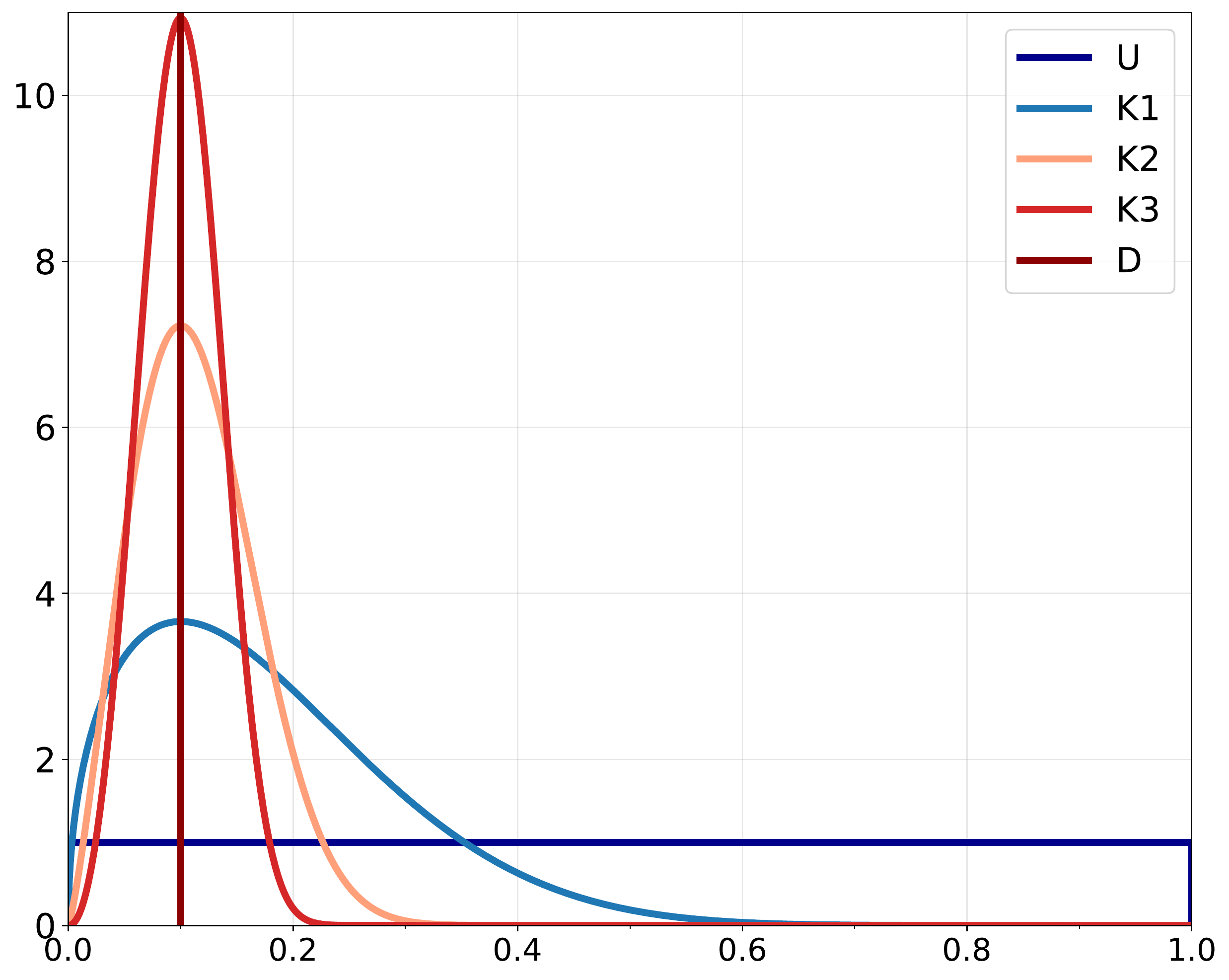}
    \vspace{-3ex}
    \caption{Probability density functions of the priors over $r_0$.}
    \label{fig:r0_priors}
\end{figure}

\Figref{fig:r0_priors_improvement} shows the performance of \IXA\ variants on QBranin, Rosenbrock and Styblinski-Tang.\ We begin by observing that, on QBranin, \IXA\ is relatively robust to the choice of prior over $r_0$ because the performance of all variants is similar and significantly better than that of \S, \SQM\ and \C.\ However, the differences become more noticeable on Rosenbrock and Styblinski-Tang.\ The uniform prior leads to the worst-performing variant, but \IXA\U\ can still outperform the other baselines on Rosenbrock.\ Conversely, narrower priors, such as \Ktwonop\ and \Kthreenop, which promote exploration under the informative search model, lead to the best-performing methods.\ Interestingly, \IXA\ with a delta prior (\IXA\D) can outperform other variants on Rosenbrock.\ This result suggests that learning all parameters via marginal likelihood may not be the best approach.\ Alternative training objectives that also consider optimization performance may provide better results, but this study is left for future work.

\clearpage

\begin{figure}[H]
    \centering
    \includegraphics[width=\textwidth]{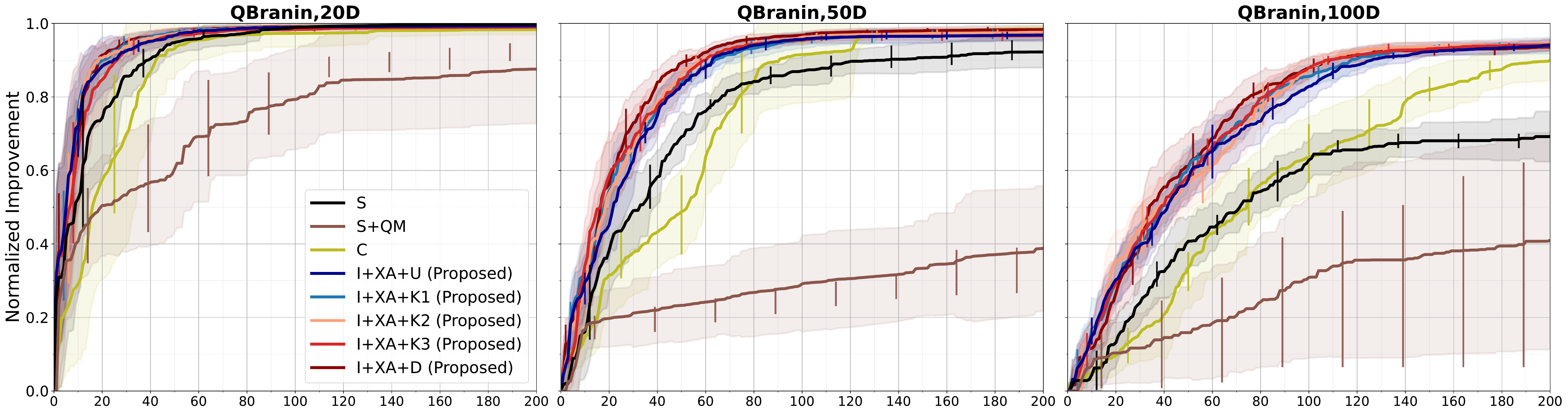}
    \includegraphics[width=\textwidth]{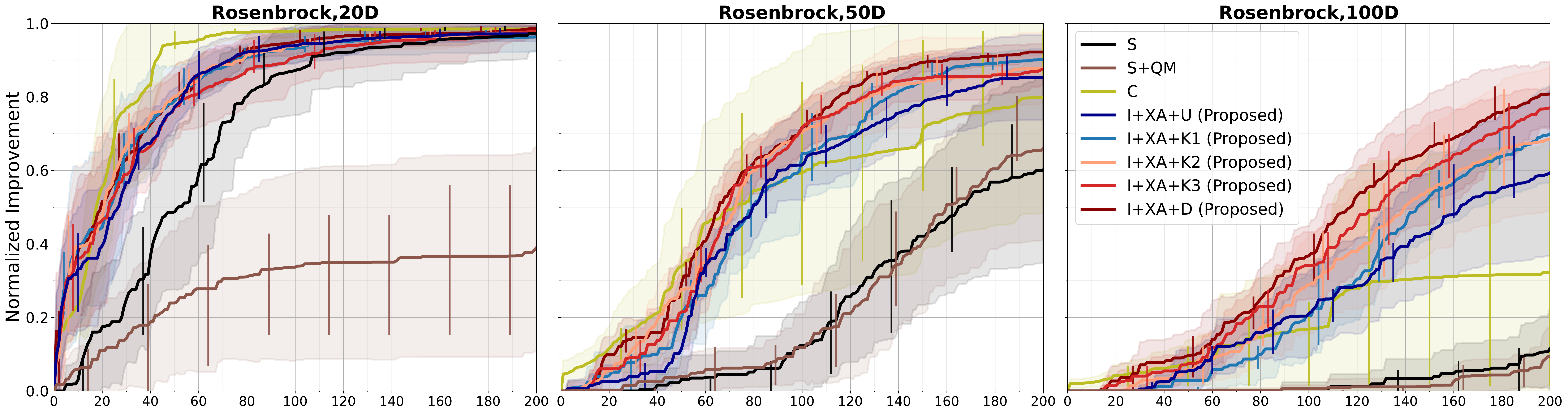}
    \includegraphics[width=\textwidth]{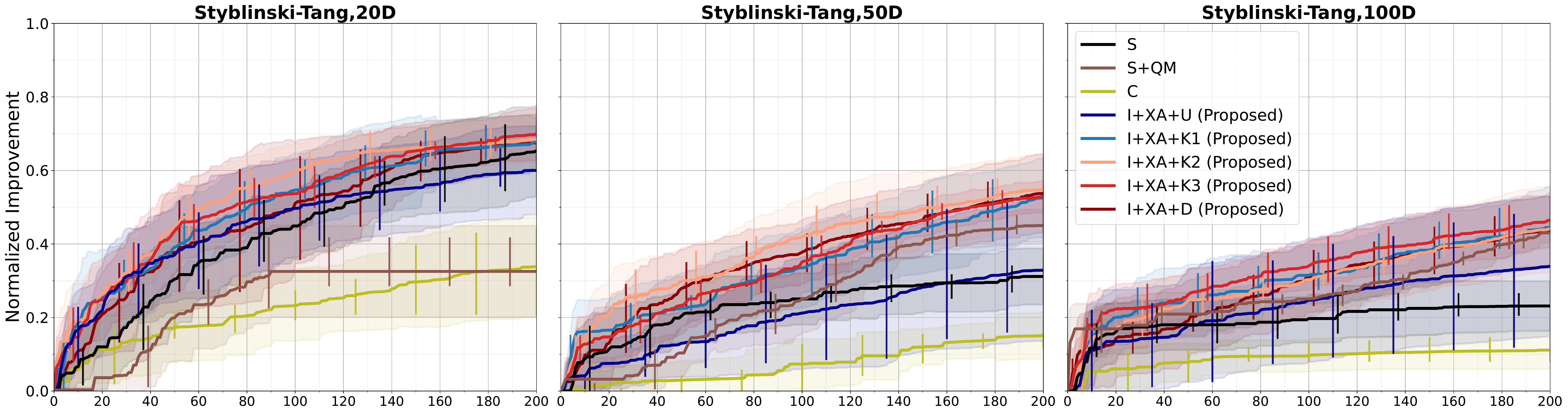}
    \vspace{-5ex}
    \caption{Performance of \IXA\ under different priors over $r_0$. Solid curves and shaded regions represent the mean and standard deviation of the normalized improvement, computed over $10$ trials with different initial conditions. Solid vertical lines indicate the interquartile range. Abbreviations: Stationary (\S), Cylindrical (\C) and Informative (\I) covariances; Quadratic Mean (\QM); Adaptive, greedily-chosen (\XA) anchors; Uniform (\U), Kumaraswamy (\ttt{+K}) and Dirac delta (\D) priors.}
    \label{fig:r0_priors_improvement}
\end{figure}
\clearpage

\section{Rover Trajectory Planning}
\label{app:rover}
In this section, we examine the rover trajectory problem proposed by \citet{BOHDADDWang18}, which has also been adopted by more recent studies \citep{TURBOEriksson19, BOHDLEEriksson21}. In brief, the goal is to optimize a $2$D trajectory such that the rover starts as close as possible to $\vx_\mathrm{start}$ and stops near $\vx_\mathrm{end}$, while avoiding collisions with objects. The trajectory is given by a B-spline that is fitted to $30$ $2$-dimensional points, resulting in a $60$D optimization problem. The original reward function is defined as
\begin{align}
\label{eq:rover_reward}
& f(\vx) = c(\vx)  + \gamma \mpr{\norm{[\vx]_{1, 2} - \vx_\mathrm{start} }_1 + \norm{[\vx]_{59, 60} - \vx_\mathrm{end}}_1} + b, & \vx \in [0, 1]^{60},
\end{align}
where $\gamma=10$, $b=5$ and $c(\vx)$ is a nonpositive function that penalizes collisions.\ For minimization, we turn the reward into a loss function.\ In particular, we take the negative reward and apply a $+b$ shift, yielding a non-negative objective to be minimized.\ Models are again estimated on log-transformed data.\ The domain is also adjusted to the centered hypercube $\sX = [-1, 1]^D$ by applying a linear transformation to the inputs.

\Figref{fig:oversmoothedrovert} shows the original map layout with two overlaid trajectories found by \S\ and \IXA\ (left), as well as the corresponding performance curves (right).\ In this problem, we immediately observe that the trajectories do not pass through the $2$D points.\ This outcome is due to oversmoothed splines, which explains the performance of \S. Intuitively, given the map layout, we would expect points near the boundary to yield penalized trajectories, but this is only true if the trajectories pass through the specified points. Instead, points near the boundary have a significant effect on the oversmoothed trajectories.\ The overexploration of boundaries that is characteristic of \S\ in higher-dimensional problems confers an advantage in this situation. However, we observe that \IXA\ is eventually able to find trajectories of similar cost.

In order to align the problem formulation with our expectations, we modify the procedure that fits B-splines to the collection of $2$D points.\ In terms of implementation, the function call that performs interpolation, \texttt{scipy.interpolate.splprep} \citep{scipy}, is extended with a smoothing factor \texttt{s} that is set to 0, which forces the trajectories to pass through all points.\ An example is shown in \Figref{fig:rovert_layout} (left). Additionally, we clamp all trajectories into the admissible range and increase the cost for deviating from the endpoints, $\gamma=50$.\ The latter promotes unfolded trajectories away from the initial incumbent solution $\vx_\mathrm{best}^{(n_0)} = \bm 0$, as the relative cost of collisions is too steep otherwise.\ Performance curves for all rover tests are shown in \Figref{fig:rovert_all_improvement}.\ Notably, we observe that \S\ is no longer among the best-performing methods because the specification of points near the boundary correctly yields penalized trajectories.

\begin{figure}[H]
    \centering
    \includegraphics[width=0.35\textwidth]{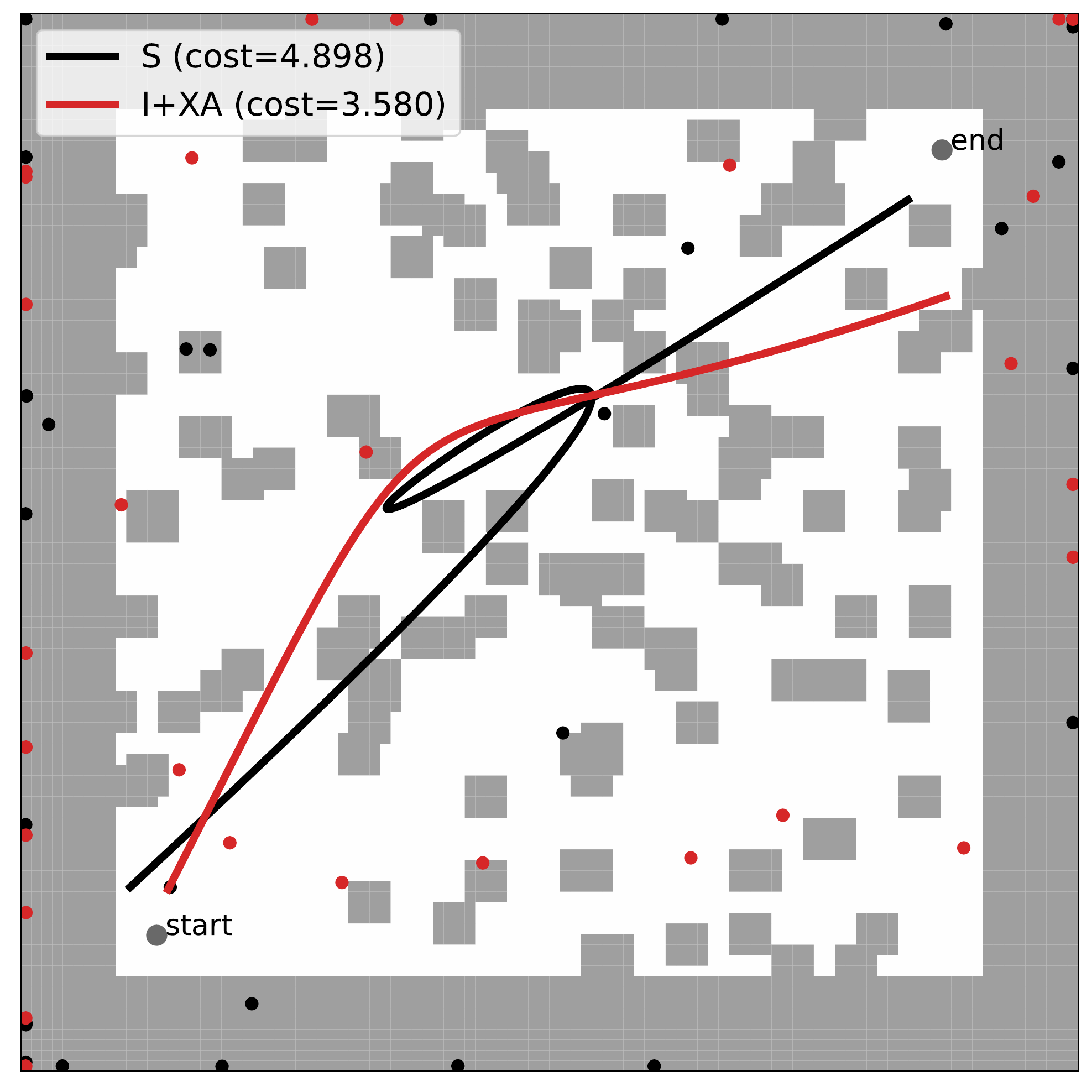}%
    \includegraphics[width=0.65\textwidth]{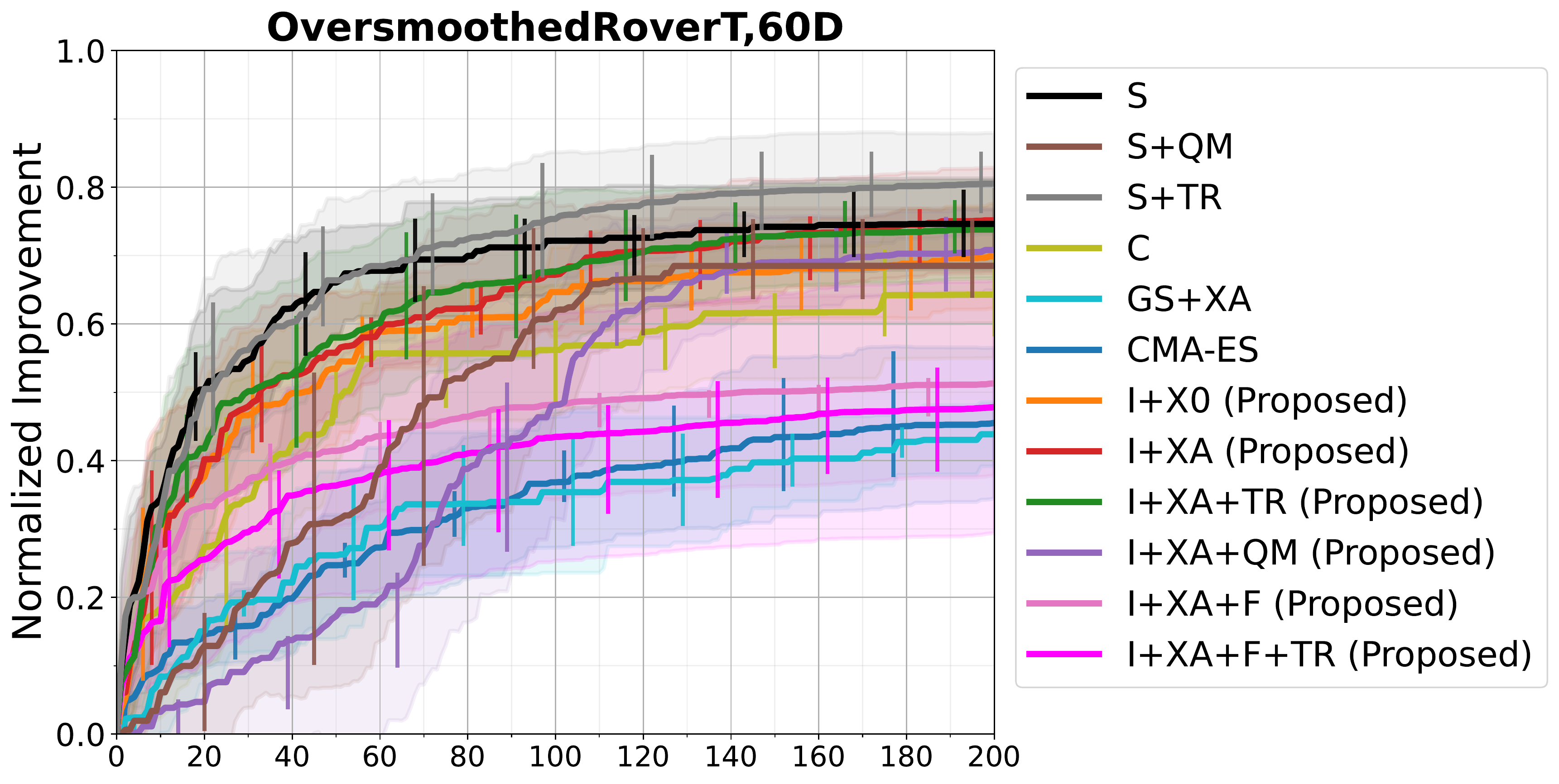}
    \vspace{-5ex}
    \caption{Original map layout with overlaid rover trajectories (left).\ Trajectories are given by a B-spline that is fitted to the $30$ $2$-dimensional points.\ Performance on the original rover trajectory problem (right).\ Solid curves and shaded regions represent the mean and standard deviation of the normalized improvement, computed over $10$ trials with different initial conditions.\ Solid vertical lines indicate the interquartile range.}
    \label{fig:oversmoothedrovert}
\end{figure}
\clearpage

\begin{figure}[t]
    \centering
    \includegraphics[width=0.35\textwidth]{images/rover60_ms0_layout_traj_eff.pdf}%
    \hspace{2ex}%
    \includegraphics[width=0.35\textwidth]{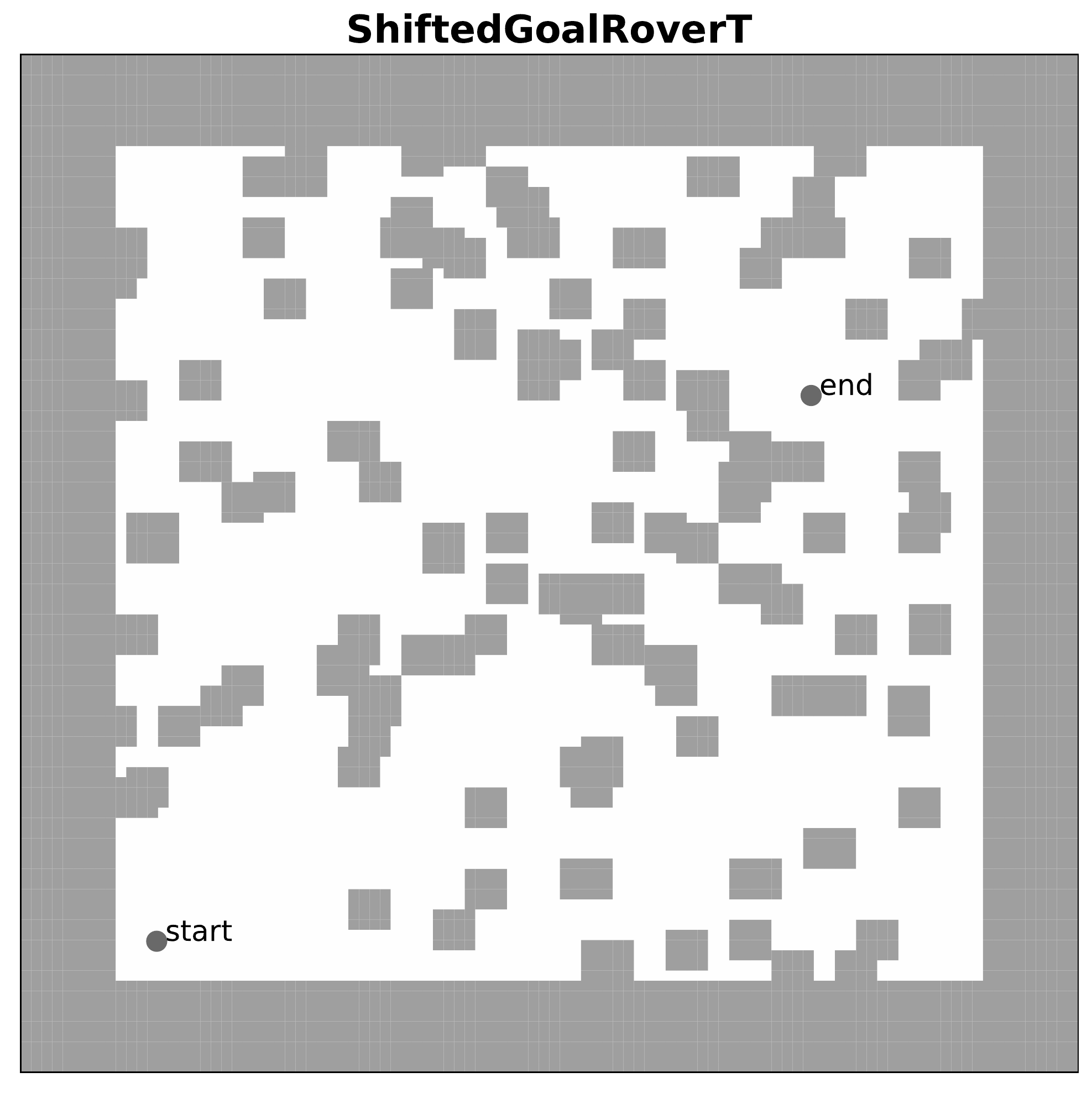}
    \vspace{-3ex}
    \caption{Original map layout with an overlaid rover trajectory (left).\ In RoverT, trajectories must pass through the specified collection of $2$D points.\ The handcrafted trajectory demonstrates that it is not necessary to specify $30$ different points.\ However, the reward function does not penalize less efficient trajectories, which leads to an infinite number of equally-good trajectories.\ As an additional test, we consider the same map layout with a shifted endpoint $\vx_{end}$ (right).}
    \label{fig:rovert_layout}
\end{figure}

\begin{figure}[h]
    \centering
    \includegraphics[width=\textwidth]{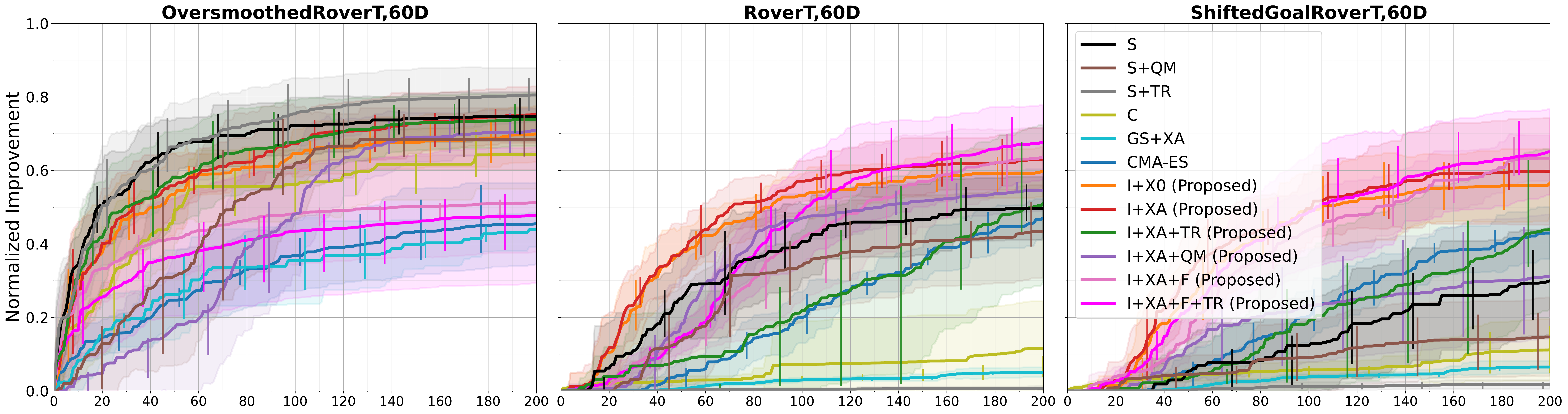}
    \vspace{-5ex}
    \caption{Performance on the rover trajectory problem.\ Solid curves and shaded regions represent the mean and standard deviation of the normalized improvement, computed over $10$ trials with different initial conditions.\ Solid vertical lines indicate the interquartile range.}
    \label{fig:rovert_all_improvement}
\end{figure}

As shown in \Figref{fig:rovert_layout} (left), an optimal trajectory does not require the specification of $30$ different points. However, the reward function does not take into account the distance traveled, which means that the rover is free to roam the environment, provided that it starts at $\vx_\mathrm{start}$ and ends at $\vx_\mathrm{end}$ without colliding with other objects.\ For this reason, trajectories of similar cost can be significantly different and the choice of GP priors becomes even more important.

The importance of GP priors is further demonstrated in Figure \ref{fig:rovert_traj_demo}, which shows an example of the best RoverT solutions found by BO. In particular, the stationary covariance in \S\ (top-left) does not promote exploration around any particular region. In this case, the best trajectories tend to cover the entire space, while trying to avoid collisions.\ In constrast, the quadratic mean prior (top-middle) and the cylindrical covariance function (top-right) promote exploration near the center.\ As quadratic weights can be nearly zero, some points along the trajectory can also be near the boundary. In both cases, the prior dominates to such an extent that the endpoints of the best trajectories are comparatively far from the target endpoints. Notably, \IXO\ (bottom-left) also encourages exploration around the center, but to a lesser degree than \SQM\ and \C, leading to better solutions.\ Finally, \IXA\ (bottom-middle) and \IXAF\ (bottom-right) find the most cost-effective and efficient trajectories because both promote exploration around the incumbent solution, emphasizing the search for fine-tuned trajectories.
\clearpage

\begin{figure}[h]
    \centering
    \includegraphics[width=\textwidth]{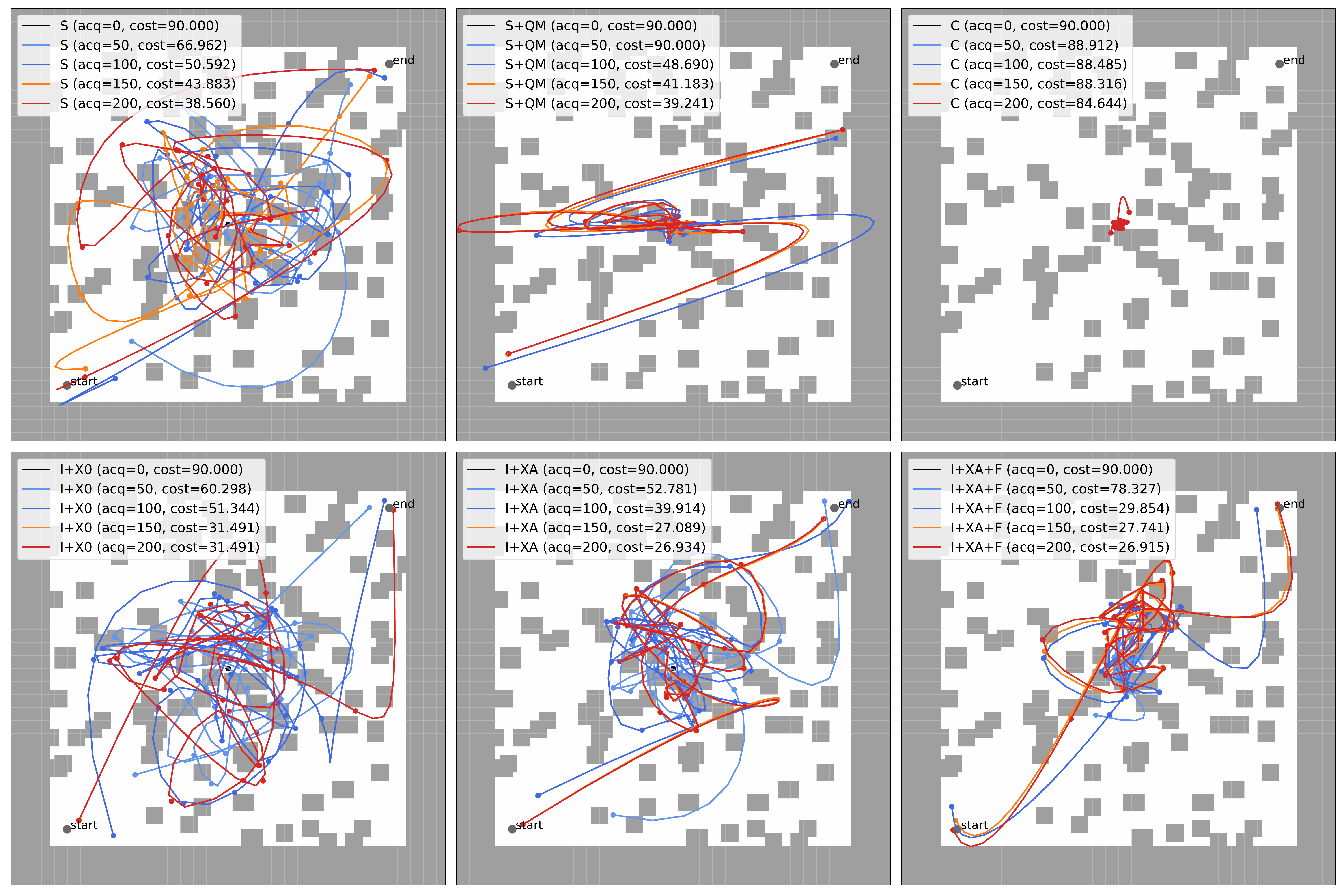}
    \vspace{-5ex}
    \caption{Example of the best rover trajectories found by BO using different GP priors. Top row: baselines \S, \SQM\ and \C. Bottom row: proposed methods \IXO, \IXA\ and \IXAF.}
    \label{fig:rovert_traj_demo}
\end{figure}
\clearpage

\section{Sparse Axis-Aligned Subspaces (SAAS) Hyperprior}
\label{app:saas}

The Sparse Axis-Aligned Subspaces (\ttt{SAAS}) hyperprior, introduced by \citet{BOHDLEEriksson21}, assumes that only a subset of dimensions affects the objective function.\ This sparsity-inducing hyperprior posits that the inverse squared lengthscales are distributed according to a half-Cauchy, $1/\lambda_d^2 \sim \mc{HC}(\tau)$, where $\tau$ is a global shrinkage hyperparameter.\ At each step, multiple GP models are trained by empirical Bayes with $\tau \in \{1, 10^{-1}, 10^{-2}, 10^{-3}\}$.\ The GP model with the highest leave-one-out cross-validation likelihood is then used for acquisition.

We evaluated a modified version of \ttt{S}, which incorporates \ttt{SAAS} and follows the original training procedure described above.\ As shown in Figures~\ref{fig:saas_improvement} and \ref{fig:saas_improvement_2}, \ttt{S+SAAS} generally performs worse than \ttt{S}, indicating that this hyperprior may adversely affect performance when the low effective dimensionality assumption does not hold.\ To further demonstrate the complementary nature of our proposed methodology, we tested \ttt{I+XA+SAAS}.\ This combination proved to be beneficial on QBranin, Rosenbrock and Levy.\ Recall that, in \IXA, the lengthscales also govern the nonstationary effects, and incorporating \ttt{SAAS} allows for longer lengthscales (c.f.\ Table~\ref{tab:inf_settings}).\ On the Styblinski-Tang function, however, this hyperprior continued to negatively impact performance, renewing the question of its suitability when the objective does not exhibit low effective dimensionality.

\begin{figure}[H]
    \centering
    \includegraphics[width=0.95\textwidth]{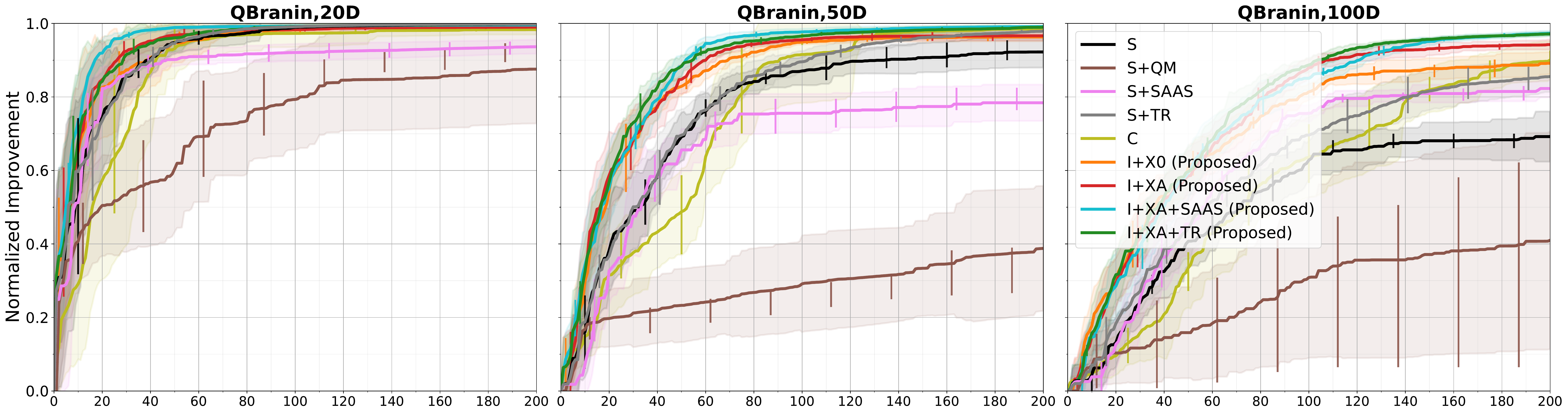}
    \includegraphics[width=0.95\textwidth]{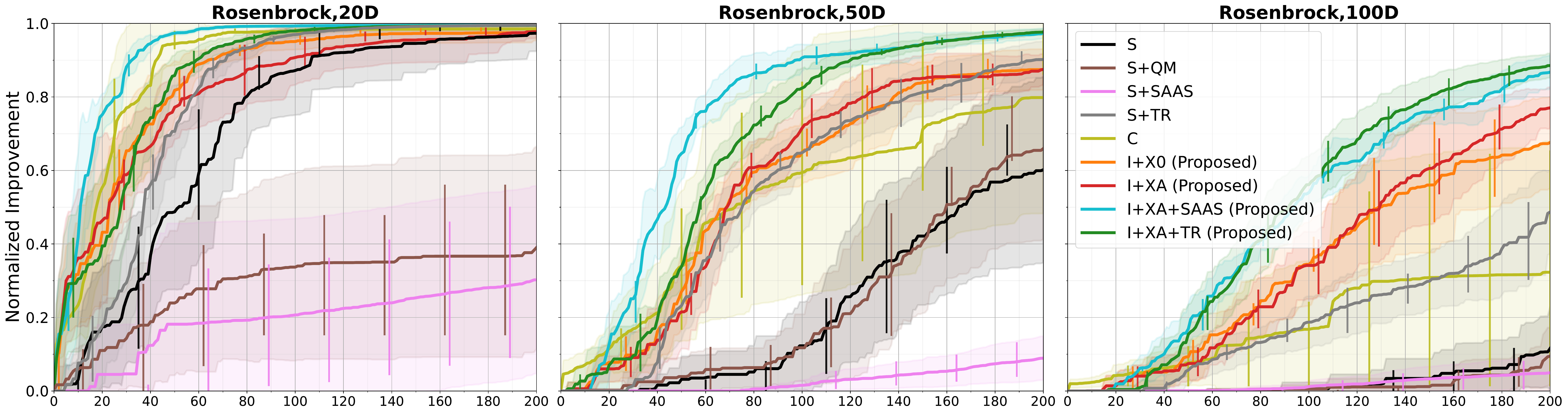}
    \includegraphics[width=0.95\textwidth]{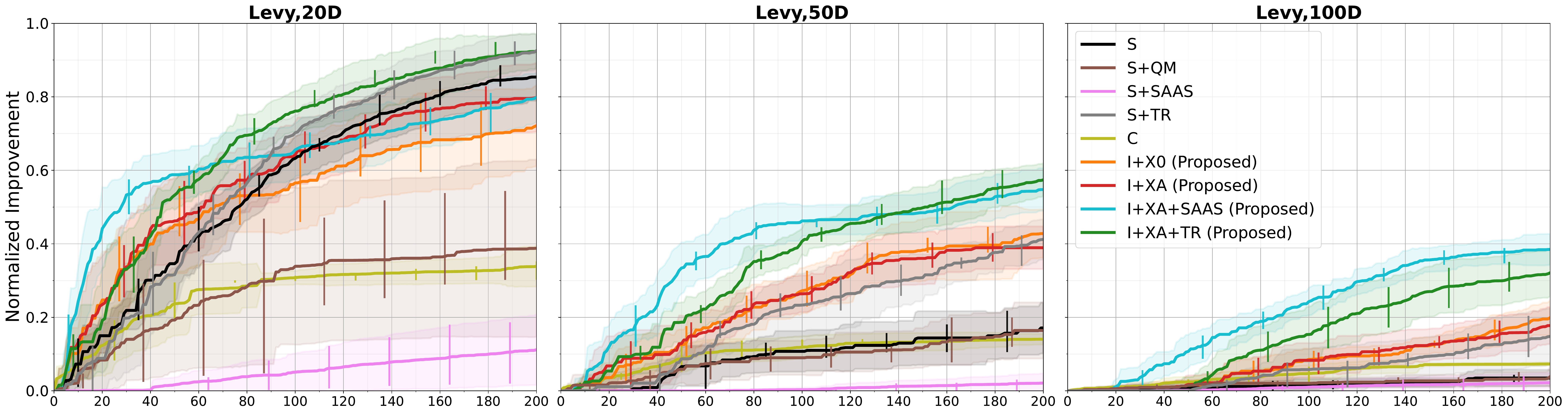}
    \vspace{-2.7ex}
    \caption{Performance on test functions, ranging from $20$ to $100$ dimensions.\ Solid curves and shaded regions represent the mean and standard deviation of the normalized improvement, computed over $10$ trials with different initial conditions.\ Solid vertical lines indicate the interquartile range.\ Abbreviations: Stationary (\S), Cylindrical (\C) and Informative (\I) covariances; Quadratic Mean (\QM); Origin (\XO) and Adaptive (\XA) anchors; acquisitions within Trust Region (\TR); Sparse Axis-Aligned Subspace (\ttt{+SAAS}) hyperprior.} 
    \label{fig:saas_improvement}
\end{figure}
\clearpage

\begin{figure}[H]
    \centering
    \includegraphics[width=0.95\textwidth]{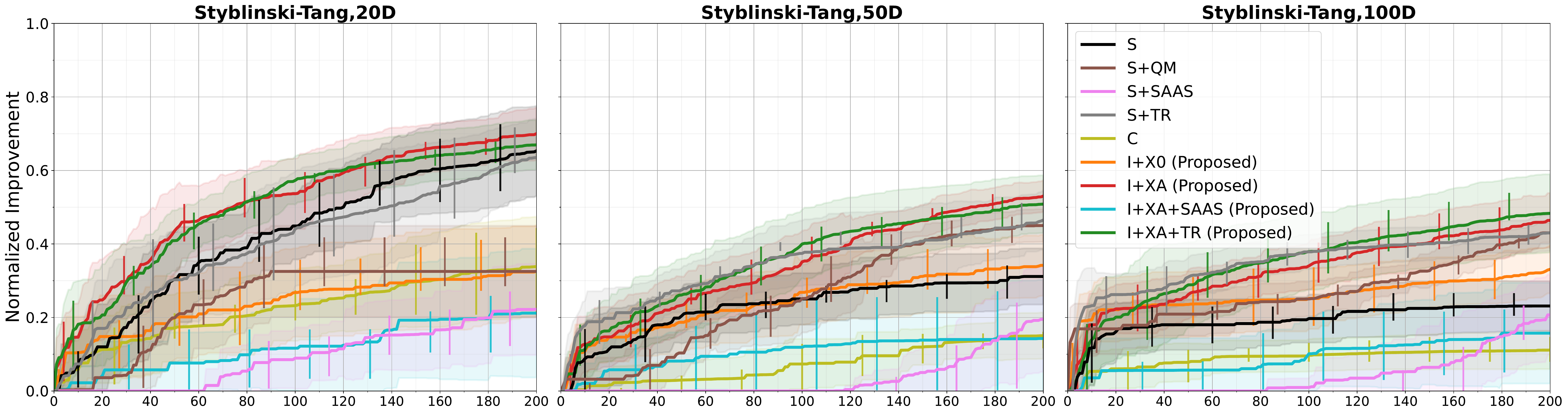}
    \vspace{-2.7ex}
    \caption{Performance on the Styblinski-Tang function with $20$, $50$ and $100$ dimensions.\ Solid curves and shaded regions represent the mean and standard deviation of the normalized improvement, computed over $10$ trials with different initial conditions.\ Solid vertical lines indicate the interquartile range.\ Abbreviations: Stationary (\S), Cylindrical (\C) and Informative (\I) covariances; Quadratic Mean (\QM); Origin (\XO) and Adaptive (\XA) anchors; acquisitions within Trust Region (\TR); Sparse Axis-Aligned Subspace (\ttt{+SAAS}) hyperprior.} 
    \label{fig:saas_improvement_2}
\end{figure}
\clearpage

\section{Belief-Augmented Acquisition Functions}
\label{app:belief_augmented_af}
In order to incorporate beliefs about optimal solutions into BO, \citet{PrOHvarfner22} proposed prior-weighted acquisition functions, which are obtained by multiplying an acquisition function $\alpha$ with a prior distribution over possible locations of the optimum.\ As optimization progresses, the influence of the fixed prior $\pi(\vx)$ decays according to $\alpha_{\pi, n}(\vx) \triangleq \alpha(\vx \mid \mc D_n, \mc M) \pi(\vx)^{\zeta/n}$, where the hyperparameter $\zeta$ controls the decay.\ As more acquisitions are made, the prior approaches a uniform distribution, only tilting the acquisition step initially, when the exponent $\zeta/n$ is not small.\ By default, \citet{PrOHvarfner22} used Gaussian-Weighted EI (\ttt{GWEI}) and $\zeta = N/10$, where $N$ is the evaluation budget.\ The default Gaussian prior is characterized by a diagonal covariance matrix, with standard deviation set to $25\%$ of the domain.

We evaluated two additional baselines, \ttt{S+GWEIX0} and \ttt{S+GWEIXA}, which use Gaussian-Weighted EI with fixed and adaptive locations.\ However, the results on high-dimensional test functions indicate that these baselines perform poorly when compared to \S, as shown in Figures \ref{fig:gwei_improvement} and \ref{fig:gwei_improvement_2}.\ We found that poor performance was also linked to machine precision because EI, whose values are already small, is multiplied by small factors, resulting in an acquisition function that is difficult to optimize in high dimensions.\ In order to improve performance, we removed the normalizing constant in the Gaussian prior, obtaining a Gaussian Kernel-weighted EI (\ttt{GKEI}).\ The method \ttt{I+XA+GKEI} shows that combining informative covariance functions and belief-augmented acquisition functions can be an effective strategy.

\begin{figure}[H]
    \vspace{-1ex}
    \centering
    \includegraphics[width=0.95\textwidth]{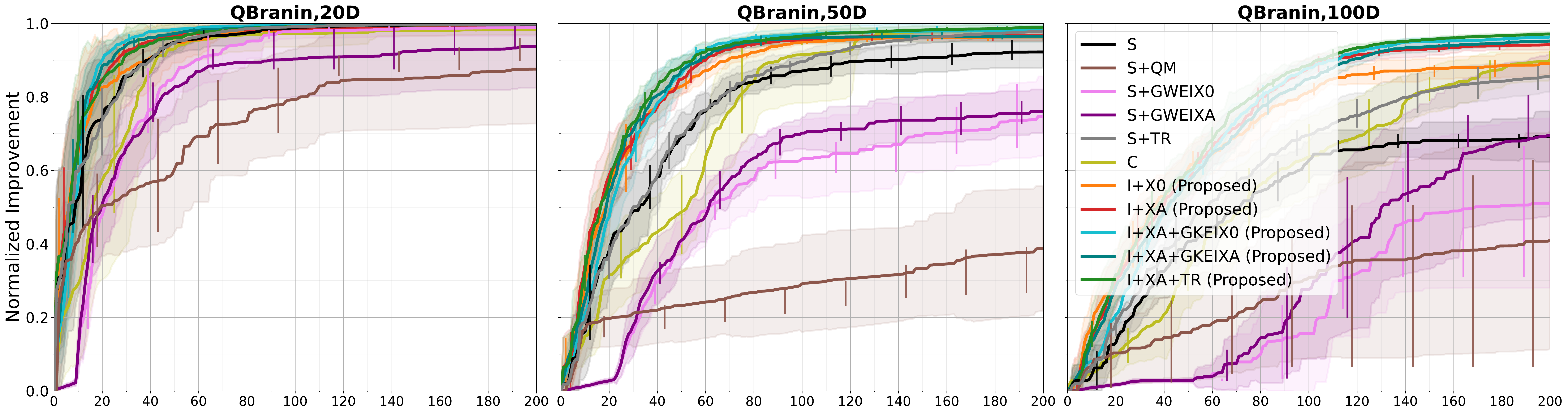}
    \includegraphics[width=0.95\textwidth]{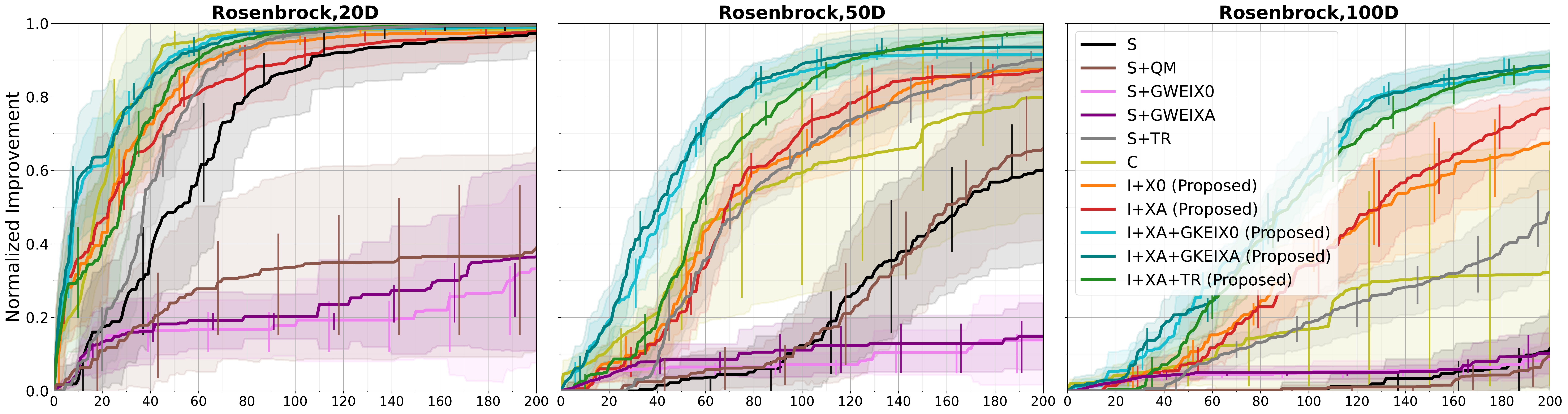}
    \includegraphics[width=0.95\textwidth]{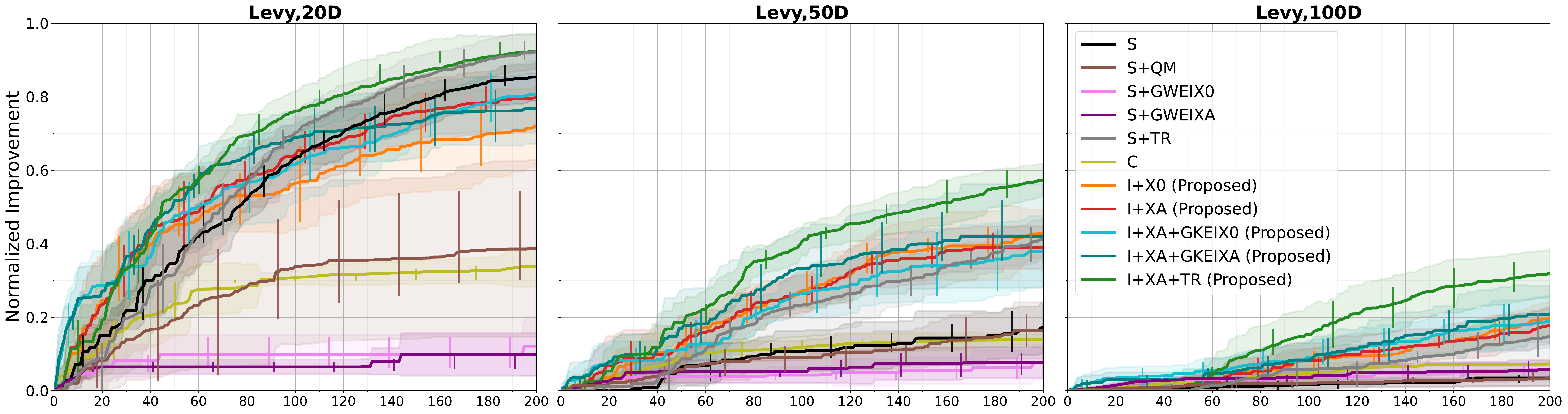}
    \vspace{-2.7ex}
    \caption{Performance on test functions, ranging from $20$ to $100$ dimensions.\ Solid curves and shaded regions represent the mean and standard deviation of the normalized improvement, computed over $10$ trials with different initial conditions.\ Solid vertical lines indicate the interquartile range.\ Abbreviations: Stationary (\S), Cylindrical (\C) and Informative (\I) covariances; Quadratic Mean (\QM); Origin (\XO) and Adaptive (\XA) anchors; acquisitions within Trust Region (\TR); Gaussian-Weighted Expected Improvement with fixed (\ttt{+GWEIX0}) and adaptive (\ttt{+GWEIXA}) location; Gaussian Kernel-weighted Expected Improvement with fixed (\ttt{+GKEIX0}) and adaptive (\ttt{+GKEIXA}) location.} 
    \label{fig:gwei_improvement}
    \vspace{-4.3ex}
\end{figure}
\clearpage

\begin{figure}[H]
    \centering
    \includegraphics[width=0.95\textwidth]{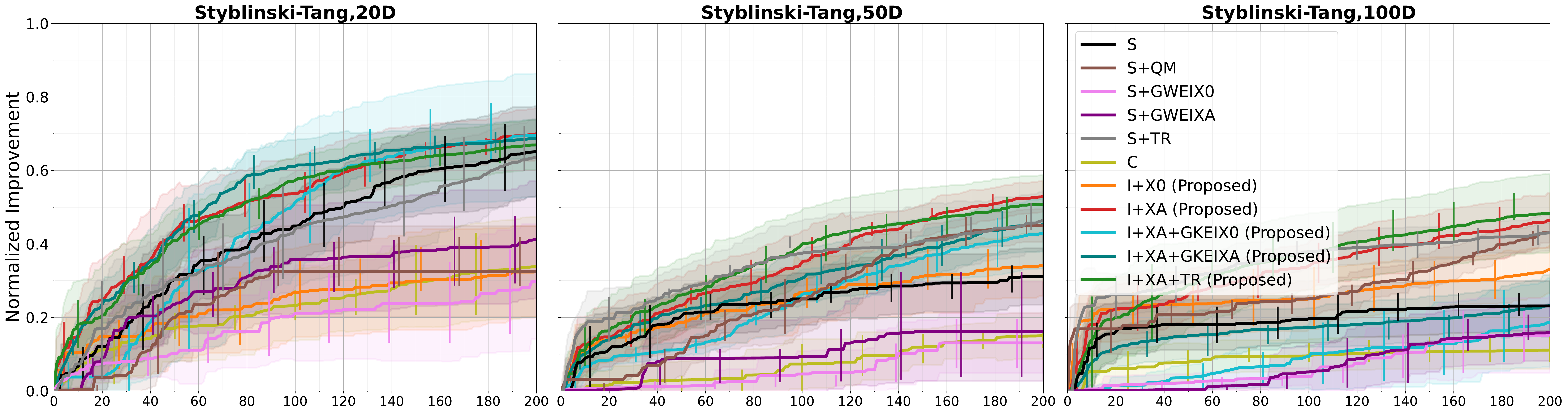}
    \vspace{-2.7ex}
    \caption{Performance on the Styblinski-Tang function with $20$, $50$ and $100$ dimensions.\ Solid curves and shaded regions represent the mean and standard deviation of the normalized improvement, computed over $10$ trials with different initial conditions.\ Solid vertical lines indicate the interquartile range.\ Abbreviations: Stationary (\S), Cylindrical (\C) and Informative (\I) covariances; Quadratic Mean (\QM); Origin (\XO) and Adaptive (\XA) anchors; acquisitions within Trust Region (\TR); Gaussian-Weighted Expected Improvement with fixed (\ttt{+GWEIX0}) and adaptive (\ttt{+GWEIXA}) location; Gaussian Kernel-weighted Expected Improvement with fixed (\ttt{+GKEIX0}) and adaptive (\ttt{+GKEIXA}) location.} 
    \label{fig:gwei_improvement_2}
\end{figure}
\clearpage

\section{High-dimensional Experiments with Larger Evaluation Budgets}
\label{app:hdbo_budget_600}

Thus far, we have considered a budget of $200$ acquisitions.\ As highlighted in  Sections~\ref{sec:bo} and \ref{sec:benefits_ns}, the rationale is that BO is competitive when function evaluations are expensive, putting a practical constraint on the size of the budget, even for high-dimensional problems.\footnote{Based on a similar reasoning, \citet{BOHDLEEriksson21} conducted $100$-dimensional experiments with a maximum budget of $100$ acquisitions.}\ Additionally, we have favored an analysis in terms of sample efficiency, demonstrating that the proposed methodology, which employs informative covariance functions, offers methods that discover superior solutions with fewer function evaluations.

Despite the rationale above, we now increase the evaluation budget to $600$ acquisitions.\ \Figref{fig:hdbo_budget_600} shows the results for two test functions that we consider representative (see Appendix~\ref{app:impl_fn}).\ The Rosenbrock function features a banana-shaped valley, with the global optimum relatively close to the center of the search space.\ Conversely, the Styblinski-Tang is a multimodal function, whose optimal solution is relatively distant from the center (local maximum).\ Once again, we observe that the proposed covariance functions can be combined with other methods, including quadratic mean functions and trust regions, to achieve higher sample efficiency and to discover superior solutions.

As a final note, while our methodology provides scalable methods in terms of dimensionality (e.g., \IXO, \IXA), these still share some of the limitations of standard GPs.\ In particular, when dealing with large training sets, standard GPs become impractical because they require $\gO(n^3)$ computation and $\gO(n^2)$ memory, where $n$ is the number of training points.\ To address this limitation, one option is to use sparse GPs with inducing variables, as proposed by e.g.\ \citet{SGPTitsias09} and \citet{SGPHensman13}.\ This allows for a significant reduction in computational and memory requirements due to low-rank matrix approximations.

\begin{figure}[H]
    \centering
    \includegraphics[width=\textwidth]{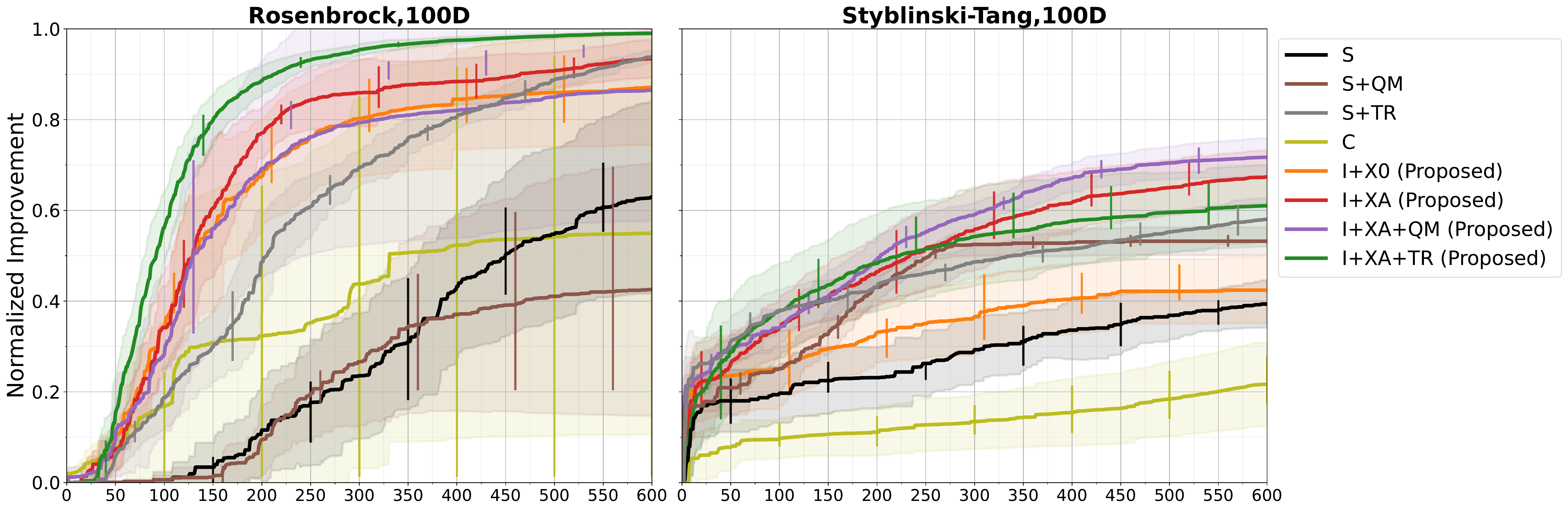}
    \vspace{-5ex}
    \caption{Performance on the $100$-dimensional Rosenbrock and Styblinski-Tang functions.\ A value of $1$ indicates that the global optimum has been found (zero simple regret).\ Solid curves and shaded regions represent the mean and standard deviation of the normalized improvement, computed over $10$ trials with different initial conditions.\ Solid vertical lines indicate the interquartile range.\ Abbreviations: Stationary (\S), Cylindrical (\C) and Informative (\I) covariances; Quadratic Mean (\QM); Origin (\XO) and Adaptive (\XA) anchors; acquisitions within Trust Region (\TR).} 
    \label{fig:hdbo_budget_600}
\end{figure}
\clearpage
\end{document}